\documentclass{article}

\usepackage{microtype}
\usepackage{graphicx}
\usepackage{subcaption}
\usepackage{booktabs} % for professional tables

\usepackage{hyperref}

\usepackage[accepted]{icml2026}

\usepackage{amsmath}
\usepackage{amssymb}
\usepackage{mathtools}
\usepackage{amsthm}

\usepackage[capitalize,noabbrev]{cleveref}

\theoremstyle{plain}
\newtheorem{theorem}{Theorem}[section]
\newtheorem{proposition}[theorem]{Proposition}
\newtheorem{lemma}[theorem]{Lemma}
\newtheorem{corollary}[theorem]{Corollary}
\theoremstyle{definition}
\newtheorem{definition}[theorem]{Definition}

\theoremstyle{remark}

\usepackage{thmtools}
\usepackage{bbm}
\usepackage{colortbl}
\usepackage{array}
\newcommand{\shade}{\cellcolor{gray!20}}
\usepackage{amsmath,amsfonts,bm}

\def\eqref#1{Eq.~\ref{#1}}
\def\1{\bm{1}}

\DeclareMathAlphabet{\mathsfit}{\encodingdefault}{\sfdefault}{m}{sl}
\SetMathAlphabet{\mathsfit}{bold}{\encodingdefault}{\sfdefault}{bx}{n}

\DeclareMathOperator*{\argmax}{arg\,max}

\newcommand{\matrixb}{\left[ \begin{array}}
\newcommand{\matrixe}{\end{array} \right]}

\usepackage{xspace}
\makeatletter
\DeclareRobustCommand\onedot{\futurelet\@let@token\@onedot}
\def\@onedot{\ifx\@let@token.\else.\null\fi\xspace}

\def\eg{\emph{e.g}\onedot} 
\def\ie{\emph{i.e}\onedot}

\makeatother

\renewcommand{\paragraph}[1]{\vspace{1mm}\noindent\textbf{#1}\,\,\,}

\usepackage{enumitem}
\setlist[itemize]{align=parleft,left=0pt}

\def\scheduler{free-form scheduler}

\def\method{LoMDM}
\def\fullmethod{learnable-order masked diffusion model (LoMDM)}
\def\framework{OeMDM}
\def\fullframework{order-expressive masked diffusion model (OeMDM)}

\makeatletter
\newcommand{\pushright}[1]{\ifmeasuring@#1\else\omit\hfill$\displaystyle{#1}$\fi\ignorespaces}
\newcommand{\pushleft}[1]{\ifmeasuring@#1\else\omit$\displaystyle{#1}$\hfill\fi\ignorespaces}
\makeatother
\usepackage{adjustbox}
\usepackage{booktabs}
\usepackage{siunitx}
\usepackage{fvextra}
\usepackage{mfirstuc}
\usepackage{dsfont}

\usepackage[textsize=tiny]{todonotes}

\begin{document}

\twocolumn[
  \icmltitle{Unifying Masked Diffusion Models with Various Generation Orders and Beyond}

  % It is OKAY to include author information, even for blind submissions: the
  % style file will automatically remove it for you unless you've provided
  % the [accepted] option to the icml2026 package.

  % List of affiliations: The first argument should be a (short) identifier you
  % will use later to specify author affiliations Academic affiliations
  % should list Department, University, City, Region, Country Industry
  % affiliations should list Company, City, Region, Country

  % You can specify symbols, otherwise they are numbered in order. Ideally, you
  % should not use this facility. Affiliations will be numbered in order of
  % appearance and this is the preferred way.
  \icmlsetsymbol{equal}{*}

  \begin{icmlauthorlist}
    \icmlauthor{Chunsan Hong}{yyy}
    \icmlauthor{Sanghyun Lee}{yyy,comp}
    \icmlauthor{Jong Chul Ye}{yyy}
  \end{icmlauthorlist}

  \icmlaffiliation{yyy}{Graduate School of AI, KAIST, South Korea}
  \icmlaffiliation{comp}{KRAFTON, Seoul, South Korea}
  
  \icmlcorrespondingauthor{Jong Chul Ye}{jong.ye$@$kaist.ac.kr}
  % \icmlcorrespondingauthor{Firstname2 Lastname2}{first2.last2@www.uk}

  % You may provide any keywords that you find helpful for describing your
  % paper; these are used to populate the "keywords" metadata in the PDF but
  % will not be shown in the document
  \icmlkeywords{Machine Learning, ICML}

  \vskip 0.3in
]

% this must go after the closing bracket ] following \twocolumn[ ...

% This command actually creates the footnote in the first column listing the
% affiliations and the copyright notice. The command takes one argument, which
% is text to display at the start of the footnote. The \icmlEqualContribution
% command is standard text for equal contribution. Remove it (just {}) if you
% do not need this facility.

% Use ONE of the following lines. DO NOT remove the command.
% If you have no special notice, KEEP empty braces:
\printAffiliationsAndNotice{}  % no special notice (required even if empty)
% Or, if applicable, use the standard equal contribution text:
% \printAffiliationsAndNotice{\icmlEqualContribution}

\begin{abstract}
Masked diffusion models (MDMs) are a potential alternative to autoregressive models (ARMs) for language generation, but generation quality depends critically on the generation order. Prior work either hard-codes an ordering (e.g., blockwise left-to-right) or learns an ordering policy for a pretrained MDM, which incurs extra cost and can yield suboptimal solutions due to the two-stage optimization.
Motivated by this, we propose \fullframework{} for a broad class of diffusion generative processes with various generation orders, enabling the interpretation of MDM, ARM, and block diffusion in a single framework.
Furthermore, building on \framework{}, we introduce \fullmethod{}, which jointly learns the generation ordering and diffusion backbone through a single objective from scratch, enabling the diffusion model to generate text in context-dependent ordering. Empirically, we confirm that \method{} outperforms various discrete diffusion models across multiple language modeling benchmarks. Code is available at \href{https://github.com/chunsanHong/LoMDM}{this link}.
\end{abstract}

\section{Introduction}
Diffusion models have recently been extended to language modeling in the form of masked diffusion models (MDMs)~\citep{lou2024discrete,sahoo2024simple}, which are emerging as a potential alternative to autoregressive models (ARMs).
MDMs define a forward process that replaces tokens in \emph{random positions} with [\textsc{MASK}] and learn a corresponding reverse process. 
Training typically minimizes a negative evidence lower bound (NELBO)~\citep{shi2025simplified,sahoo2024simple} to fit the reverse dynamics.

\begin{figure}
    \centering
    \includegraphics[width=\linewidth]{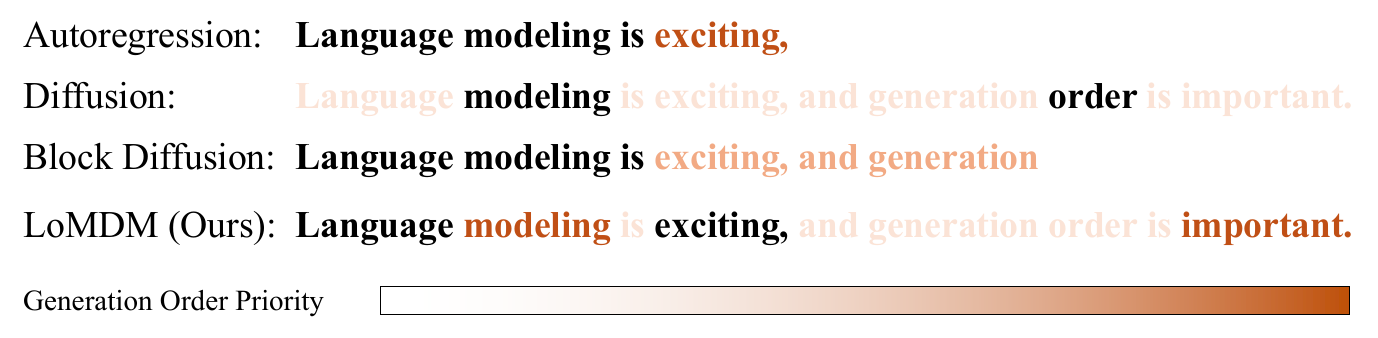}
    \caption{
    Conceptual illustration of \fullmethod{} and other language models.
    Black text denotes already generated tokens, while the colored tokens indicate the generation candidates, with a lower color represents low generation order priority.
    In training time, \method{} jointly learns \textbf{what to generate} and \textbf{where to generate next}, and in inference-time, \method{} selects where to unmask next and predict a token.}
    \label{fig:concept_illustration}
\end{figure}

One of the most actively studied directions in MDMs is the probabilistic modeling design with improved generation orderings beyond purely random orderings.
However, classical MDMs are inherently \emph{order-agnostic}: both the corruption process and the denoising objective are defined over randomly masked subsets, yielding a training signal that does not explicitly favor any particular generation ordering. 
Alternatively, structured orderings are incorporated through hand-designed schedulers or by redesigning the framework for each specific ordering. A representative example is block diffusion (BD3LM)~\citep{arriola2025bd3lm}, which adopts a blockwise left-to-right ordering with random permutations within each block.
GenMD4~\citep{shi2025simplified} learns a token-dependent scheduler, differentiating the noising and denoising ratio by vocabulary.
Both approaches require a modified modeling, which redefines the corruption/reverse processes to accommodate the imposed ordering.

Another line of work focuses on post-training the unmasking position sampler for improved orderings. Given an MDM trained with random ordering, various works~\citep{hong2025improvingdiscretediffusionunmasking,peng2025path,Huang2025,liu2026ni} learn a sampler that determines the unmasking order. 
However, such post-training approaches suffer from two key limitations: they incur additional training costs, and they may converge to suboptimal solutions due to the two-stage optimization between the MDM backbone and the unmasking ordering.

These observations raise two fundamental questions: 1) how to model various generation orders in the masked diffusion framework, and 2) how to learn generation order and the diffusion model jointly in a principled way.

To address this, we introduce \textbf{\fullframework{}} that provides a lens for understanding diverse generation orderings in MDMs with a generalized NELBO. Recall that in classical MDM formulation~\citep{lou2024discrete,sahoo2024simple}, a position-invariant noise schedule makes the forward masking rate uniform across positions, which in turn yields a uniform denoising rate in the reverse process and results in a primarily random ordering in both training and inference. 
In contrast, our \framework{} makes the generation order explicit by treating the \emph{scheduler as a modeling component}, enabling the generation order to be considered in both training and inference. 
Specifically, the key contribution of \framework{} is to show that:
\vspace{-0.2cm}
\begin{itemize}
    \item Training an order-aware diffusion model requires sampling masked sequences from an \emph{order-induced} corruption distribution, rather than using uniform random masking.
    \item \vspace{-0.2cm}The generalized NELBO for \framework{}  can be decomposed into a
    reconstruction loss and a mismatch loss between corruption and unmasking order, yielding a principled mechanism for order-aware learning.
    \item \vspace{-0.2cm}This lens enables systematic analysis of different paradigms (e.g., ARMs, standard MDMs, BD3LM, and GenMD4) under a single formulation.
\end{itemize}
\vspace{0.2cm}

Additionally, our \framework{} framework naturally leads to \textbf{\fullmethod{}}, which considers context-aware generation order.
In contrast to the prior approaches that either hand-design the generation ordering or learn only restricted schedulers, \method{} parameterizes a position-dependent scheduler conditioned on the full sequence so that we can learn full-context aware generation order and the diffusion model jointly by minimizing a single NELBO.
This unified learning provides two benefits: (i) the learned scheduler is directly usable at generation time to decide where to unmask next, improving sample quality; and (ii) training focuses tokens with higher generation priority, so the scheduler simultaneously shapes a more order-aware training signal for the diffusion model.
Empirically, \method{} achieves lower test perplexity than a range of discrete diffusion baselines, including BD3LM and GenMD4.

\section{Background}
\noindent\textbf{Discrete diffusion.}
Discrete diffusion models~\citep{Austin2021,Hoogeboom2021b} have emerged as a competitive paradigm for \emph{discrete} data, e.g., text.
The main objective of discrete diffusion modeling is to model the discrete data distribution via a continuous-time diffusion process.
There are broadly two types of forward corruption processes in discrete diffusion models:
(1) uniform corruption~\citep{lou2024discrete,sahoo2025diffusionduality}, which replaces tokens with uniformly random tokens, and
(2) masking corruption~\citep{sahoo2024simple,shi2025simplified}, which replaces tokens with a special [\textsc{MASK}] token.
Among these, masked diffusion models (MDMs) have emerged as a leading class of discrete diffusion models for text generation~\citep{sahoo2024simple}.
Our work falls into this line of research and aims to improve MDMs in a continuous-time setting.
Further details on related work are provided in Appendix~\ref{appendix:related_works}.

\noindent\textbf{Notation.} 
Let the vocabulary size be $V$ and define the token space
$\mathcal{X}:=\{\mathbf{v}\in[0,1]^{V+1} \mid \sum_{j=1}^{V+1}\mathbf{v}_j=1,\ \mathbf{v}_{V+1}=0\}$,
where each word is represented by a one-hot vector. Let the mask token be $\mathbf{m}:=\mathbf{e}_{V+1}\notin\mathcal{X}$.
Let $\mathrm{Cat}(\cdot;\boldsymbol{\pi})$ denote the categorical distribution over $V\!+\!1$ classes with $\boldsymbol{\pi}\in\Delta^{V+1}$, the $(V+1)$-simplex.
Let a sequence $\mathbf{x}=(\mathbf{x}^{(1)},\ldots,\mathbf{x}^{(L)})\in\mathcal{X}^L$, where $\mathbf{x}^{(i)}$ denotes the $i$-th token in a sequence.
We denote a noised sequence by $\mathbf{z}\in\mathcal{Z}^L$ and, at time $t$, by $\mathbf{z}_t\in\mathcal{Z}_t^L$, where $\mathcal{Z}:=\mathcal{Z}_t:=\mathcal{X}\cup\{\mathbf{m}\}$ (we keep the subscript $t$ to emphasize time).
For vectors $\mathbf{a},\mathbf{b}$, $\langle\mathbf{a},\mathbf{b}\rangle$ denotes the dot product.

\noindent\textbf{Masked diffusion language modeling (MDLM).}
In continuous-time masked diffusion modeling, the most representative work is MDLM~\citep{sahoo2024simple}. MDLM defines the forward corruption process using the absorbing mask strategy: once a token is masked, it remains masked throughout the remaining process.
For the diffusion process, define the time interval as $t\in\mathcal{T}=[0,1]$ where we corrupt the data from $t=0$ (least noisy) to $t=1$ (most noisy).  
Formally, the forward process at time $t$ is given as follows:
\begin{equation}
q(\mathbf{z}_t^{(i)} \mid \mathbf{x})=q(\cdot \mid \mathbf{x}^{(i)}) = \mathrm{Cat}\!\left(\cdot; \alpha_t \mathbf{x}^{(i)} + (1 - \alpha_t)\mathbf{m}\right),\nonumber
\end{equation}
where the forward process gradually adds noise as $t$ grows. In this regard, the noise scheduler should satisfy $\alpha_0\approx1$, $\alpha_1\approx0$, $\alpha_t'<0$, and is typically set to $\alpha_t = 1-t$. Following \citet{sahoo2024simple}, discretize the time interval $\mathcal{T}$ with $T+1$ steps, and define $s(\tau)=\tau/(T+1)$ and $t(\tau)=(\tau+1)/(T+1)$ such that generative distribution is divided into $T$ diffusion reverse steps ($\mathbf{z}_{t(T)}\rightarrow\dots\rightarrow\mathbf{z}_{t(0)}$) and 1 reconstruction step ($\mathbf{z}_{t(0)}\rightarrow\mathbf{x}$). 
Then, the true reverse posterior can be derived as follows:

\begin{align}
&q(\mathbf{z}_s^{(i)} \mid \mathbf{z}_t, \mathbf{x})=q(\mathbf{z}_s^{(i)} \mid \mathbf{z}_t^{(i)}, \mathbf{x}) \nonumber\\
&=
\begin{cases}
\mathrm{Cat}(\mathbf{z}_s^{(i)}; \mathbf{z}_t^{(i)}), & \text{if } \mathbf{z}_t^{(i)} \neq \mathbf{m}, \\[6pt]
\mathrm{Cat}\!\left(\mathbf{z}_s^{(i)}; \tfrac{(1-\alpha_s)\mathbf{m} + (\alpha_s - \alpha_t)\mathbf{x}^{(i)}}{1 - \alpha_t}\right), & \text{if } \mathbf{z}_t^{(i)} = \mathbf{m}.
\end{cases}\nonumber
\end{align}
where we drop $\tau$ in $s(\tau)$ and $t(\tau)$ for brevity.
Hereafter, when the argument of a categorical distribution is clear from context,
we omit it and write $\mathrm{Cat}(\boldsymbol{\pi})$ instead of
$\mathrm{Cat}(\cdot; \boldsymbol{\pi})$.
To mimic the true reverse posterior, \citet{sahoo2024simple} propose a parametrized reverse process as follows:

\begin{align}
&p_{\theta}(\mathbf{z}_s^{(i)} \mid \mathbf{z}_t)
= q\!\left(\mathbf{z}_s^{(i)} \mid \mathbf{z}_t, \mathbf{x} = \mathbf{x}_{\theta}(\mathbf{z}_t, t)\right)\label{eq:MDLM_reverse}\\
&= 
\begin{cases}
\mathrm{Cat}(\mathbf{z}_t^{(i)}), & \text{if } \mathbf{z}_t^{(i)} \neq \mathbf{m}, \\[6pt]
\mathrm{Cat}\!\left(\tfrac{(1-\alpha_s)\mathbf{m} + (\alpha_s - \alpha_t)\mathbf{x}_{\theta}^{(i)}(\mathbf{z}_t, t)}{1 - \alpha_t}\right), & \text{if } \mathbf{z}_t^{(i)} = \mathbf{m}.\nonumber
\end{cases}
\end{align}
where $s<t$ and $\mathbf{x}_{\theta}^{(i)}(\mathbf{z}_t, t)=\mathrm{Softmax}(\theta^{(i)}(\mathbf{z}_t,t)):\mathcal{Z}_t^L\times\mathcal{T}\rightarrow \Delta^{V+1}$ predicts token $\mathbf{x}^{(i)}$.
MDLM models the reverse process as a token-wise conditionally independent distribution, \ie,
$p_\theta(\mathbf{z}_s \mid \mathbf{z}_t)
= \prod_{i=1}^L p_\theta(\mathbf{z}_s^{(i)} \mid \mathbf{z}_t)$. 
In the discrete-time case, the model distribution is expressed as $p_\theta(\mathbf{x})=\sum_{\mathbf{z}_{t(0:T)}} p_\theta(\mathbf{z}_{t(T)})p_\theta(\mathbf{x}|\mathbf{z}_{t(0)})\prod_{\tau=1}^{T} p_\theta(\mathbf{z}_s|\mathbf{z}_t)$.
Finally, with $T\rightarrow\infty$, the NELBO is given as follows:
\begin{align*}
\mathcal{L}_{\mathrm{mdlm}}=\int_{0}^{1}
\mathbb{E}
\Bigl[
\sum_{i=1}^{L}
\langle\mathbf{z}_t^{(i)},\mathbf{m}\rangle
\,\tfrac{\alpha_t'}{1-\alpha_t}
\log\langle \mathbf{x}_\theta^{(i)}(\mathbf{z}_t,t),\mathbf{x}^{(i)}\rangle \Big]dt.
\end{align*}
Note that we intentionally introduced MDLM with a multi-dimensional case for developing our method; yet, all the equations match those of MDLM. In the rest of the paper, we will denote the linear scheduler $\alpha_t=1-t$ as $\alpha_{\mathrm{mdlm}}(t)$.

\section{Unifying Various Orderings in MDMs}

MDLM defines a single shared scheduler $\alpha_{\mathrm{mdlm}}$ that is applied uniformly across all positions in a sequence. 
Therefore, in the model parametrized reverse process (\eqref{eq:MDLM_reverse}), the denoising ratio is all equal as $(\alpha_s-\alpha_t)/(1-\alpha_t)$, so that the fundamental generation order is completely random. 
This implies that \emph{modeling the generation order first requires rethinking the forward noise scheduler itself}.

Based on this observation, we introduce a generalized noise scheduler that allows different amounts of noise to be applied at different positions, thereby allowing different generation priorities across positions. Specifically, we provide a \fullframework{} defined under a generalized NELBO for various noise schedulers, and show how \framework{} can express various generation orders through a specific choice of scheduler.

\subsection{Order-Expressive Masked Diffusion Model
}
We start by defining a scheduler function class that can represent diverse generation orderings in masked diffusion:
\begin{definition}[\scheduler{} function class]
For an arbitrary and fixed input domain $\mathcal{I}$ (e.g., $\mathcal{X}^L$, $\mathcal{Z}_t^L$, or $\emptyset$),
we define the class of \scheduler{}s with $\mathcal{I}$ as
\begin{align*}
\mathcal{F}[\mathcal{I}]
:= \Bigl\{
\alpha : \mathcal{I}\times\mathcal{T}\to[0,1]^L \ \Big|\ 
\forall u\in\mathcal{I},\ \forall i\in[L]:\\
\alpha^{(i)}(u,\cdot)\in AC\big([0,1]\big)\cap C^1\big((0,1]\big),\ 
\alpha^{(i)}(u,0)=1
,\\
\quad \alpha^{(i)}(u,1)=0,\quad
\partial_t \alpha^{(i)}(u,t)<0 ,\forall t\in(0,1]
\Bigr\},
\end{align*}

where $\mathcal{T}=[0,1]$ is the time domain. Any scheduler in $\mathcal{F}[\mathcal{I}]$ is referred to as a \scheduler{} with input domain $\mathcal{I}$ and denoted by $\alpha_{\mathcal{F}[\mathcal{I}]}$\footnote{For brevity, we omit the explicit domain notation $[\mathcal{I}]$ and write $\alpha_{\mathcal{F}}$ when it is clear from context.}.
\end{definition}

The definition is motivated by two objectives: 1) The boundary and regularity conditions are essential in defining the diffusion process and deriving NELBO. 2) An arbitrary input domain allows the scheduler to be \emph{context-aware}. By designing an appropriate scheduler, we can develop an MDM that decides where to unmask next, given the context.

\noindent\textbf{Forward process and true reverse process.} We define the forward process under $\alpha_{\mathcal{F}[\mathcal{I}]}$ as follows:
\begin{equation}
q_{\alpha_{\mathcal{F}}}(\mathbf{z}_t^{(i)} \mid \mathbf{x}) = 
\mathrm{Cat}\!\left(\,
\alpha_{\mathcal{F}}^{(i)}(u,t)\mathbf{x}^{(i)} 
+ (1 - \alpha_{\mathcal{F}}^{(i)}(u,t))\mathbf{m}\right),\nonumber
\end{equation}
where $u\in\mathcal{I}$ and the input domain $\mathcal{I}$ specifies the information available to the scheduler in the forward process.
In particular, the scheduler input for $\alpha_{\mathcal{F}}^{(i)}$ may be chosen from $u \in 
\bigl\{
\,\mathbf{x},\;
\mathbf{x}^{(i)},\;
\emptyset\}$, corresponding to a fully input-dependent scheduler, a coordinate-wise scheduler, or an unconditional (input-agnostic) scheduler, respectively.
Then, the true reverse posterior is given by:

\begin{align*}
&q_{\alpha_{\mathcal{F}}}(\mathbf{z}_s^{(i)} \mid \mathbf{z}_t, \mathbf{x}) =q_{\alpha_{\mathcal{F}}}(\mathbf{z}_s^{(i)} \mid \mathbf{z}_t^{(i)}, \mathbf{x})\\
&=
\begin{cases}
\mathrm{Cat}(\mathbf{z}_t^{(i)}),  
&\text{(I)},
% \hspace{4.2cm}\text{if } \mathbf{z}_t^{(i)} \neq \mathbf{m}, 
\\
\mathrm{Cat}\left(\tfrac{(1-\alpha_{\mathcal{F}}^{(i)}(u,s))\mathbf{m} + (\alpha_{\mathcal{F}}^{(i)}(u,s) - \alpha_{\mathcal{F}}^{(i)}(u,t))\mathbf{x}^{(i)}}{1 - \alpha_{\mathcal{F}}^{(i)}(u,t)}\right),
&\text{(II)},
\end{cases}
%\\ &\hspace{6.5cm} \text{if } \mathbf{z}_t^{(i)} = \mathbf{m}.
\end{align*}
where (I) corresponds to $\mathbf{z}_t^{(i)}\neq\mathbf{m}$ and (II) to $\mathbf{z}_t^{(i)}=\mathbf{m}$.
One can wonder how we can directly obtain $q_{\alpha_{\mathcal{F}}}(\mathbf{z}_s^{(i)}|\mathbf{z}_t, \mathbf{x})$ as above. 
This is because 1) we can just treat $\alpha_{\mathcal{F}}^{(i)}$ as fixed scheduler when $u$ is given, \ie once $u$ is fixed, the map $r\mapsto \alpha_{\mathcal{F}}^{(i)}( u,r)$ is evaluated, and 2) the forward process $q_{\alpha_{\mathcal{F}}}(\mathbf{z}_t^{(i)} \mid \mathbf{x})$ is independent of $\mathbf{z}_t^{(j)}$ for all $j\neq i$. 
Therefore, $q_{\alpha_{\mathcal{F}}}(\mathbf{z}_s^{(i)} \mid \mathbf{z}_t, \mathbf{x})$ has the same structure as in MDLM, and its derivation can be done with the same process.

\noindent\textbf{Model-parametrized reverse process.} Unfortunately, the information available during parametrized reverse-time denoising differs from that of the true forward and reverse posterior. While $\alpha_{\mathcal{F}[\mathcal{I}]}$ may 
depend on inputs such as the full sequence $\mathbf{x}$ or a coordinate $\mathbf{x}^{(i)}$, the reverse 
process has access only to the current state $\mathbf{z}_t$ and the model outputs. Hence, the input domain $\mathcal{I}$ used in the forward construction generally does not match the information available in the denoising process. 
To reflect this mismatch, we introduce a separate input $\hat u \in\hat{\mathcal{I}}$ for model-parameterized reverse process, and let $\hat\alpha_{\mathcal{F}[\mathcal{\hat I}]}\in\mathcal{F}[\mathcal{\hat I}]$ as a scheduler for parametrized reverse process. In particular, $\hat u$ may belong to the following set: $\hat u \in 
\bigl\{
\,\mathbf{z}_t,\;
\mathbf{z}_t^{(i)}, \mathbf{x}_\theta(\mathbf{z}_t,t),\;
\emptyset\}$. Accordingly, we parametrize the reverse transition as follows:
\begin{align*}
&p_{\theta,\hat{\alpha}_{\mathcal{F}}}(\mathbf{z}_s^{(i)} \mid \mathbf{z}_t)\\
&=
\begin{cases}
\mathrm{Cat}(\mathbf{z}_t^{(i)}),  \hspace{5.5cm}\text{(I)}, \\
\mathrm{Cat}\!\left(
  \tfrac{
      (1-\hat{\alpha}_{\mathcal{F}}^{(i)}(\hat u,s))\mathbf{m}
      + (\hat{\alpha}_{\mathcal{F}}^{(i)}(\hat u,s)-\hat{\alpha}_{\mathcal{F}}^{(i)}(\hat u,t))
          \mathbf{x}^{(i)}_{\theta}(\mathbf{z}_t,t)
   }{
      1 - \hat{\alpha}_{\mathcal{F}}^{(i)}(\hat u,t)
   }
\right),
\end{cases}\\
&\hspace{7.7cm}\text{(II),}
\end{align*}
where
$p_{\theta,\hat{\alpha}_{\mathcal{F}}}(\mathbf{z}_s \mid \mathbf{z}_t)
= \prod_{i=1}^L p_{\theta,\hat{\alpha}_{\mathcal{F}}}(\mathbf{z}_s^{(i)} \mid \mathbf{z}_t)$.
We follow SUBS parametrization~\citep{sahoo2024simple} of $\mathbf{x}_\theta$ (detailed in Appendix~\ref{sec:appendix_NELBO}).
To avoid the potential confusion, note that $\hat u$ is restricted to the information available at reverse-time generation/denoising, \emph{e.g.}, it can depend on the current state $\mathbf{z}_t$ but \emph{not} on future/unknown states such as $\mathbf{z}_s$ for $s<t$. 
Equivalently, we view $\hat{\alpha}_{\mathcal{F}}^{(i)}(\hat u,\cdot)$ as a time function \emph{selected} (or ``instantiated'') by conditioning on $\hat u$ at time $t$; once $\hat u$ is fixed, the map $r\mapsto \hat{\alpha}_{\mathcal{F}}^{(i)}(\hat u,r)$ is evaluated at $r=s$ or $r=t$ without introducing any additional dependence on $\mathbf{z}_s$. Note that this is a specific choice of parametrization that gives simple and intuitive NELBO; yet other parameterizations are also possible. 

\noindent\textbf{Reinterpretation of reverse process with velocity.}
While it is evident that the scheduler affects the generation order, how it does so in an intuitive, operational sense is somewhat unclear. To make this explicit, we define $A:\mathcal{I}\times(0,1]\rightarrow\mathbb{R}_+^L$ and $\hat{A}:\hat{\mathcal{I}}\times(0,1]\rightarrow\mathbb{R}_+^L$ as follows:
\begin{align*}
    A(u,t):=-\partial_t\alpha_{\mathcal{F}[\mathcal{I}]}(u,t)\oslash(1-\alpha_{\mathcal{F}[\mathcal{I}]}(u,t)),\\\hat{A}(\hat u,t):=-\partial_t\hat{\alpha}_{\mathcal{F}[\hat{\mathcal{I}]}}(\hat u,t)\oslash(1-\hat{\alpha}_{\mathcal{F}[\hat{\mathcal{I}]}}(\hat u,t)),
\end{align*}
where $\oslash$ refers to element-wise division. We will refer to $A$ as a \textbf{\textit{velocity}} of discrete diffusion throughout the rest of the paper. Then, the true reverse posterior and denoising process in infinitesimal $dt$ can be rewritten as follows:
\begin{align*}
    &q_{\alpha_{\mathcal{F}}}(\mathbf{z}_{t-dt}^{(i)} \mid \mathbf{z}_t,\mathbf{z}_t^{(i)}=\mathbf{m}, \mathbf{x})\\
    &=\text{Cat}\Big((1-A^{(i)}(u,t)dt)\mathbf{m} +A^{(i)}(u,t)dt\cdot\mathbf{x}^{(i)}\Big),\\
    &p_{\theta,\hat\alpha_{\mathcal{F}}}(\mathbf{z}_{t-dt}^{(i)} \mid \mathbf{z}_t,\mathbf{z}_t^{(i)}=\mathbf{m})\\
    &=\text{Cat}\Big((1-\hat A^{(i)}(\hat u,t)dt)\mathbf{m} +\hat A^{(i)}(\hat u,t)dt\cdot\mathbf{x}^{(i)}_\theta(\mathbf{z}_t,t))\Big),
\end{align*}
where $o(dt)$ is omitted in both terms. It is now clear how $\alpha_{\mathcal{F}}$ and $\hat{\alpha}_{\mathcal{F}}$ influence the generation order. By appropriately designing the scheduler, we can assign different velocities across indices, \ie, we can determine which indices are prioritized and denoised earlier. Equipped with all ingredients above, we provide the corresponding NELBO of \framework{}:
% \clearpage
\begin{restatable}[NELBO of \framework{} in continuous time]{proposition}{NELBO}
\label{NELBO}
Under SUBS parametrization, the NELBO of \framework{} in continuous time is given as follows:
\begin{align}
&-\log p_{\theta,\hat{\alpha}_{\mathcal{F}}}(\mathbf{x})
\le\mathcal{L}_{\mathrm{\framework{}}}(\mathbf{x},\theta,\alpha_{\mathcal{F}},\hat\alpha_{\mathcal{F}})\nonumber\\
&=\int_{0}^{1}
\mathbb{E}_{q_{\alpha_{\mathcal{F}}}}
\Biggl[
\sum_{i=1}^{L}
\langle\mathbf{z}_t^{(i)},\mathbf{m}\rangle
\Biggl\{
\underbrace{-A^{(i)}
\log\langle \mathbf{x}_\theta^{(i)}(\mathbf{z}_t,t),\mathbf{x}^{(i)}\rangle}_{\mathcal{L}_{\mathrm{main}}}\nonumber\\
&+\underbrace{A^{(i)}(\log A^{(i)}-\log\hat A^{(i)})-(A^{(i)}-\hat A^{(i)})}_{\mathcal{L}_{\mathrm{velocity}}}\Biggl\}
\Biggr] dt,\label{eq:NELBO}
\end{align}
where the structure of $\mathcal{L}_{\mathrm{main}}$ is equal to $\mathcal{L}_{\mathrm{mdlm}}$ and $\mathcal{L}_{\mathrm{velocity}}\ge0$ achieves 0 when $A=\hat{A}$. 
\end{restatable}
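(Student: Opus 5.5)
I would follow the MDLM template. I would first prove a discrete-time bound with $T$ reverse steps and then pass to the limit $T\to\infty$. By Jensen's inequality applied to the latent-variable representation $p_{\theta,\hat\alpha_{\mathcal{F}}}(\mathbf{x})=\sum_{\mathbf{z}_{t(0:T)}}p(\mathbf{z}_{t(T)})p(\mathbf{x}\mid\mathbf{z}_{t(0)})\prod_\tau p(\mathbf{z}_s\mid\mathbf{z}_t)$, with the forward process $q_{\alpha_{\mathcal{F}}}$ as the variational distribution, we get the standard decomposition
\[
-\log p\le \mathcal{L}_{\mathrm{prior}}+\mathcal{L}_{\mathrm{recon}}+\sum_{\tau=1}^{T}\mathbb{E}_{q}\,\mathrm{KL}\bigl(q_{\alpha_{\mathcal{F}}}(\mathbf{z}_s\mid\mathbf{z}_t,\mathbf{x})\,\|\,p_{\theta,\hat\alpha_{\mathcal{F}}}(\mathbf{z}_s\mid\mathbf{z}_t)\bigr).
\]
This needs only the Markov property of the forward process once $u$ is fixed, which was already argued when deriving the true posterior. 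The boundary conditions $\alpha^{(i)}(u,1)=\hat\alpha^{(i)}(\hat u,1)=0$ and $\alpha^{(i)}(u,0)=1$ make the prior term vanish, since both the forward marginal and $p(\mathbf{z}_1)$ are all-mask. They also make the reconstruction term vanish as $t(0)\to0$, by the SUBS carry-over property.

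Both the posterior and the model factorize over $i$, so the KL splits into a sum over coordinates. The carry-over rule (I) makes the KL zero when $\mathbf{z}_t^{(i)}\neq\mathbf{m}$, which produces the factor $\langle\mathbf{z}_t^{(i)},\mathbf{m}\rangle$. On the masked coordinates I would compare the two categoricals in (II). Both put mass only on $\mathbf{m}$ and on tokens. Write $a=\frac{\alpha^{(i)}(u,s)-\alpha^{(i)}(u,t)}{1-\alpha^{(i)}(u,t)}$ for the true unmask probability, placed on $\mathbf{x}^{(i)}$, and $\hat a$ for the analogous model quantity, which multiplies $\mathbf{x}_\theta^{(i)}$. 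Then
\[
\mathrm{KL}=(1-a)\log\frac{1-a}{1-\hat a}+a\log\frac{a}{\hat a\,\langle\mathbf{x}_\theta^{(i)},\mathbf{x}^{(i)}\rangle}.
\]
This expression is exact. It contains $-a\log\langle\mathbf{x}^{(i)}_\theta,\mathbf{x}^{(i)}\rangle$ plus a Bernoulli KL between rates $a$ and $\hat a$.

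Finally, I would let $\Delta t=t-s=1/(T+1)\to0$. By the $C^1$ regularity on $(0,1]$, $a=A^{(i)}(u,t)\Delta t+o(\Delta t)$ and $\hat a=\hat A^{(i)}(\hat u,t)\Delta t+o(\Delta t)$. Expanding the Bernoulli KL gives $(1-a)\log\frac{1-a}{1-\hat a}\approx-(A-\hat A)\Delta t$ and $a\log(a/\hat a)\approx A(\log A-\log\hat A)\Delta t$. The sum over $\tau$ becomes a Riemann sum that converges to the integral in \eqref{eq:NELBO}, with $\mathbb{E}_q$ taken over $\mathbf{z}_t\sim q_{\alpha_{\mathcal{F}}}(\cdot\mid\mathbf{x})$. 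Nonnegativity of $\mathcal{L}_{\mathrm{velocity}}$, with equality iff $A=\hat A$, follows from $f(r)=r\log r-r+1\ge0$ applied with $r=A/\hat A$, after multiplying by $\hat A$.

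I expect the main obstacle to be justifying the limit exchange. Near $t=0$ the velocity $A$ can blow up, since $1-\alpha\to0$ there, and $\hat u$ depends on $\mathbf{z}_t$, so $\hat A$ is random. I would handle this by noting that $\langle\mathbf{z}_t^{(i)},\mathbf{m}\rangle$ has probability $1-\alpha^{(i)}(u,t)$. This cancels the singular denominator in the expectation and leaves an integrand of order $-\partial_t\alpha$, which is integrable by absolute continuity. I would then apply dominated convergence on $[\epsilon,1]$ and let $\epsilon\to0$, as in MDLM.
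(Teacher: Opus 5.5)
Your outline follows the paper's own route: Jensen with $q_{\alpha_{\mathcal{F}}}$ as the variational path measure, vanishing prior and reconstruction terms, a coordinatewise split of the KL, the exact two-term KL on masked coordinates, a first-order expansion in $\Delta=1/(T+1)$, and $r\log r-r+1\ge0$ for the sign of $\mathcal{L}_{\mathrm{velocity}}$. The gap is in the step you yourself single out as the main obstacle. The factor $q_{\alpha_{\mathcal{F}}}(\mathbf{z}_t^{(i)}=\mathbf{m}\mid\mathbf{x})=1-\alpha^{(i)}(u,t)$ cancels only the singular denominator of $A^{(i)}$. It does nothing for $\hat A^{(i)}$, whose blow-up is governed by $1-\hat\alpha^{(i)}(\hat u,t)$. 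With $r=A^{(i)}/\hat A^{(i)}$ one has exactly
\[
\bigl(1-\alpha^{(i)}\bigr)\,\mathcal{L}_{\mathrm{velocity}}=\bigl(-\partial_t\alpha^{(i)}\bigr)\Bigl(\log r-1+\tfrac{1}{r}\Bigr),
\]
so the integrand is of order $-\partial_t\alpha^{(i)}$ only if $r$ stays in a compact subset of $(0,\infty)$. Nothing in $\mathcal{F}[\mathcal{I}]$ forces this. Take $\alpha^{(i)}(t)=1-t$ and $\hat\alpha^{(i)}(t)=1-e^{1-1/t}$ (with $\hat\alpha^{(i)}(0)=1$); both are admissible. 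They give $A^{(i)}=1/t$ and $\hat A^{(i)}=1/t^2$, and then $(1-\alpha^{(i)})\mathcal{L}_{\mathrm{velocity}}=\log t-1+1/t$, which is not integrable at $0$. So your domination holds for $\mathcal{L}_{\mathrm{main}}$, which is bounded by $\varepsilon_\theta(-\partial_t\alpha^{(i)})$ as in the paper. It is false for $\mathcal{L}_{\mathrm{velocity}}$, so the dominated-convergence and $\epsilon\to0$ exchange does not go through as written.

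The missing ingredient is a hypothesis tying $\hat A$ to $A$, and this is exactly what the paper adds in Proposition~\ref{prop:finiteness}. Suppose $A^{(i)}/\hat A^{(i)}\in[k^{-1},k]$ for all $t\in(0,1]$, all $i$, and all values of $\hat u$. Then the display is at most $M(-\partial_t\alpha^{(i)})$ with $M=\sup_{r\in[k^{-1},k]}(\log r-1+1/r)$, whose integral over $[0,1]$ is $M$. The same hypothesis also gives the uniform-in-$T$ smallness of the discrete steps with $t(\tau)\le\epsilon$ that your argument needs. Write $1-a=e^{-\Lambda}$ and $1-\hat a=e^{-\hat\Lambda}$, with $\Lambda=\int_s^tA^{(i)}$ and $\hat\Lambda=\int_s^t\hat A^{(i)}$. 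One checks that the Bernoulli part of your exact KL is at most $(k-1+\log k)\,a$. Since $(1-\alpha^{(i)}(t))\,a=\alpha^{(i)}(s)-\alpha^{(i)}(t)$, this sum telescopes, and the $-a\log\langle\mathbf{x}_\theta^{(i)},\mathbf{x}^{(i)}\rangle$ part telescopes the same way. Without such a condition the inequality remains true only vacuously, because the right-hand side may equal $+\infty$. Your claim that it is always a finite integral, and the limit exchange built on that claim, is not justified.
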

 
In $\mathcal{L}_{\mathrm{\framework{}}}$,  
since the forward process is defined by $q_{\alpha_{\mathcal{F}}}$, the expectations in the objective are taken with respect to $q_{\alpha_{\mathcal{F}}}$; in other words, training requires \emph{order-aware} sampling of masked sequences.
The NELBO decomposes into a diffusion-model reconstruction loss and a mismatch loss between the true posterior velocity and the parametrized reverse process velocity.
Specifically,  the reconstruction loss for each token ($\mathcal{L}_{\mathrm{main}}$ in \eqref{eq:NELBO}) is weighted by its velocity. 
Furthermore, $\mathcal{L}_{\mathrm{velocity}}$ quantifies the gap between the unmasking order and the forward noise process.
Consequently, when the corruption and unmasking orders are well aligned through $A$ and $\hat A$, yielding a small $\mathcal{L}_{\mathrm{velocity}}$, this encourages parameter updates of the diffusion model $\theta$ that focus $\mathcal{L}_{\mathrm{main}}$ on tokens with higher generation-order priority.
The complete derivation and its finiteness condition can be found in Appendix~\ref{sec:appendix_NELBO}.

\subsection{\framework{} Can Express Various Generation Orders}
\begin{figure}
    \centering
    \includegraphics[width=0.9\linewidth]{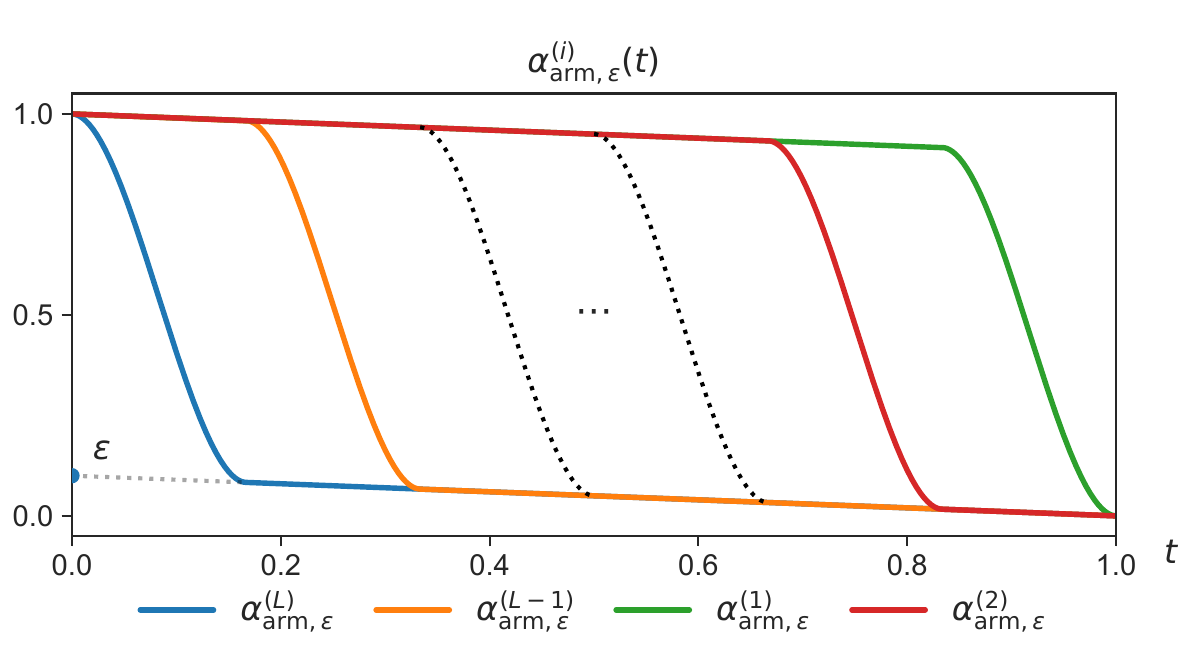}
    \caption{Illustration of $\alpha_{\mathrm{arm},\epsilon}(t)$ that makes \framework{} to generate in L2R order. The explicit function formulation is in Appendix~\ref{sec:appendix_proof_arm}}
    \label{fig:arm_scheduler}
\end{figure}
We provide here how we can understand the generation order and NELBO of MDLMs, ARMs, BD3LMs, and GenMD4 within our \framework{}.
Trivially, if the \scheduler{} coincide with those of MDLM, i.e.,
$\alpha_{\mathcal{F}[\mathcal{I}]}\coloneqq\alpha_{\mathrm{mdlm}}$ and $
\hat{\alpha}_{\mathcal{F}[\hat{\mathcal{I}}]}(v,t)\coloneqq\alpha_{\mathrm{mdlm}}$,
then $\mathcal{L}_{\mathrm{velocity}}=0$ such that $\mathcal{L}_{\mathrm{\framework{}}}(\theta,\alpha_{\mathrm{mdlm}},\alpha_{\mathrm{mdlm}})=\mathcal{L}_{\mathrm{mdlm}}$.
Furthermore, we show that \framework{} can also encompass ARMs:

\begin{restatable}[Autoregressive models as a special case of \framework{}]{proposition}{unifying}
\label{cor:unifying}
If $\mathbf{x}_\theta$ is time-agnostic as typical ARMs, there exists $\alpha_{\mathrm{arm},\epsilon}\in\mathcal{F}[\emptyset]$ that makes $p_{\theta,\hat\alpha_{\mathcal{F}}}$ becomes approximately equal to ARMs. Formally, the generative distribution induced by the reverse kernel $p_{\theta,\alpha_{\mathrm{arm},\epsilon}}(\mathbf{z}_s|\mathbf{z}_t)$ satisfies:
\begin{gather*}
    p_{\theta,\alpha_{\mathrm{arm},\epsilon}}(\mathbf{x})=\prod_{i=1}^L\langle\mathbf{x}_\theta^{(i)}(\mathbf{y}_i),\mathbf{x}^{(i)}\rangle+O(\epsilon),
\end{gather*}
where $\mathbf{y}_i=[\mathbf{x}^{(1:i-1)}:\mathbf{m}^{L-i+1}]$. In continuous-time,
\begin{align*}
    &\mathcal{L}_{\mathrm{\framework{}}}(\mathbf{x},\theta,\alpha_{\mathrm{arm},\epsilon},\alpha_{\mathrm{arm},\epsilon})\\
    &=-\log \prod_{i=1}^L\langle\mathbf{x}_\theta^{(i)}(\mathbf{y}_i),\mathbf{x}^{(i)}\rangle+O(\epsilon),
\end{align*}
such that \framework{} converges to ARM closely as $\epsilon\rightarrow0+$.
\end{restatable}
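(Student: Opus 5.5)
We will construct $\alpha_{\mathrm{arm},\epsilon}\in\mathcal{F}[\emptyset]$ so that, with probability $1-O(\epsilon)$ under both the forward process and the reverse kernel, positions are unmasked one at a time from left to right. Split $[0,1]$ into $L$ consecutive windows $I_i=[(L-i)/L,(L-i+1)/L]$, so that position $L$ is handled in $I_L=[0,1/L]$ and position $1$ in $I_1=[(L-1)/L,1]$. On window $I_i$ the coordinate $\alpha^{(i)}_{\mathrm{arm},\epsilon}$ drops from $1-O(\epsilon)$ to $O(\epsilon)$. Outside $I_i$ it moves only by an $O(\epsilon)$ amount:
\begin{itemize}
\item before the window (larger $t$) it stays in $(0,\epsilon]$;
\item after the window (smaller $t$) it stays in $[1-\epsilon,1)$.
\end{itemize}
To satisfy the class constraints, we take a smooth strictly decreasing interpolation, for instance $\alpha^{(i)}=(1-\epsilon)\,\sigma_i(t)+\epsilon(1-t)$ with $\sigma_i$ a $C^1$ monotone step from $1$ to $0$ across $I_i$. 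This gives strictly negative derivative, absolute continuity, and the boundary values $\alpha^{(i)}(0)=1$ and $\alpha^{(i)}(1)=0$. The scheduler is input-agnostic, so it lies in $\mathcal{F}[\emptyset]$. We use the same scheduler for both the forward and reverse processes, which makes $A=\hat A$.

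\textbf{Likelihood.} We use the discrete-time reverse chain and its $T\to\infty$ limit. By the reverse kernel (II), conditioned on position $i$ still being masked at time $t$, the probability that it is unmasked during $[s,t]$ is $(\hat\alpha^{(i)}(s)-\hat\alpha^{(i)}(t))/(1-\hat\alpha^{(i)}(t))$. Integrating this over each window shows three things.
\begin{enumerate}
\item Position $i$ is unmasked inside $I_i$ with probability $1-O(\epsilon)$.
\item The total probability that any position is unmasked outside its window is $O(L\epsilon)=O(\epsilon)$, for fixed $L$.
\item When position $i$ is unmasked inside $I_i$, the current state is, up to an $O(\epsilon)$ event, exactly $\mathbf{y}_i=[\mathbf{x}^{(1:i-1)}:\mathbf{m}^{L-i+1}]$.
\end{enumerate}
Because $\mathbf{x}_\theta$ is time-agnostic, the token emitted at position $i$ has law $\mathbf{x}_\theta^{(i)}(\mathbf{y}_i)$ whatever the exact unmasking time inside $I_i$. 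By the SUBS carry-over, unmasked tokens remain fixed. Summing over the path decomposition into the "good" ordered event and its $O(\epsilon)$ complement gives $p_{\theta,\alpha_{\mathrm{arm},\epsilon}}(\mathbf{x})=\prod_i\langle\mathbf{x}_\theta^{(i)}(\mathbf{y}_i),\mathbf{x}^{(i)}\rangle+O(\epsilon)$. The complement term contributes at most its probability, since all conditional likelihoods lie in $[0,1]$. The final reconstruction step at $t(0)\to0$ is trivial because all positions are unmasked by then, again up to $O(\epsilon)$.

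\textbf{NELBO.} We apply Proposition~\ref{NELBO} with $A=\hat A$, so $\mathcal{L}_{\mathrm{velocity}}=0$ and only $\mathcal{L}_{\mathrm{main}}$ remains. For each $i$ we rewrite the $i$-th term as
\[
\int_0^1 q(\mathbf{z}_t^{(i)}=\mathbf{m})\,A^{(i)}(t)\,\mathbb{E}\!\left[-\log\langle\mathbf{x}^{(i)}_\theta(\mathbf{z}_t),\mathbf{x}^{(i)}\rangle\,\middle|\,\mathbf{z}_t^{(i)}=\mathbf{m}\right]dt.
\]
Since $q(\mathbf{z}_t^{(i)}=\mathbf{m})=1-\alpha^{(i)}(t)$, the weight equals $-\partial_t\alpha^{(i)}(t)$, which integrates to $1$ over $[0,1]$. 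This weight has mass $1-O(\epsilon)$ on $I_i$. On $I_i$, the forward state $\mathbf{z}_t$ equals $\mathbf{y}_i$ except on an event of $q$-probability $O(\epsilon)$: positions $j<i$ are unmasked with probability at least $1-\epsilon$ each, and positions $j>i$ are masked with probability at least $1-\epsilon$ each. So each term equals $-\log\langle\mathbf{x}_\theta^{(i)}(\mathbf{y}_i),\mathbf{x}^{(i)}\rangle+O(\epsilon)$. Summing over $i$ gives the claim.

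\textbf{Main obstacle.} The hardest step is controlling the error terms. They carry $-\log\langle\mathbf{x}_\theta,\mathbf{x}\rangle$ evaluated at atypical states, which is not a priori bounded. We would handle this by assuming, as is implicit in the statement, that $\mathbf{x}_\theta$ assigns probabilities bounded below by a constant $c>0$ (for example through the softmax with bounded logits). The $O(\epsilon)$ constant then depends on $L$ and $\log(1/c)$.

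A second delicate point is the singular velocity $A=-\alpha'/(1-\alpha)$ near $t=0$, where $1-\alpha\to0$. We would check finiteness there using the finiteness condition referenced for Proposition~\ref{NELBO}. This works out because the weight $(1-\alpha)A=-\alpha'$ stays integrable.
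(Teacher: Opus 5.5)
Your proposal is correct and follows essentially the same route as the paper. Your scheduler $(1-\epsilon)\sigma_i(t)+\epsilon(1-t)$ is exactly the paper's $\alpha_{\mathrm{arm},\varepsilon}$ with $\sigma_i=1-S(\cdot)$. The likelihood bound is the same good-trajectory/union-bound decomposition, where your telescoped window-unmasking probabilities play the role of the paper's marginal identity $p(\mathbf{z}^{(i)}_{t(\tau)}\neq\mathbf{m})=\hat\alpha^{(i)}(t(\tau))$. The NELBO step is the same conditional-expectation rewrite with weight $-\partial_t\alpha^{(i)}$ concentrated on the $i$-th window, and your lower bound on $\langle\mathbf{x}_\theta^{(i)},\mathbf{x}^{(i)}\rangle$ matches the paper's assumption that the integrand is finite on the finitely many relevant states.
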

\begin{proof}[proof sketch.] A specific form of the scheduler $\alpha_{\mathrm{arm},\epsilon}$ is given in Definition~\ref{def:arm_scheduler}, and its conceptual illustration is shown in Figure~\ref{fig:arm_scheduler}. Intuitively, \framework{} approximating ARM is trivial since such a scheduler would give a masked sequence with right-most filled masks and yield reconstruction loss weighted on the first mask in training time, and generation will likely occur in L2R order in inference-time. We rigorously show that this is true, and the NELBO of \framework{} becomes the negative log-likelihood of ARM.
\end{proof}

Note that this result can also be extended to auto-regressive modeling of any fixed ordering (see Corollary~\ref{cor:any-fixed-orderings}). For BD3LM as well, by designing the scheduler, we can arrive at the same conclusion (Appendix~\ref{sec:appendix_bd3lm}). Furthermore, we show that GenMD4, which considers a vocabulary-wise forward scheduler, \ie, $\alpha_{\mathcal{F}[\mathcal{X}]}$, exactly falls into our \framework{} framework with the same NELBO in Appendix~\ref{sec:appendix_genmd4}.

\section{MDMs with Learnable Order}
Through \framework{}, we have observed how the generation order shapes both training via the NELBO and inference via the parameterized reverse process.
However, existing schedulers do not fully consider the information given in the denoising/reverse processes, \ie, $\mathcal{I}=\emptyset$ or $\mathcal{I}=\mathcal{X}$. This might be suboptimal since there might exist a better context-aware generation order. In this section, we provide \fullmethod{}, which learns \emph{where to unmask next} and \emph{what to generate next}.

\subsection{NELBO for \method{}}
In \method{}, we set $\mathcal{I}=\mathcal{X}^L$ and $\mathcal{\hat I}=\mathcal{Z}_t^L$ to fully leverage the information given in the forward and parametrized reverse process. That is, the forward process decides where to corrupt first, given the full sentence, and the parametrized reverse process determines where to generate first, given the masked sentence.
With $\alpha_{\mathcal{F}[\mathcal{X}^L]}(\mathbf{x},t)=\alpha_\phi(\mathbf{x},t)$, $A(\mathbf{x},t)=A_\phi(\mathbf{x},t)$, and $\hat A(\mathbf{z}_t,t)=\hat A_\psi(\mathbf{z}_t,t)$,
\begin{align}
    &\mathcal{L}_{\mathrm{\method{}}}=\mathcal{L}_{\mathrm{\framework{}}}(\mathbf{x},\theta,\alpha_\phi,\hat \alpha_\psi)=\nonumber\\
    &\int_{t=0}^{1}
\mathbb{E}_{q_{\alpha_\phi}}
\Biggl[
\sum_{i=1}^{L}
\langle\mathbf{z}_t^{(i)},\mathbf{m}\rangle
\Biggl\{
\underbrace{-A_\phi^{(i)}
\log\langle \mathbf{x}_\theta^{(i)}(\mathbf{z}_t,t),\mathbf{x}^{(i)}\rangle}_{\mathcal{L}_{\mathrm{main}}}\nonumber\\
&+\underbrace{A_\phi^{(i)}(\log A_\phi^{(i)}-\log\hat A_\psi^{(i)})-(A^{(i)}_\phi-\hat A_\psi^{(i)})}_{\mathcal{L}_{\mathrm{velocity}}}\Biggl\}
\Biggr] dt.\label{eq:\method{}_loss}
\end{align}

Note that the main difference from \eqref{eq:NELBO} is that
the forward and reverse velocity ($A_\phi^{(i)}$ and $\hat A_\psi^{(i)}$) are now
function of the neural networks $\phi$ and $\psi$ that should be learned by optimizing NELBO.
More specifically, the interpretation of NELBO for LoMDM can be summarized as follows:
\begin{itemize}
    \item To minimize $\mathcal{L}_{\mathrm{main}}$, diffusion model $\theta$ tries to reconstruct data, and it concentrates on the tokens with higher velocity. The velocity $A_\phi$ is trained to have a higher value for the token that the diffusion model predicts correctly. 
    \item To minimize $\mathcal{L}_{\mathrm{velocity}}$, the velocity $A_\phi$ and velocity $\hat A_\psi$ is trained to have equal values. That is, the true reverse posterior velocity and the parametrized reverse process velocity are trained to become identical. 
    \item Combining the above two results, $A_\phi$ is trained on two objectives as follows: 1) To \textit{guide} diffusion backbone $\theta$ to the easiest path, 2) while keeping it \textit{tractable} for $\hat A_\psi$.
\end{itemize}

\subsection{Model Structure}
\begin{figure}
    \centering
    \includegraphics[width=0.8\linewidth]{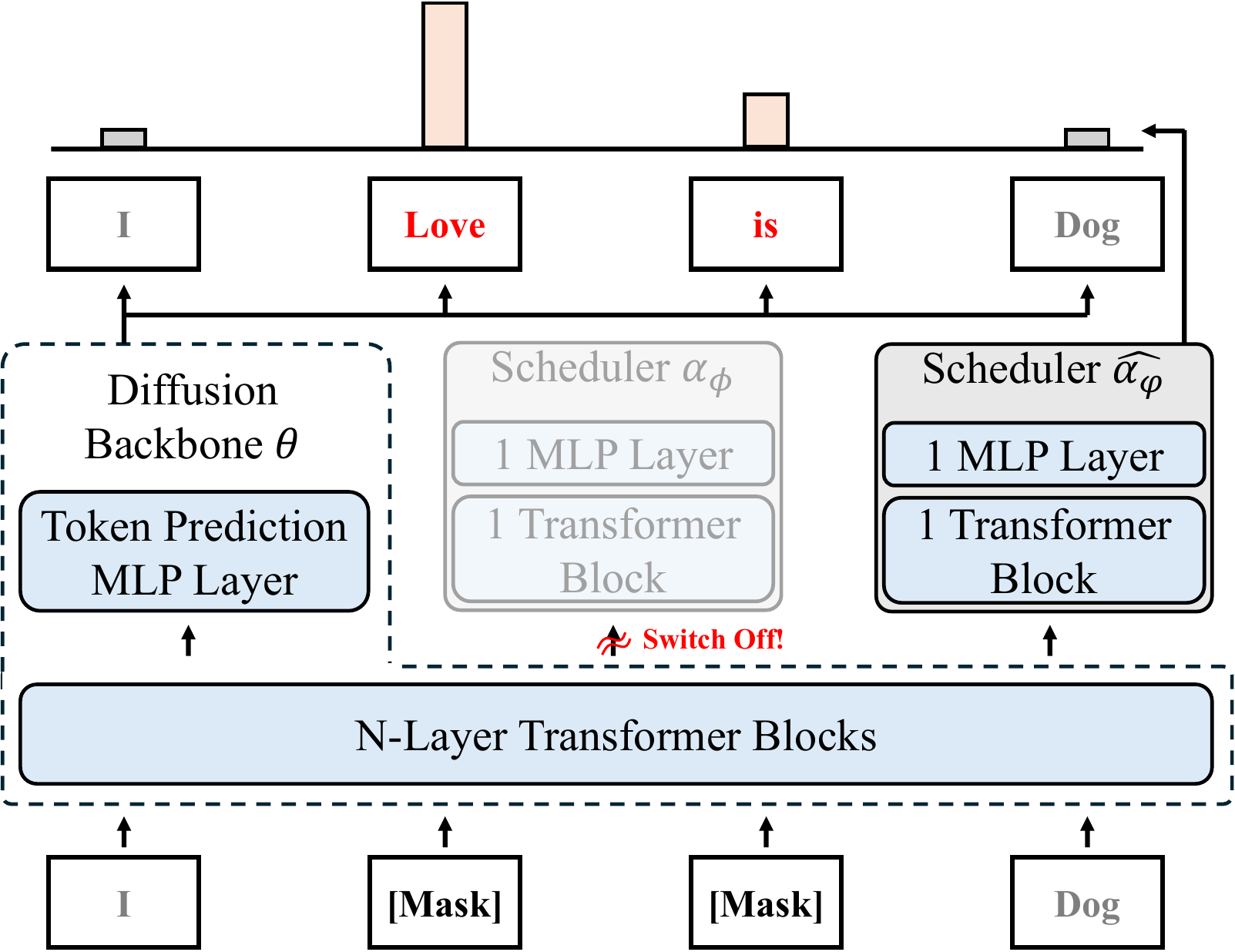}
    \caption{Model structure of \method. We view backbone of diffusion model $\theta$ as a feature extractor of $\mathbf{z}_t$ or $\mathbf{x}$, and train $\theta$,$\alpha_\phi$, and $\hat \alpha_\psi$ jointly. Depending on the input type, final layers are switched off or on. For example in the above figure, the input is $\mathbf{z}_t$ so the final diffusion MLP layer and $\hat \alpha_\psi$ is activated. Meanwhile, if input was $\mathbf{x}$, only $\alpha_\phi$ would be activated. We detach the gradient of $\alpha_\phi$ and $\hat \alpha_\psi$ from flowing to the diffusion backbone
    (N-Layer transformer blocks in figure.) 
    % Detailed explanation is provided in Appendix~\ref{sec:appendix_algorithms}.
    }
    \label{fig:model}
\end{figure}

However, introducing entirely new networks $\phi$ and $\psi$ for the scheduler would require a large number of additional parameters, making training inefficient in both speed and memory, and would also make optimization more unstable. Therefore, inspired by \citet{hong2025improvingdiscretediffusionunmasking}, we view the Transformer layers of the diffusion network $\theta$ as a fixed feature extractor for the scheduler networks $\phi$ and $\psi$. 
We utilize the time-agnostic diffusion network following prior works including MDLM~\citep{sahoo2024simple,xie2025vadd}, given by $\mathbf{x}_\theta(\mathbf{z}_t,t)=\mathrm{Softmax}\!\big(\theta_{\mathrm{MLP}}(\theta_{\mathrm{TF}}(\mathbf{z}_t))\big)$. We then use $f(\cdot)=\mathrm{Sgd}\!\big(\theta_{\mathrm{TF}}(\cdot)\big)$ as the feature extractor, where $\mathrm{Sgd}$ refers to stop-gradient. We then parameterize (i) the forward scheduler $\alpha_\phi(\mathbf{x},t)$, (ii) use the analytic relation to obtain $A_\phi$ from $\alpha_\phi$, and (iii) the reverse-time scheduler and velocity
$\hat \alpha_\psi(\mathbf{z}_t,t),\hat A_\psi(\mathbf{z}_t,t)$ to have same functional form:
\begin{gather}
    \alpha_\phi^{(i)}(\mathbf{x},t):=1-t^{c_1+c_2\cdot[\text{NormSig}(g_\phi(f(\mathbf{x})))]_i}\label{eq:reparam_alpha_phi}\\
    A_\phi^{(i)}(\mathbf{x},t)=
    \tfrac{c_1+c_2\cdot[\text{NormSig}(g_\phi(f(\mathbf{x}))]_i}{t},\label{eq:reparam_A_phi}\\
    \hat \alpha_\psi^{(i)}(\mathbf{z}_t,t):=1-t^{c_1+c_2\cdot[\text{NormSig}(g_\psi(f(\mathbf{z}_t)))]_i}\label{eq:reparam_alpha_psi}\\
    \hat A_\psi^{(i)}(\mathbf{z}_t,t)=
    \tfrac{c_1+c_2\cdot[\text{NormSig}(g_\psi(f(\mathbf{z}_t)))]_i}{t}.\label{eq:reparam_A_psi}
\end{gather}
In particular, we choose $c_1>c_2$ such that both $\alpha_\phi$ and $\alpha_\psi$ satisfy the condition of free-form scheduler class, and NELBO is always finite by Proposition~\ref{prop:finiteness}. We denote $\text{NormSig}$ as the normalized Sigmoid defined on the overall sequence, \ie, $[\text{NormSig}(\mathbf{v})]_i=\sigma(\mathbf{v}_i)-\sum_{j=1}^L \sigma(\mathbf{v}_j)/L$ for vector $\mathbf{v}\in\mathbb{R}^L$. As shown in Figure~\ref{fig:model}, each $\phi$ and $\psi$ is composed of 1 transformer layer followed by 1 MLP layer.
This simple parametrization helps to learn generation order effectively, since $g_\phi$ and $g_\psi$ can be directly optimized toward reconstruction loss. 
Furthermore, normalization helps to forces overall velocity to be regularized, such that $g_\psi$ and $g_\phi$ just modulates relative generation order priority.
See further detail of \method{} parametrization in Appendix~\ref{sec:appendix_rationale}, 

\subsection{Training Algorithm}\label{sec:training_algorithm}
In $\mathcal{L}_{\mathrm{\method{}}}$ (\eqref{eq:\method{}_loss}), we can see that $\alpha_\phi$ is included in the expectation term. This means that we should sample $\mathbf{z}_t\sim q_{\alpha_\phi}(\cdot\mid\mathbf{x})$, and it also requires gradient descent to be performed. In this section, we provide how we handle this.

\noindent\textbf{Gradient of $\phi$.} The naive gradient estimator for $\phi$ is:
\begin{align*}
    \nabla_\phi\mathcal{L}_{\text{\method{}}}=\mathbb{E}_{t\sim\text{Uniform}([0,1])}\big[\mathbb{E}_{q_{\alpha_\phi}}[\nabla_\phi (\mathcal{L}_{\text{main}}+\mathcal{L}_{\text{velocity}})\\
    +\mathbb{E}_{q_{\alpha_\phi}}[\nabla_\phi \log q_{\alpha_\phi} \cdot (\mathcal{L}_{\text{main}}+\mathcal{L}_{\text{velocity}})]\big].
\end{align*}
However, the estimator $\mathbb{E}_{q_{\alpha_\phi}}[\nabla_\phi \log q_{\alpha_\phi} \cdot (\mathcal{L}_{\mathrm{main}}+\mathcal{L}_{\mathrm{velocity}})]$ has high-variance in reinforcement learning perspective~\citep{shi2025simplified}. In this regard, we sample $\mathbf{z}_t^1,\mathbf{z}_t^2\sim q_{\alpha_\phi}(\cdot|\mathbf{x})$ independently and utilize a non-biased low variance estimator~\citep{kool2019rloo} as follows:
\begin{align}
    &\mathcal{L}_{\text{rloo}}=\frac{1}{2}
\log \tfrac{q_{\alpha_\phi}(\mathbf{z}_t^1|\mathbf{x})}{q_{\alpha_\phi}(\mathbf{z}_t^2|\mathbf{x})}
\Bigl(
\operatorname{Sgd}(\mathcal{L}_{\mathbf{z}_t^1})
-
\operatorname{Sgd}(\mathcal{L}_{\mathbf{z}_t^2})
\Bigr),\label{eq:loss_rloo}\\
&\nabla_\phi\mathcal{L}_{\text{\method{}}}=
    \mathbb{E}_t[\mathbb{E}_{\mathbf{z}_t^1,\mathbf{z}_t^2\sim q_{\alpha_\phi}}[\nabla_\phi (\tfrac{1}{2}(\mathcal{L}_{\mathbf{z}_t^1}+\mathcal{L}_{\mathbf{z}_t^2})+\mathcal{L}_{\text{rloo}})]],\nonumber
\end{align}
where $\mathcal{L}_{\mathbf{z}_t^i}$ refers to $\mathcal{L}_{\mathrm{main}}+\mathcal{L}_{\mathrm{velocity}}$ for $\mathbf{z}_t^i$.
Here, $i$ is different from $(i)$ which indicates $i$-th token, \ie $\mathbf{z}_t^{i}$ itself is full sequence such that $\dim(\mathbf{z}_t^{i})=(V+1)\times L$. Further details are given in Appendix~\ref{sec:appendix_rloo}, and note that $\mathcal{L}_{\mathrm{rloo}}$ is invariant to gradient of $\phi$ and $\theta$.

\noindent\textbf{Training algorithm.} Combining all above, we provide Algorithm~\ref{alg:\framework{}-train}, which includes sampling $\mathbf{z}_t^1,\mathbf{z}_t^2\sim q_{\alpha_\phi}(\cdot|\mathbf{x})$ and directly optimize the single NELBO for three parameters. We have intentionally omitted the batch process to intensify readability, yet we sample $t$ number of $B//2$, and learn $\mathbf{z}_t^1,\mathbf{z}_t^2$ for each $t$, so a total of $B$ samples can be learn within one batch. Our training algorithm requires 1 more forward pass of $\theta$, and requires 2 more forward/backward passes of $\phi,\psi$ (composed of 1 transformer/MLP layer) than conventional MDLM, so that the number of tokens seen per second is slightly lower than that of MDLM. However, we observed that our \method{} substantially outperforms MDLM within the same trained hours, and we report these observations in following section.

\begin{algorithm}[t]
\caption{Training algorithm of \method{}}
% \REQUIRE Batch size $B$
\label{alg:\framework{}-train}
\begin{algorithmic}[1]
\WHILE{Until converge}
\STATE Sample $\mathbf{x}\sim p(\mathbf{x})$ and $t\sim\mathrm{Uniform}([0,1])$
\STATE $f\leftarrow \mathrm{Sgd}(\theta_{\text{TF}}(\mathbf{x}))$
\STATE $\alpha_\phi(\mathbf{x},t),\quad A_\phi(\mathbf{x},t)\leftarrow$ \eqref{eq:reparam_alpha_phi} and \ref{eq:reparam_A_phi} using $f$
\STATE Sample $\mathbf{z}_t^1, \mathbf{z}_t^2\sim q_{\alpha_\phi}(\mathbf{z}_t|\mathbf{x})$
\FOR{$i\in\{0,1\}$}
\STATE $\mathbf{x}_\theta^i\leftarrow \mathrm{Softmax}\!\big(\theta_{\mathrm{MLP}}(\theta_{\mathrm{TF}}(\mathbf{z}_t^i))\big)$
\STATE \hfill where $\theta_{\mathrm{TF}}(\cdot)$ is cached
\STATE $f_{\mathbf{z}^i}\leftarrow\textrm{Sgd}(\theta_{\text{TF}}(\mathbf{z}_t^i))$
\STATE $\hat A_\psi(\mathbf{z}_t^i,t)\leftarrow$ \eqref{eq:reparam_A_psi} using $f_{\mathbf{z}^i}$
\STATE $\mathcal{L}_{\mathbf{z}_t^i}\leftarrow$ \eqref{eq:\method{}_loss} using $A_\phi,\hat A_\psi,\mathbf{x}_\theta$ and $\mathbf{x}$
\ENDFOR
\STATE $\mathcal{L}_{\text{rloo}}\leftarrow$ \eqref{eq:loss_rloo}
\STATE $\mathcal{\widehat L}_{\mathrm{\method{}}}\leftarrow \tfrac{1}{2}(\mathcal{L}_{\mathbf{z}_t^1}+\mathcal{L}_{\mathbf{z}_t^2})+\mathcal{L}_{\text{rloo}}$
\STATE Perform gradient descent on $\mathcal{\widehat L}_{\mathrm{\method{}}}$ for $\{\theta,\phi,\psi\}$ simultaneously
\ENDWHILE
\end{algorithmic}
\end{algorithm}

\section{Experimental results}

\begin{table}[t!]
  \caption{Test perplexities (PPL; $\downarrow$) on LM1B and OpenWebText. 
  Best diffusion value is bolded. $^\P$Denotes the dataset didn't incorporate sentence packing.
  For diffusion models, we report the bound on the likelihood. In GenMD4, $^\ddag$ denotes our trained model due to the absence of experiments. All diffusion models were trained with a batch size of 512, whereas reported PPL in OWT of GenMD4~\citep{shi2025simplified} was that of model trained with a batch size of 1024, so that we marked it as $\ge$. Otherwise, reported values are imported from \citet{sahoo2025diffusionduality}. $L'$ in BD3LM refers to length of each block.
  }
  \label{tab:lm1b-ppl}
  \centering
{
\resizebox{0.95\linewidth}{!}{
  \begin{tabular}{l p{0.55\linewidth} c c c}
    \toprule
    &&  {\scriptsize LM1B}$^\P$ & {\scriptsize LM1B} & {\scriptsize OWT} \\
    \midrule
    \multicolumn{3}{l}{\textit{Autoregressive}}\\
        & Transformer  & 22.3 & 22.8$^{\dagger}$ & 17.5\\
    \midrule
    \multicolumn{3}{l}{\textit{Diffusion (Uniform-state / Gaussian)}}\\
        & D3PM Uniform~{\scriptsize \citep{Austin2021}} &   -&   137.9  &   - \\
         & Diffusion-LM~{\scriptsize \citep{li2022diffusionlmimprovescontrollabletext}} &  - &  118.6 &  -\\
        & SEDD Uniform~{\scriptsize \citep{lou2024discrete}} &  40.3 &  - &  29.7 \\
        & UDLM~{\scriptsize \citep{schiff2024simple}} &  31.3 &  36.7 &  27.4\\
        & Duo~{\scriptsize \citep{sahoo2025diffusionduality}} &  29.9 &  33.7 &  25.2\\
    \midrule
    \multicolumn{3}{l}{\textit{Diffusion (Absorbing state)}}\\
        &BERT-Mouth~{\scriptsize \citep{wang2019bertmouthspeakbert}} & -& 142.9 & -\\
        &D3PM Absorb~{\scriptsize \citep{Austin2021}} & -& 76.9 & -\\
        &DiffusionBert~{\scriptsize \citep{he2022diffusionbert}} & - & 63.8& - \\
        &SEDD Absorb~{\scriptsize \citep{lou2024discrete}} & 32.7 & - & 24.1 \\
        &MDLM~{\scriptsize \citep{sahoo2024simple}} & 27.0 & 31.8& 23.2\\
    \midrule
    \multicolumn{3}{l}{\textit{Autoregressive + Diffusion}}\\
        &BD3LM~{\scriptsize \citep{arriola2025bd3lm}} \hfill $L'$=16 & - & 30.6& 22.3\\
        &\hfill $L'$=8 & - & 29.8& 21.7\\
        &\hfill $L'$=4 & - & 28.2& 20.7\\
    \specialrule{.1em}{.1em}{.1em}
    \multicolumn{3}{l}{\textit{Diffusion (Absorbing state + Learnable scheduler)}}\\
        &GenMD4~{\scriptsize \citep{shi2025simplified}} & 26.9$^\ddag$ & 30.0$^\ddag$ & 21.8$^\ge$\\
    &\shade \textbf{\method{} (Ours)}  & \shade \textbf{25.4} & \shade \textbf{27.2} & \shade {$\mathbf{20.4}$} \\ %
    \bottomrule
  \end{tabular}
  }
}
\end{table}

\begin{table*}[t!]
\caption{Zero-shot perplexities ($\downarrow$) of models trained for 1M steps on OpenWebText.
All perplexities for diffusion models are upper bounds. $^\dagger$ Taken from~\citet{arriola2025bd3lm}. Otherwise, reported values are imported from \citet{sahoo2025diffusionduality}. Best diffusion values are \textbf{bolded} and diffusion values better than AR are \underline{underlined}. 
}
\label{tab:zeroshot-ppl}
\centering
% {\footnotesize
\resizebox{0.7\linewidth}{!}{
\begin{tabular}{llccccccc}
\toprule
&& PTB & Wikitext & LM1B & Lambada  & AG News & Pubmed & Arxiv\\
\midrule
\multicolumn{9}{l}{\textit{Autoregressive}} \\
& Transformer &82.05 & 25.75 & {51.25} & 51.28 & {52.09} & 49.01 & 41.73\\
\midrule
\multicolumn{9}{l}{\textit{Diffusion (Uniform-state / Gaussian)}} \\
&  SEDD Uniform &  105.51 &  41.10 &  82.62 &  57.29 &  82.64 &  55.89 &  50.86 \\
&  Plaid &  142.60 &  50.86 &  91.12&  57.28 &  - &  - &  - \\
&  UDLM &  112.82 &  39.42 &  77.59 &  53.57 &  80.96 &  50.98 &  44.08 \\
&  Duo  &  89.35 &  33.57 &  73.86 &   \underline{49.78} & 67.81 &  \underline{44.48} &  \underline{40.39}\\
\midrule
\multicolumn{9}{l}{\textit{Diffusion (Absorbing state)}} \\
& SEDD Absorb & 100.09 & 34.28 & 68.20& \underline{49.86} & 62.09 & \underline{44.53} & \underline{38.48} \\
& D3PM Absorb & 200.82 & 50.86 & 138.92& 93.47 & - & - & - \\
& MDLM  & 95.26 & {32.83} & {67.01} & \underline{47.52} & {61.15} & \underline{41.89} & \underline{37.37}\\
\midrule
\multicolumn{9}{l}{\textit{Autoregressive + Diffusion}} \\
& BD3LM$^\dagger$ $(L'=4)$ &96.81 & 31.31 & \textbf{60.88} & 50.03 & 61.67 & \underline{42.52} & \underline{39.20}\\
\midrule
\multicolumn{9}{l}{\textit{Diffusion (Absorbing state + Learnable scheduler)}} \\
& \shade \textbf{\method{} (Ours)}  & \shade \underline{\textbf{80.40}} &\shade  \textbf{27.82} & \shade 61.19 &  \shade \underline{\textbf{36.32}} &\shade \textbf{53.53} & \shade \underline{\textbf{37.73}} & \shade \underline{\textbf{32.88}}\\

\bottomrule
\end{tabular}
}
\end{table*}

\subsection{Main Results}
\noindent\textbf{Experimental settings.}\label{sec:main_result} We evaluate \method{} following the widely adopted experimental settings in continuous-time discrete diffusion language modeling~\citep{sahoo2025diffusionduality,arriola2025bd3lm}. 
We train \method{} on three datasets, including One Billion Words dataset (LM1B)~\citep{chelba2013lm1b} with/without sentence packing and OpenWebText (OWT)~\citep{Gokaslan2019owt}. We utilize three metrics: test perplexity, zero-shot test perplexity, and generative perplexity. Every discrete diffusion model, including ours are trained for 1M steps with a batch size of 512 unless specified. Note that we actually sample 256 texts for each batch since \method{} utilizes a two-sample estimator, so the experimental setting is fair. Finally, we set $c_1=0.7, c_2=0.65$ for \method{} for every dataset.
Further details are provided in Appendix~\ref{sec:appendix_experimental_settings}, and ablation study on $c_1, c_2$ is given in Appendix~\ref{sec:appendix_c1_c2}

\noindent\textbf{Likelihood evaluation.} On LM1B and OWT (Table~\ref{tab:lm1b-ppl}), our \method{} outperforms every other discrete diffusion models with large margin. For LM1B with sentence packing and OWT, \method{} achieves PPL values that are more than 3 points lower than those of MDLM. Furthermore, we observe that \method{} achieves lower PPL across all benchmarks than BD3LM even when BD3LM uses a block size of $L'=4$, which injects an almost autoregressive L2R bias. Finally, for GenMD4, which also learns the scheduler like our method, \method{} outperforms in every benchmark. Notably, in OWT, even the batch size is set to 1024 for GenMD4 meaning it sees twice as many training samples as \method{}, yet our \method{} achieves much lower PPL.

\noindent\textbf{Zero-shot likelihood evaluation.} We measure the zero-shot generalization of the models trained on OWT by evaluating their PPL on 7 other datasets. Following~\citet{sahoo2024simple,sahoo2025diffusionduality}, our zero-shot datasets include the validation splits of Penn Tree Bank (PTB; \citet{marcus1993building}), WikiText \citep{merity2016pointer}, LM1B, Lambada \citep{paperno2016lambada}, AG News \citep{zhang2015character}, and Scientific papers from ArXiv and Pubmed \citep{cohan2018discourse}. We observe that \method{} achieves a lower PPL  than MDLM on 7/7, notably, outperforming an autoregressive transformer on 4/7 datasets, and state-of-the-art among all discrete diffusion models on 6/7 datasets. 
In addition, \citet{shi2025simplified} report that their GenMD4 model trained on OWT exhibited a bias toward certain words, which in turn degraded its zero-shot PPL; for this reason, zero-shot PPL metric was not reported.

\noindent\textbf{Generative perplexity.}
\begin{table}[t]
\centering
\caption{Generative PPL ($\downarrow$) of models trained on OWT, computed by a pre-trained GPT-2 Large on 256 generated samples with length of 1024. 
\method{} uses matched train--inference scheduling at test time is marked as bold. 
\textdagger~denotes \emph{no-scheduler} ablations where the scheduler effect is disabled at inference by setting $c_2{=}0$.}
\label{tab:gen_ppl_nfe}
\resizebox{0.99\linewidth}{!}{
\begin{tabular}{l
                S[table-format=3.2]
                S[table-format=3.2]
                S[table-format=2.2]
                S[table-format=2.2]}
\toprule
\textbf{\#NFE} & {\textbf{128}} & {\textbf{256}} & {\textbf{512}} & {\textbf{1024}} \\
\midrule
MDLM & 116.71 & 79.43 & 55.50 & 42.56 \\
\textbf{\method{} ($c_1{=}0.7,\,c_2{=}0.65$)} & \textbf{92.78} & \textbf{73.98} & \textbf{48.29} & \textbf{38.87} \\
\midrule
\multicolumn{5}{l}{\textit{Ablation study}} \\
\method{}\textdagger\ ($c_1{=}0.7,\,c_2{=}0$) & 107.95 & 78.05 & 59.34 & 44.42 \\
\method{}\textdagger\ ($c_1{=}1,\,c_2{=}0$) & 122.48 & 83.68 & 59.88 & 48.30 \\
\bottomrule
\end{tabular}
}
\end{table}
Finally, we test the effectiveness of \method{} for the quality of generated texts. We employed ancestral sampling~\citep{sahoo2024simple} with reverse transition $p_{\theta,\hat \alpha_{\psi}}(\mathbf{z}_s|\mathbf{z}_t)$. As shown in Table~\ref{tab:gen_ppl_nfe}, our \method{} outperforms the MDLM baseline with a large margin, which indicates our velocity $\hat A_\psi$ actually improves the text generation ability of MDM. Furthermore, \method{} achieves lower generative PPL across various NFE settings, indicating that it improves text generation quality while retaining the fast generation enabled by parallel decoding. 
We further conduct an ablation study to isolate the effect of the learned scheduler at inference time. 
While keeping the training setup fixed at $(c_1,c_2)=(0.7,0.65)$ for all models, we disable the scheduler effect during generation by setting $c_2=0$ (Table~\ref{tab:gen_ppl_nfe}, \textdagger). 
This consistently degrades the generative PPL compared to the matched train--inference setting, indicating that the learned velocity $\hat A_\psi$ provides a beneficial generation path rather than merely acting as a training-time regularizer.

\subsection{Training Dynamics of \method{}}

\begin{figure}
    \centering
    \includegraphics[width=1.0\linewidth]{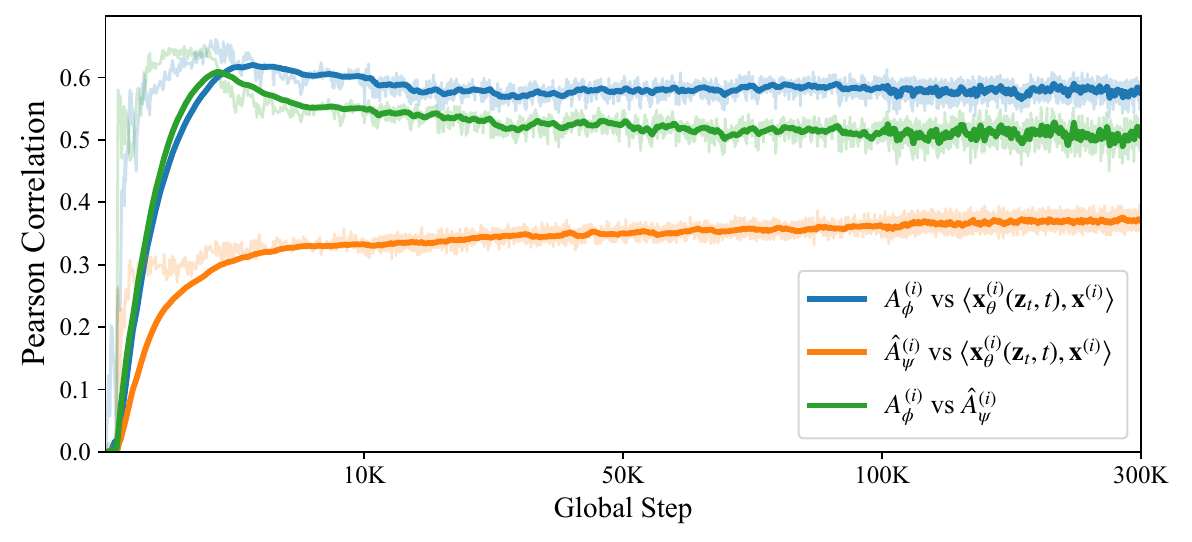}
    \caption{Pearson correlation per training step for \method{} trained on OWT. We report correlations among $A_\phi^{(i)}(\mathbf{x},t)$, $\hat A_\psi^{(i)}(\mathbf{z}_t,t)$, and $\langle \mathbf{x}_\theta^{(i)}(\mathbf{z}_t,t), \mathbf{x}^{(i)}\rangle$. When measuring correlation with $\langle \mathbf{x}_\theta^{(i)}(\mathbf{z}_t,t), \mathbf{x}^{(i)}\rangle$, we compute it only over masked positions in $\mathbf{z}_t$, since $\mathbf{x}_\theta^{(i)}(\mathbf{z}_t,t)$ is zero at unmasked positions.}
    \label{fig:pearson_corr}
\end{figure}

\noindent\textbf{Correlation between $\theta$, $\phi$, $\psi$.}
To understand how the learned scheduler interacts with the diffusion backbone, we track Pearson correlations between the forward velocity $A_\phi^{(i)}(\mathbf{x},t)$, the reverse-time velocity predicted by $\psi$, $\hat A_\psi^{(i)}(\mathbf{z}_t,t)$, and the backbone's token-level reconstruction confidence toward ground truth $\langle \mathbf{x}_\theta^{(i)}(\mathbf{z}_t,t), \mathbf{x}^{(i)}\rangle$. Figure~\ref{fig:pearson_corr} shows that $A_\phi^{(i)}(\mathbf{x},t)$ quickly becomes positively correlated with reconstruction confidence and remains stable throughout training. This suggests that the learned $\phi$ assigns higher denoising emphasis to tokens that the backbone $\theta$ can reconstruct more reliably at the current time $t$. We also observe a positive correlation between $\hat A_\psi^{(i)}(\mathbf{z}_t,t)$ and reconstruction confidence, indicating that $\psi$ learns to infer which masked tokens are currently easier to predict from the partially observed context $\mathbf{z}_t$. Finally, the strong correlation between $A_\phi$ and $\hat A_\psi$ supports that the velocity-matching term $\mathcal{L}_{\mathrm{velocity}}$ effectively aligns the forward and reverse velocities.

\noindent\textbf{\method{} is a fast and efficient learner.}
\begin{figure}
    \centering
    \includegraphics[width=0.9\linewidth]{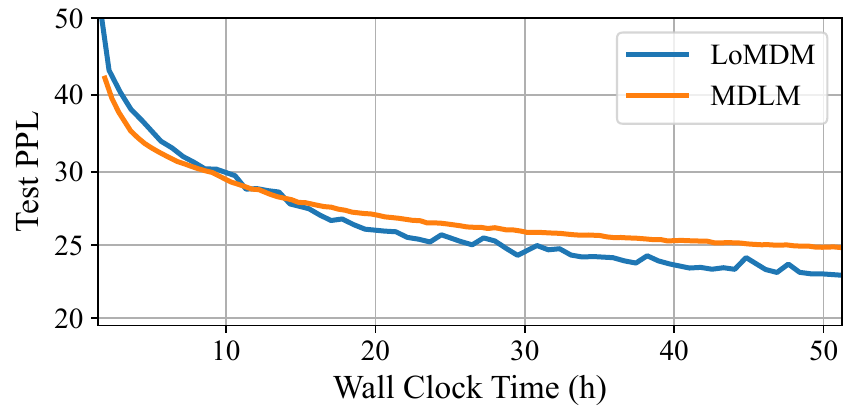}
    \caption{Test PPL per wall-clock-time during training on OWT. We truncate the curves at the point where \method{} matches the 1M-step MDLM performance (PPL = 23.0). At this cutoff, MDLM had reached PPL = 24.9 with $\sim$0.30M steps, while our method had reached PPL = 23.0 with $\sim$0.18M steps.}
    \label{fig:training_curve}
\end{figure}
Figure~\ref{fig:training_curve} reports test PPL on OWT with wall-clock time, where the curves are truncated when \method{} reaches the 1M-step MDLM reference performance (PPL=23.0). Notably, \method{} attains this target after only 0.18M steps, compared to 1M steps for MDLM, indicating that comparable performance can be achieved after seeing only $\sim$18\% of tokens. In terms of wall-clock time, this corresponds to $\sim50$ hours for \method{} versus $\sim150$ hours for the 1M-step MDLM reference, making \method{} \textbf{about 3$\times$ faster in practice}. Moreover, across the shared wall-clock budget in Figure~\ref{fig:training_curve}, \method{} consistently achieves lower PPL than MDLM, further suggesting that it learns faster and more efficiently in practice.

\subsection{Ablation Studies}
We conduct ablation studies for two purposes. First, we test whether the gains of \method{} persist under different backbone scales. Second, we diagnose plausible but less effective scheduler design choices, including alternative gradient flow, scheduler-head architecture, and scheduler parametrization. All experiments are conducted on LM1B$^\P$ without sentence packing for 0.5M training steps and evaluated by test PPL ($\downarrow$), and all other experimental settings are the same as in Section~\ref{sec:main_result}. Refer to Appendix~\ref{sec:appendix_c1_c2} for ablation study on $c_1$ and $c_2$.

\noindent\textbf{\method{} with different model sizes.}
We first vary the Transformer backbone size with various choices of hidden size and number of blocks/heads. In this experiment, \method{} uses the default $(c_1,c_2)=(0.7,0.65)$. As shown in Table~\ref{tab:ablation_model_size}, \method{} consistently improves over MDLM under both smaller and larger backbones, suggesting that LoMDM with the current hyperparameter choice is robust to backbone scale.
\begin{table}[t]
\caption{Model-size ablation. Test PPL ($\downarrow$) on LM1B$^\P$ at 0.5M steps with different Transformer backbone configurations. Section~\ref{sec:main_result} uses 768 hidden dimensions, 12 blocks, and 12 heads.}
\label{tab:ablation_model_size}
\centering
\footnotesize
\begin{adjustbox}{max width=\columnwidth}
\begin{tabular}{ccccc}
\toprule
Hidden size & Blocks & Heads & MDLM & \method{} \\
\midrule
512  & 8  & 8  & 40.1 & \textbf{34.3} \\
1024 & 24 & 16 & 27.5 & \textbf{24.6} \\
\bottomrule
\end{tabular}
\end{adjustbox}
\vspace{-0.5em}
\end{table}

\noindent\textbf{\method{} with different scheduler architectures.}
We next ablate the scheduler-head architecture in Table~\ref{tab:ablation_architecture}. Removing stop-gradient substantially worsens PPL, suggesting that directly coupling token prediction and order-priority learning makes optimization difficult. Replacing the Transformer scheduler head with an MLP-only head also underperforms the default, supporting the role of context-aware scheduler modeling. Overall, the default stop-gradient feature sharing with a lightweight Transformer scheduler performs best.
\begin{table}[t]
\caption{Scheduler-architecture ablation. Test PPL ($\downarrow$) on LM1B$^\P$ at 0.5M steps.}
\label{tab:ablation_architecture}
% \vspace{-0.5em}
\centering
\footnotesize
\begin{tabular}{lc}
\toprule
Variant & PPL \\
\midrule
\textbf{\method{} (default)} & \textbf{29.2} \\
\method{} w/o stop-gradient & 36.3 \\
\method{} w/o Transformer layer & 31.3 \\
MDLM & 32.5 \\
\bottomrule
\end{tabular}
\vspace{-0.5em}
\end{table}

\noindent\textbf{\method{} with different scheduler parametrizations.}
Finally, we test whether Softmax parametrization can replace our normalized sigmoid design. For Softmax parametrization, the NormSig in \eqref{eq:reparam_alpha_phi}--\ref{eq:reparam_A_psi} is replaced by the Softmax function, \eg, $\alpha_\phi^{(i)}(\mathbf{x},t):=1-t^{c_1+c_2\cdot[\text{SoftMax}(g_\phi(f(\mathbf{x})))]_i}$. Table~\ref{tab:ablation_parametrization} compares NormSig with Softmax variants. Across several Softmax configurations, PPL remains around 34 and is clearly worse than NormSig. We speculate the reason behind the failure mode of Softmax-style parameterization is that it can over-concentrate priority on a small number of positions, whereas NormSig regularizes relative generation priorities and stabilizes training.
\begin{table}[t]
\caption{Scheduler-parametrization ablation. Test PPL ($\downarrow$) on LM1B$^\P$ at 0.5M steps.}
\label{tab:ablation_parametrization}
% \vspace{-0.5em}
\centering
\footnotesize
\begin{tabular}{lcc}
\toprule
Param. & $(c_1,c_2)$ & PPL \\
\midrule
\textbf{NormSig} & \textbf{(0.7, 0.65)} & \textbf{29.2} \\
Softmax & (0.7, 1) & 34.3 \\
Softmax & (0.7, 3) & 34.1 \\
Softmax & (0.7, 10) & 34.0 \\
\bottomrule
\end{tabular}
\vspace{-0.5em}
\end{table}

\subsection{Analysis of Generation Trajectories of \method{}}
To examine whether the learned reverse process induces a meaningful generation order, we analyze the POS composition of newly generated tokens along the sampling trajectory. Specifically, we generate 256 length-128 sentences over 128 reverse steps and measure the proportion of newly generated tokens in each POS category at early and late stages of decoding.

\begin{table}[t]
\caption{Generation-trajectory analysis. Proportion of newly generated tokens in each POS category at early and late decoding stages.}
\label{tab:generation_trajectory_pos}
% \vspace{-0.5em}
\centering
\footnotesize
\begin{tabular}{llcccc}
\toprule
Model & Steps & Nouns & Verbs & Articles & Prep. \\
\midrule
MDLM & $1\!\to\!16$ & 0.30 & 0.18 & 0.10 & 0.15 \\
     & $112\!\to\!128$ & 0.29 & 0.18 & 0.09 & 0.14 \\
\method{} & $1\!\to\!16$ & 0.25 & 0.15 & 0.14 & 0.19 \\
          & $112\!\to\!128$ & 0.37 & 0.20 & 0.07 & 0.10 \\
\bottomrule
\end{tabular}
\vspace{-0.5em}
\end{table}

As shown in Table~\ref{tab:generation_trajectory_pos}, MDLM exhibits a nearly unchanged POS distribution from the early stage to the late stage. In contrast, \method{} generates more structural words early: articles and prepositions account for 0.33 of newly generated tokens in steps $1\!\to\!16$, but only 0.17 in steps $112\!\to\!128$. Conversely, semantic content becomes more prominent later, as nouns and verbs increase from 0.40 to 0.57. This suggests that \method{} first establishes a reliable sentence structure under highly masked contexts and then fills in more content-bearing words once more context is available. We speculate that, for unconditional generation, a coarse-to-fine generation order may be more effective than a left-to-right order. In contrast, what generation order is preferable for conditional generation tasks, such as QA, remains an interesting direction for future work. See Appendix~\ref{sec:appendix_qualitative_examples} for the visualization of the generation trajectory of \method{}.

\section{Conclusion}
We introduced \framework{}, a unified framework of masked diffusion models with various orderings, which treats the noise scheduler as a minimal-constrained and position-dependent object and yields a generalized NELBO.
Building on this framework, we proposed \method{}, which jointly learns a sequence-dependent scheduler and the diffusion backbone through a single NELBO, so that the learned scheduler is directly exploitable at generation time and concentrates learning on more tractable prediction paths.  Empirically, \method{} achieves substantially lower test perplexities than prior discrete diffusion baselines.
Moreover, on OWT, \method{} matches the 1M-step MDLM at only 180K steps, highlighting that \method{} is a fast and efficient learner. 

\section*{Impact Statement}
The objective of our work is to advance the discrete diffusion-based language modeling. Potential societal consequences are similar to those of other text generation methods, and we do not anticipate impacts beyond those already well established for generative language models.

% \section*{Acknowledgments}
\section*{Acknowledgements}
This work was supported by the National Research Foundation of Korea under Grant RS-2024-00336454. 
Additionally, it was supported by Institute for Information \& communications Technology Planning \& Evaluation(IITP) grant funded by the Korea government(MSIT) (RS-2019-II190075, Artificial Intelligence Graduate School Program(KAIST)) and the Institute of Information \& Communications Technology Planning \& Evaluation(IITP) grant funded by the Korea government(MSIT) (RS-2025-02304967, AI Star Fellowship(KAIST)).

\bibliography{main}
\bibliographystyle{icml2026}

%%%%%%%%%%%%%%%%%%%%%%%%%%%%%%%%%%%%%%%%%%%%%%%%%%%%%%%%%%%%%%%%%%%%%%%%%%%%%%%
%%%%%%%%%%%%%%%%%%%%%%%%%%%%%%%%%%%%%%%%%%%%%%%%%%%%%%%%%%%%%%%%%%%%%%%%%%%%%%%
% APPENDIX
%%%%%%%%%%%%%%%%%%%%%%%%%%%%%%%%%%%%%%%%%%%%%%%%%%%%%%%%%%%%%%%%%%%%%%%%%%%%%%%
%%%%%%%%%%%%%%%%%%%%%%%%%%%%%%%%%%%%%%%%%%%%%%%%%%%%%%%%%%%%%%%%%%%%%%%%%%%%%%%
\newpage
\appendix
\onecolumn
% ICML style disables \addcontentsline; re-enable it for appendix ToC.
% \makeatletter
% \def\addcontentsline#1#2#3{%
%   \addtocontents{#1}{\protect\contentsline{#2}{#3}{\thepage}{}}%
% }
% \makeatother

% \tableofcontents
% \clearpage  % optional: 목차 다음 페이지부터 Appendix 시작하고 싶으면
\section{Related works}\label{appendix:related_works}

\noindent\textbf{Discrete diffusion models.}
Diffusion probabilistic models have become a dominant approach for continuous domains such as images, audio, and video~\citep{ho2020denoising,song2021score}. This success has motivated extensions to discrete domains, including text, leading to discrete diffusion models~\citep{Austin2021,Hoogeboom2021b}. The forward corruption process is typically designed in one of two ways: (i) \emph{uniform} corruption, which replaces tokens with random vocabulary elements~\citep{lou2024discrete,sahoo2025diffusionduality}, or (ii) \emph{masking}-based corruption, which maps tokens to an absorbing \texttt{[MASK]} state~\citep{sahoo2024simple,shi2025simplified}. Notably, \citet{lou2024discrete} propose SEDD, a continuous-time formulation based on a score-entropy objective that accommodates both uniform and masking corruptions.

\noindent\textbf{Continuous-time masked diffusion models.}
Masked diffusion models (MDMs) form a widely used subclass of discrete diffusion for text generation, built around an absorbing mask state~\citep{lou2024discrete,sahoo2024simple,shi2025simplified}. Empirically, masking-based formulations often yield stronger likelihood bounds than their uniform-corruption counterparts, and they have become a common baseline for discrete diffusion language modeling. In the continuous-time setting, a representative instantiation is MDLM~\citep{sahoo2024simple}, which defines an absorbing forward process where once a position becomes masked, it remains masked thereafter. This design substantially simplifies the forward dynamics and enables a principled continuous-time training objective based on an NELBO derivation. 

\noindent\textbf{Time-agnostic masked diffusion and its relation to continuous-time formulations.}
Beyond continuous-time formulations that rely on explicit time-conditioned schedulers, several works study time-agnostic training for masking-based diffusion models and show that the resulting objectives can be implemented with simple cross-entropy losses~\citep{zheng2024maskeddiffusionmodelssecretly,ou2024absorbing}.
The key intuition is that a scheduler primarily determines the fraction of masked tokens over time; hence many scheduler-dependent terms can be rewritten as functions of the current mask ratio, yielding a practically convenient framework that is widely adopted in recent large-scale MDM implementations~\citep{ye2025dream,nie2025llada}.

Importantly, the distinction between time-agnostic and continuous-time MDMs does not lie in whether the denoiser is time-conditioned: many continuous-time MDMs, including MDLM, BD3LM, and VADD, keep the denoiser itself time-agnostic~\citep{sahoo2024simple,arriola2025bd3lm,xie2025vadd}.
Rather, the difference is whether the diffusion process is specified via an explicit time-conditioned scheduler.
\citet{zheng2024maskeddiffusionmodelssecretly} derives a scheduler-free NELBO and proposes the First-Hitting sampler, which enables sampling without an explicit scheduler by repeatedly selecting random positions to update.
However, time-agnostic formulations effectively assume that the corruption strength is identical across positions and make time implicit via the number (or ratio) of masked tokens.
As a result, constructing order-dependent, position-varying MDMs naturally calls for revisiting the continuous-time formulation with explicit, time-conditioned schedulers.

\noindent\textbf{Importance of generation order in masked diffusion models.}
A central determinant of MDM generation quality is the \emph{unmasking order} at inference time. Prior studies have repeatedly observed that naive random unmasking can be suboptimal~\citep{ou2024absorbing,shih2022training,kim2025trainworstplanbest}. In continuous-time MDMs, BD3LM~\citep{arriola2025bd3lm} injects a block-wise left-to-right bias and substantially improves over MDLM. Similar phenomena appear in large-scale time-agnostic MDMs: when using random ordering, large-scale MDMs~\citep{ye2025dream,nie2025llada} often lag behind large-scale ARMs such as LLaMA-3~\citep{grattafiori2024llama3herdmodels}.
However, adopting structured decoding---e.g., block-wise L2R generation, within each block, revealing the highest-confidence positions first---can close this gap and even surpass ARMs in some settings. \citet{azangulov2026parallel} further show that safe parallel unmasking can be achieved without additional training by testing conditional independence among candidate tokens, improving generation performance under a few-step generation setting. Overall, these results highlight that the choice of unmasking order matters in MDMs.

\noindent\textbf{Learning unmasking order in masked diffusion models.}
In the continuous-time MDMs setting, \citet{peng2025path} introduce a planner, an unmasking module, and post-train this planner to improve a fixed diffusion backbone. \citet{shi2025simplified} propose GenMD4, which defines vocabulary-dependent learnable schedulers. In contrast to ours, such vocabulary-dependent designs primarily capture global token preferences (e.g., whether certain token types tend to be sampled earlier), rather than adapting the ordering to the specifi sentence context. 
In the time-agnostic MDMs setting, learning unmasking strategies has been actively explored due to the simplicity of time-agnostic training objectives~\citep{hong2025improvingdiscretediffusionunmasking,peng2025planner,Huang2025}. \citet{peng2025planner} propose PAPL, which integrates a planner into the time-agnostic MDM framework, and derive a corresponding NELBO. However, optimizing the NELBO with an explicit planner involves sampling along the Markov chain (i.e., generating intermediate states up to $t=0$ and an intermediate time $t'$). To avoid this infeasible sampling cost, they replace the planner with the diffusion network's confidence scores and instead train a surrogate objective that samples text from a uniform distribution. \citet{hong2025improvingdiscretediffusionunmasking} propose post-training an unmasking policy module using GRPO, and prove that the resulting policy can sample closer to the true data distribution. Similarly, \citet{liu2026ni} address the inefficiency of confidence-based unmasking by training a neural indicator that predicts which tokens can be safely unmasked at each step, thereby merging redundant sampling steps and accelerating generation through an optimized token order.

\noindent\textbf{Connection to learning-order autoregressive models (LO-ARM).}
LO-ARM~\citep{wang2025learning} studies a closely related problem in the autoregressive framework: jointly learning the generation order and token prediction model. In this sense, LO-ARM and our LoMDM share the same high-level goal, and their training objectives have an intuitive similarity. 
% More specifically, both use the learned order to modulate the token prediction objective, thereby encouraging the model to focus its learning signal on positions that are preferred under the learned generation order. 
More specifically, both objectives decompose into an order-weighted token-prediction loss and an order-matching loss: the former concentrates training on high-priority positions, while the latter aligns the oracle order from the full sequence with the order inferred from a partially observed sequence during generation.

However, the two methods are built on fundamentally different modeling foundations. LO-ARM is derived from an autoregressive factorization and learns the order through a variational order model, whereas LoMDM integrates learnable order into the masked diffusion framework through a learnable scheduler. This distinction makes LoMDM the first framework, to our knowledge, that theoretically incorporates jointly learned generation order into MDM training from scratch. Moreover, OeMDM provides a unified view of standard MDMs, ARMs, block diffusion, and learned-scheduler MDMs, while preserving \textit{the key advantage of MDMs: parallel and few-step generation}. Thus, while LO-ARM and LoMDM share a similar motivation, they make parallel contributions in different generative modeling paradigms.

\noindent\textbf{Summary.}
MDMs are a promising alternative to ARMs, but their generation quality depends strongly on the unmasking order. However, existing approaches typically 1) improve the order only via post-training~\citep{hong2025improvingdiscretediffusionunmasking,peng2025path}, 2) rely on surrogate objectives~\citep{peng2025planner}, or 3) fail to fully capture context-dependent ordering over the entire sequence~\citep{shi2025simplified}. To address these limitations, we return to the canonical continuous-time MDLM formulation and focus on the scheduler as the central mechanism that governs the unmasking process. This perspective enables a principled integration of MDM training with learnable, context-dependent generation orders within a unified theoretical framework.

\section{Preliminaries and Notation}\label{sec:appendix_prelim}
We first introduce basic objects, masked sequences, absorbing mask (monotone-masking) trajectories, and the corresponding path measures that will be used throughout the appendix.

\subsection{Masked sequences and absorbing trajectories}\label{sec:appendix_prelim_sets}

\begin{definition}[Masked sequence set]\label{def:masked_sequence_set}
For a length-$L$ sequence $\mathbf{x}\in\mathcal{X}^L$, define the masked sequence set as
\begin{align}
\label{eq:masked_sequence_set}
\mathcal{S}(\mathbf{x})
:=
\Bigl\{
\mathbf{z}\in(\mathcal{X}\cup\{\mathbf{m}\})^L
\ \big|\
\mathbf{z}^{(i)}\in\{\mathbf{x}^{(i)},\mathbf{m}\},\ \forall i\in[L]
\Bigr\}.
\end{align}
\end{definition}

\begin{definition}[Absorbing mask trajectory]\label{def:absorbing_mask_trajectory}
Fix an endpoint $\mathbf{x}\in\mathcal{X}^L$ and a discrete time grid $t(0) < \cdots < t(T)$.
Define the absorbing mask trajectory set as
\begin{align} \label{eq:Sall_x} \mathcal{S}_{\mathrm{absorb}}(\mathbf{x},T) :=\Big\{(\mathbf{z}_{t(0)},\ldots,\mathbf{z}_{t(T)})\in&\big((\mathcal{X}\cup\{\mathbf{m}\})^{L}\big)^{T+1}\ \Big|\ \exists\,(M_\tau)_{\tau=0}^{T}\subseteq [L]\ \text{s.t.}\nonumber\\ &M_T=[L],\quad M_{\tau-1}\subseteq M_\tau\ \ (\tau=1,\ldots,T),\nonumber\\ &\mathbf{z}_{t(\tau)}^{(j)} =\mathbf{m}\,\mathds{1}\{j\in M_\tau\} +\mathbf{x}^{(j)}\,\mathds{1}\{j\notin M_\tau\}, \ \forall \tau\in\{0,\ldots,T\},\ \forall j\in[L] \Big\}, \end{align}
Equivalently, along any $\mathbf{z}_{t(0:T)}\in\mathcal{S}_{\mathrm{absorb}}(\mathbf{x},T)$ the mask is absorbing:
\(
\mathbf{z}_{t(\tau)}^{(i)}=\mathbf{m}\ \Rightarrow\ \mathbf{z}_{t(\tau+1)}^{(i)}=\mathbf{m}.
\)
\end{definition}

For brevity, we omit $T$ in $\mathcal{S}_{\mathrm{absorb}}(\mathbf{x},T)$.

\subsection{Forward and reverse path measures}\label{sec:appendix_prelim_path_measures}

The path distributions induced by the model-parameterized reverse process and the true forward/posterior process is defined as follows using chain rule:
\begin{align*}
p_{\theta,\hat\alpha_{\mathcal F}}(\mathbf{x},\mathbf{z}_{t(0:T)})
&=
p_{\theta,\hat\alpha_{\mathcal F}}(\mathbf{x}\mid \mathbf{z}_{t(0)})
\Bigl(\prod_{\tau=1}^{T} p_{\theta,\hat\alpha_{\mathcal F}}(\mathbf{z}_{s(\tau)}\mid \mathbf{z}_{t(\tau)})\Bigr)
p_{\theta,\hat\alpha_{\mathcal F}}(\mathbf{z}_{t(T)}),\\
q_{\alpha_{\mathcal F}}(\mathbf{z}_{t(0:T)}\mid \mathbf{x})
&=
\Bigl(\prod_{\tau=1}^{T} q_{\alpha_{\mathcal F}}(\mathbf{z}_{s(\tau)}\mid \mathbf{z}_{t(\tau)},\mathbf{x})\Bigr)
q_{\alpha_{\mathcal F}}(\mathbf{z}_{t(T)}\mid \mathbf{x}),
\end{align*}
where we use the definition $s(\tau)=\tau/(T+1), t(\tau)=(\tau+1)/(T+1)$ such that $s(\tau)=t(\tau-1)$.
Following conventional MDMs~\citep{sahoo2024simple,shi2025simplified}, we take the terminal forward distribution to be fully masked:
\(
q_{\alpha_{\mathcal F}}(\mathbf{z}_{t(T)}^{(i)}\mid \mathbf{x})=\mathrm{Cat}(\mathbf{m})
\)
for all $i\in[L]$.
Likewise, the model's initial noise distribution is fully masked:
\(
p_{\theta,\hat\alpha_{\mathcal F}}(\mathbf{z}_{t(T)}^{(i)})=\mathrm{Cat}(\mathbf{m})
\)
for all $i\in[L]$.
Finally, the reconstruction distribution at $t(0)$ is defined tokenwise as
\begin{align*}
p_{\theta,\hat\alpha_{\mathcal{F}}}(\mathbf{x}^{(i)}\mid \mathbf{z}_{t(0)})
=
\mathrm{Cat}\!\Big(\mathbf{x}^{(i)}_\theta\big(\mathbf{z}_{t(0)},t(0)\big)\Big),
\end{align*}
where $\mathbf{x}^{(i)}_\theta(\mathbf{z}_{t(0)},t(0))\in\Delta^{V+1}$ denotes the model prediction of token $(i)$.

\section{Derivation of NELBO}\label{sec:appendix_NELBO}
In this section, we provide a complete derivation of the NELBO of \framework{}. 
Before deriving NELBO, we explain the SUBS parametrization~\citep{sahoo2024simple} of $\mathbf{x}_\theta$ that was omitted in the main paper:

\noindent\textbf{Zero Masking Probabilities.} By definition, $\langle\mathbf{x}^{(i)},\mathbf{m}\rangle=0$ holds for all $\mathbf{x}\in\mathcal{V}^L$ and $i\in[L]$. SUBS parametrization therefore design the diffusion backbone never to output mask prediction such that $\langle\mathbf{x}^{(i)}_\theta(\mathbf{z}_t,t),\mathbf{m}\rangle=0$ for all $i$, i.e., to substitute the logit index corresponding to the mask token with $-\infty$. 

\noindent\textbf{Carry-Over Unmasking.} Once the mask token is unmasked through the generation process, SUBS parametrization desires not to unmask it again. This is accomplished by substituting the output of the diffusion network to simply copy
unmasked inputs. Formally, if $\mathbf{z}_t^{(i)}\neq \mathbf{m}$, then $\mathbf{x}_\theta^{(i)}(\mathbf{z}_t,t)=\mathbf{z}_t^{(i)}$.

To derive an NELBO, Kullback–Leibler (KL) divergence between the model path distribution and the true reverse-posterior path distribution should be defined, which requires that the model assigns zero probability outside the posterior’s support. Under SUBS parametrization, we first prove a lemma that is required for deriving NELBO:

\begin{lemma}[Conditional absolute continuity of \framework{}]\label{lem:absolute_continuity_\framework{}}
For any \scheduler{}s $\alpha_{\mathcal{F}}\in\mathcal{F}[I]$, $\hat\alpha_{\mathcal{F}}\in\mathcal{F}[\hat I]$,  any fixed $\mathbf{x}$, and any discretized trajectory $\mathbf{z}_{t(0:T)}$, the following statement holds under SUBS parametrization:
\begin{align}
p_{\theta,\hat\alpha_{\mathcal{F}}}\!\left(\mathbf{x},\mathbf{z}_{t(0:T)}\right) > 0
\quad\Longrightarrow\quad
q_{\alpha_{\mathcal{F}}}\!\left(\mathbf{z}_{t(0:T)}\mid \mathbf{x}\right) > 0,
\end{align}
or equivalently $p_{\theta,\hat\alpha_{\mathcal{F}}}(\mathbf{x},\mathbf{z}_{t(0:T)}) \ll q_{\alpha_{\mathcal{F}}}(\mathbf{z}_{t(0:T)}\mid \mathbf{x})$ for every fixed $\mathbf{x}$.
\end{lemma}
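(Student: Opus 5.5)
We will prove the contrapositive at the level of supports. First, we characterize the support of the forward path measure: $q_{\alpha_{\mathcal F}}(\mathbf{z}_{t(0:T)}\mid\mathbf{x})>0$ holds if and only if $\mathbf{z}_{t(0:T)}\in\mathcal{S}_{\mathrm{absorb}}(\mathbf{x})$. Second, we show that any trajectory with $p_{\theta,\hat\alpha_{\mathcal F}}(\mathbf{x},\mathbf{z}_{t(0:T)})>0$ lies in $\mathcal{S}_{\mathrm{absorb}}(\mathbf{x})$. Together these give the stated absolute continuity.

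\textbf{Support of $q$.} The terminal factor $q_{\alpha_{\mathcal F}}(\mathbf{z}_{t(T)}\mid\mathbf{x})$ is a point mass on the all-mask sequence, which gives $M_T=[L]$. Each posterior factor factorizes over $i$. By case (I), an unmasked coordinate is copied, so a coordinate that is unmasked at time $t(\tau)$ stays unmasked at $s(\tau)=t(\tau-1)$ with the same value; this is the absorbing property read forward in time. By case (II), a masked coordinate moves either to $\mathbf{m}$ with weight $(1-\alpha^{(i)}(u,s))/(1-\alpha^{(i)}(u,t))$ or to $\mathbf{x}^{(i)}$ with weight $(\alpha^{(i)}(u,s)-\alpha^{(i)}(u,t))/(1-\alpha^{(i)}(u,t))$. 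Strict monotonicity $\partial_t\alpha^{(i)}<0$ on $(0,1]$, together with the boundary values, gives $0<\alpha^{(i)}(u,t)<1$ for $t\in(0,1)$ and $\alpha^{(i)}(u,s)>\alpha^{(i)}(u,t)$ for $s<t$. Hence both weights are strictly positive whenever $0<s<t\le 1$, and all denominators are nonzero. We must check the edge cases separately: $t(T)=1$, where $\alpha=0$ and the denominator equals $1$, and $s(1)=t(0)>0$. It follows that every trajectory in $\mathcal{S}_{\mathrm{absorb}}(\mathbf{x})$ has positive $q$-probability. (Only this direction is actually needed.)

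\textbf{Support of $p$.} Suppose $p_{\theta,\hat\alpha_{\mathcal F}}(\mathbf{x},\mathbf{z}_{t(0:T)})>0$. The initial factor forces $\mathbf{z}_{t(T)}$ to be all masks. By induction backward in $\tau$, case (I) of the reverse kernel forces unmasked coordinates to persist unchanged, so the masked sets are nested, $M_{\tau-1}\subseteq M_\tau$. At a masked coordinate, case (II) puts mass only on $\mathbf{m}$ and on the support of $\mathbf{x}_\theta^{(i)}(\mathbf{z}_t,t)$, and zero masking probability guarantees that any newly unmasked value is a genuine token. Every revealed coordinate is then carried unchanged through all later steps to $\mathbf{z}_{t(0)}$. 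At the final step, carry-over unmasking makes $p(\mathbf{x}^{(i)}\mid\mathbf{z}_{t(0)})=\langle\mathbf{z}_{t(0)}^{(i)},\mathbf{x}^{(i)}\rangle$ whenever $\mathbf{z}_{t(0)}^{(i)}\neq\mathbf{m}$, so positivity forces each revealed value to equal $\mathbf{x}^{(i)}$. Therefore every $\mathbf{z}_{t(\tau)}$ satisfies $\mathbf{z}_{t(\tau)}^{(j)}=\mathbf{m}\mathds{1}\{j\in M_\tau\}+\mathbf{x}^{(j)}\mathds{1}\{j\notin M_\tau\}$, i.e. the trajectory belongs to $\mathcal{S}_{\mathrm{absorb}}(\mathbf{x})$. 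Combining with the first step gives $q>0$.

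\textbf{Main obstacle.} The delicate point is the strict positivity of the forward posterior weights uniformly over the grid, in particular at the endpoints. The weights $\alpha(u,1)=0$ and $\alpha(u,0)=1$ could make a denominator $1-\alpha$ vanish; this is avoided because the grid starts at $t(0)>0$. A second, more conceptual point is that $p$ ties the revealed token to $\mathbf{x}$ only through the reconstruction step. We must therefore make explicit that carry-over keeps each value fixed all the way to $\mathbf{z}_{t(0)}$, so a mismatch with $\mathbf{x}$ anywhere along the path yields $p=0$.
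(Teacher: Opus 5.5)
Your proposal is correct and follows the paper's proof essentially step for step. You first show that every trajectory in $\mathcal{S}_{\mathrm{absorb}}(\mathbf{x})$ has positive forward probability, using strict monotonicity and the boundary values of the scheduler, and then show that positive model probability forces the trajectory into $\mathcal{S}_{\mathrm{absorb}}(\mathbf{x})$ via the fully masked prior, carry-over unmasking, and zero masking; your treatment of the grid endpoint ($s(1)=t(0)>0$, so the stay-masked weight $1-\alpha^{(i)}(u,s)$ never vanishes) and of how the reconstruction factor $\langle\mathbf{z}_{t(0)}^{(i)},\mathbf{x}^{(i)}\rangle$ pins revealed tokens to $\mathbf{x}$ is, if anything, more explicit than the paper's.
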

\begin{proof}

We divide the proof in two steps: 1) we first show that $q_{\alpha_{\mathcal{F}}}(\mathbf{z}_{t(0:T)}\mid \mathbf{x})>0$ for all $\mathbf{z}_{t(0:T)}\in\mathcal{S}_{\mathrm{absorb}}(\mathbf{x})$, and 2) prove that $p_{\theta,\hat\alpha_{\mathcal{F}}}(\mathbf{x},\mathbf{z}_{t(0:T)})>0$ implies $\mathbf{z}_{t(0:T)}\in\mathcal{S}_{\mathrm{absorb}}(\mathbf{x})$. 

\noindent\textbf{1. $q_{\alpha_{\mathcal{F}}}(\mathbf{z}_{t(0:T)}\mid \mathbf{x})>0, \forall \mathbf{z}_{t(0:T)}\in\mathcal{S}_{\mathrm{absorb}}(\mathbf{x})$.}

Consider any $\mathbf{z}_{t(0:T)}\in\mathcal{S}_{\mathrm{absorb}}(\mathbf{x})$. Since $\mathbf{z}_{t(T)}=\mathbf{m}^L$ by definition, $q_{\alpha_{\mathcal{F}}}(\mathbf{z}_{t(T)}|\mathbf{x})=1$ by definition. We now show that for any $\mathbf{z}_{s(\tau)} (=\mathbf{z}_{t(\tau-1)}$) and $\mathbf{z}_{t(\tau)}$ from path $\mathbf{z}_{t(0:T)}\in\mathcal{S}_{\mathrm{absorb}}(\mathbf{x})$, the transition $q_{\alpha_{\mathcal{F}}}(\mathbf{z}_{s(\tau)}|\mathbf{z}_{t(\tau)},\mathbf{x})$ is positive. There are three cases of $\mathbf{z}_{s(\tau)}^{(i)},\mathbf{z}_{t(\tau)}^{(i)}$: 1) $\mathbf{z}_{s(\tau)}^{(i)}=\mathbf{m},\mathbf{z}_{t(\tau)}^{(i)}=\mathbf{m}$, 2) $\mathbf{z}_{s(\tau)}^{(i)}=\mathbf{x}^{(i)},\mathbf{z}_{t(\tau)}^{(i)}=\mathbf{m}$, and 3) $\mathbf{z}_{s(\tau)}^{(i)}=\mathbf{x}^{(i)},\mathbf{z}_{t(\tau)}^{(i)}=\mathbf{x}^{(i)}$.
To recap, $q_{\alpha_{\mathcal{F}}}(\mathbf{z}_s^{(i)}\mid \mathbf{z}_t^{(i)},\mathbf{x})$ is given by
\begin{align*}
&q_{\alpha_{\mathcal{F}}}(\mathbf{z}_s^{(i)} \mid \mathbf{z}_t, \mathbf{x}) =q_{\alpha_{\mathcal{F}}}(\mathbf{z}_s^{(i)} \mid \mathbf{z}_t^{(i)}, \mathbf{x})=
\begin{cases}
\mathrm{Cat}(\mathbf{z}_t^{(i)}),  
% &\text{(I)},
&\text{if } \mathbf{z}_t^{(i)} \neq \mathbf{m}, 
\\
\mathrm{Cat}\left(\tfrac{(1-\alpha_{\mathcal{F}}^{(i)}(u,s))\mathbf{m} + (\alpha_{\mathcal{F}}^{(i)}(u,s) - \alpha_{\mathcal{F}}^{(i)}(u,t))\mathbf{x}^{(i)}}{1 - \alpha_{\mathcal{F}}^{(i)}(u,t)}\right),
&\text{if } \mathbf{z}_t^{(i)} = \mathbf{m},
\end{cases}
\end{align*}
Since $\alpha_{\mathcal{F}}\in\mathcal{F}[I]$, for any fixed input the map $t\mapsto \alpha_{\mathcal{F}}^{(i)}(u,t)$ is strictly decreasing with boundary values $\alpha_{\mathcal{F}}^{(i)}(u,0)=1$ and $\alpha_{\mathcal{F}}^{(i)}(u,1)=0$.
Hence, for any $0\le s<t\le 1$ we have $\alpha_{\mathcal{F}}^{(i)}(u^{(i)},s)>\alpha_{\mathcal{F}}^{(i)}(u^{(i)},t)$, and for any $t\in(0,1]$ we have $0<\alpha_{\mathcal{F}}^{(i)}(u^{(i)},t)<1$.
Therefore, all of three cases are strictly positive, such that for any $\mathbf{z}_{s(\tau)}(=\mathbf{z}_{t(\tau-1)})$ and $\mathbf{z}_{t(\tau)}$ from path $\mathbf{z}_{t(0:T)}\in\mathcal{S}_{\mathrm{absorb}}(\mathbf{x})$, $q_{\alpha_{\mathcal{F}}}(\mathbf{z}_{s(\tau)}|\mathbf{z}_{t(\tau)},\mathbf{x})$ is positive, and so is their product $q_{\alpha_{\mathcal{F}}}(\mathbf{z}_{t(0:T)}\mid \mathbf{x})$.

\noindent\textbf{2. $p_{\theta,\hat\alpha_{\mathcal{F}}}(\mathbf{x},\mathbf{z}_{t(0:T)})>0\Rightarrow\mathbf{z}_{t(0:T)}\in\mathcal{S}_{\mathrm{absorb}}(\mathbf{x})$.}

Suppose $p_{\theta,\hat\alpha_{\mathcal{F}}}(\mathbf{x},\mathbf{z}_{t(0:T)})>0$.
Then, in particular, all factors in the joint are positive.

First, since the initial noise distribution is a fully-masked sequence, $\mathbf{z}_{t(T)}=\mathbf{m}^L$.

Next, carry-over unmasking implies that if $\mathbf{z}_t^{(i)}\neq \mathbf{m}$, then $\mathbf{x}_\theta^{(i)}(\mathbf{z}_t,t)=\mathbf{z}_t^{(i)}$, so the reverse transition copies that token with probability $1$. Formally,
\begin{align}
\Big(\mathbf{z}_{t(\tau+1)}^{(i)}\neq\mathbf{m}\;\Rightarrow\; \mathbf{z}_{t(\tau)}^{(i)}=\mathbf{z}_{t(\tau+1)}^{(i)}\Big)\iff\Big(\mathbf{z}_{t(\tau+1)}^{(i)}\neq\mathbf{m}\;\Rightarrow\; \mathbf{z}_{t(\tau)}^{(i)}\neq\mathbf{m}\Big)\iff
\Big(\mathbf{z}_{t(\tau)}^{(i)}=\mathbf{m}\;\Rightarrow \;\mathbf{z}_{t(\tau+1)}^{(i)}=\mathbf{m}\Big)\nonumber
\end{align}
satisfies for every trajectory from Markov process starting from $\mathbf{z}_{t(T)}=\mathbf{m}^L$. Therefore, if $p_{\theta,\hat\alpha_{\mathcal{F}}}(\mathbf{x},\mathbf{z}_{t(0:T)})>0$, the trajectory $\mathbf{z}_{t(0:T)}$ satisfies absorbing mask property, and $\mathbf{z}_{t(\tau)}^{(i)}\in\{\mathbf{m},\mathbf{v}\}$ for every $\tau\in[T]$ and fixed one-hot vector $\mathbf{v}\in\mathcal{V}$.

Finally, if there remains mask token in $\mathbf{z}_{t(0)}$, the variational distribution $p_{\theta,\hat\alpha_{\mathcal{F}}}(\mathbf{x}^{(i)}|\mathbf{z}_{t(0)})=\text{Cat}\Big(\mathbf{x}^{(i)}_\theta\big(\mathbf{z}_{t(0)},t(0)\big)\Big)$ converts it into non-mask token by zero masking property of SUBS parametrization, such that resulting $\mathbf{x}$ is composed of $\mathcal{V}^L$. Since carry-over unmasking property of SUBS parametrization holds for every transition, we can conclude that $\mathbf{z}_{t(\tau)}^{(i)}\in\{\mathbf{m},\mathbf{x}^{(i)}\}$ for every $\tau\in[T]$ and final state $\mathbf{x}\in\mathcal{V}^L$. Since $\mathbf{z}_{t(\tau)}^{(i)}\in\{\mathbf{x}^{(i)},\mathbf{m}\}$ for all $i,\tau$, and the trajectory is absorbing, $p_{\theta,\hat\alpha_{\mathcal{F}}}(\mathbf{x},\mathbf{z}_{t(0:T)})>0\Rightarrow\mathbf{z}_{t(0:T)}\in\mathcal{S}_{\mathrm{absorb}}(\mathbf{x})$ holds.

Since $q_{\alpha_{\mathcal{F}}}(\mathbf{z}_{t(0:T)}\mid \mathbf{x})>0, \forall \mathbf{z}_{t(0:T)}\in\mathcal{S}_{\mathrm{absorb}}(\mathbf{x})$ and $p_{\theta,\hat\alpha_{\mathcal{F}}}(\mathbf{x},\mathbf{z}_{t(0:T)})>0\Rightarrow\mathbf{z}_{t(0:T)}\in\mathcal{S}_{\mathrm{absorb}}(\mathbf{x})$ hold, the given statement $p_{\theta,\hat\alpha_{\mathcal{F}}}\!\left(\mathbf{x},\mathbf{z}_{t(0:T)}\right) > 0
\quad\Longrightarrow\quad
q_{\alpha_{\mathcal{F}}}\!\left(\mathbf{z}_{t(0:T)}\mid \mathbf{x}\right) > 0$ hold for every fixed $\mathbf{x}$.
\end{proof}

With Lemma~\ref{lem:absolute_continuity_\framework{}}, we now derive NELBO of \framework{}. To recap, NELBO is given as follows:

% \NELBO*
\begingroup
\renewcommand{\thetheorem}{\ref{NELBO}}
\renewcommand{\theHtheorem}{app.restate.\ref{NELBO}}
\begin{proposition}[NELBO of \framework{} in continuous time]
Under SUBS parametrization, the NELBO of \framework{} in continuous time is given as follows:
\begin{align}
&-\log p_{\theta,\hat{\alpha}_{\mathcal{F}}}(\mathbf{x})
\le\mathcal{L}_{\mathrm{\framework{}}}(\mathbf{x},\theta,\alpha_{\mathcal{F}},\hat\alpha_{\mathcal{F}})\nonumber\\
&=\int_{0}^{1}
\mathbb{E}_{q_{\alpha_{\mathcal{F}}}}
\Biggl[
\sum_{i=1}^{L}
\langle\mathbf{z}_t^{(i)},\mathbf{m}\rangle
\Biggl\{
\underbrace{-A^{(i)}
\log\langle \mathbf{x}_\theta^{(i)}(\mathbf{z}_t,t),\mathbf{x}^{(i)}\rangle}_{\mathcal{L}_{\mathrm{main}}}+\underbrace{A^{(i)}(\log A^{(i)}-\log\hat A^{(i)})-(A^{(i)}-\hat A^{(i)})}_{\mathcal{L}_{\mathrm{velocity}}}\Biggl\}
\Biggr] dt,\nonumber
\end{align}
where the structure of $\mathcal{L}_{\mathrm{main}}$ is equal to $\mathcal{L}_{\mathrm{mdlm}}$ and $\mathcal{L}_{\mathrm{velocity}}\ge0$ achieves 0 when $A=\hat{A}$. 
\end{proposition}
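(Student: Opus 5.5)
We follow the standard variational route of MDLM on the discretized grid $t(0)<\dots<t(T)$ and then pass to $T\to\infty$. By Lemma~\ref{lem:absolute_continuity_\framework{}}, the model path measure is absolutely continuous with respect to $q_{\alpha_{\mathcal F}}(\cdot\mid\mathbf{x})$. Jensen's inequality therefore gives
\[
-\log p_{\theta,\hat\alpha_{\mathcal F}}(\mathbf{x})\le \mathbb{E}_{q_{\alpha_{\mathcal F}}}\bigl[-\log p_{\theta,\hat\alpha_{\mathcal F}}(\mathbf{x},\mathbf{z}_{t(0:T)})+\log q_{\alpha_{\mathcal F}}(\mathbf{z}_{t(0:T)}\mid\mathbf{x})\bigr].
\]
Using the chain-rule factorizations of Section~\ref{sec:appendix_prelim_path_measures}, this splits into three pieces. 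The prior term vanishes, since both terminal distributions are $\mathbf{m}^L$. The reconstruction term is $\mathbb{E}_q[-\log p_\theta(\mathbf{x}\mid\mathbf{z}_{t(0)})]$. The remaining piece is a sum of $T$ diffusion terms $\mathbb{E}_q\,\mathrm{KL}\bigl(q_{\alpha_{\mathcal F}}(\mathbf{z}_s\mid\mathbf{z}_t,\mathbf{x})\,\|\,p_{\theta,\hat\alpha_{\mathcal F}}(\mathbf{z}_s\mid\mathbf{z}_t)\bigr)$. Because both kernels factorize over positions, each KL is a sum over $i$. Positions with $\mathbf{z}_t^{(i)}\neq\mathbf{m}$ contribute zero by carry-over unmasking.

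The core is the per-position KL for a masked coordinate. Write $a=\alpha^{(i)}(u,\cdot)$ and $\hat a=\hat\alpha^{(i)}(\hat u,\cdot)$, where $u$ and $\hat u$ are frozen at time $t$ as the paper stipulates. The true kernel puts mass $\frac{a_s-a_t}{1-a_t}$ on $\mathbf{x}^{(i)}$ and mass $\frac{1-a_s}{1-a_t}$ on $\mathbf{m}$. The model kernel puts mass $\frac{\hat a_s-\hat a_t}{1-\hat a_t}\langle\mathbf{x}_\theta^{(i)},\mathbf{x}^{(i)}\rangle$ on $\mathbf{x}^{(i)}$ and mass $\frac{1-\hat a_s}{1-\hat a_t}$ on $\mathbf{m}$ (zero-masking ensures no other atoms matter). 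The KL is therefore explicit:
\[
\tfrac{a_s-a_t}{1-a_t}\Bigl[-\log\langle\mathbf{x}_\theta^{(i)},\mathbf{x}^{(i)}\rangle+\log\tfrac{(a_s-a_t)(1-\hat a_t)}{(\hat a_s-\hat a_t)(1-a_t)}\Bigr]+\tfrac{1-a_s}{1-a_t}\log\tfrac{(1-a_s)(1-\hat a_t)}{(1-\hat a_s)(1-a_t)}.
\]
Set $s=t-\Delta$ with $\Delta=1/(T+1)$ and Taylor expand using the $C^1$ regularity on $(0,1]$. First, $\frac{a_s-a_t}{1-a_t}=A\Delta+o(\Delta)$. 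Second, the log ratio in the first bracket tends to $\log A-\log\hat A$. Third, the last term equals $(1-A\Delta)\log\frac{1-A\Delta}{1-\hat A\Delta}+o(\Delta)=(\hat A-A)\Delta+o(\Delta)$. Summing over the grid, the diffusion terms become Riemann sums that converge to
\[
\int_0^1\mathbb{E}_q\Bigl[\sum_i\langle\mathbf{z}_t^{(i)},\mathbf{m}\rangle\bigl\{-A\log\langle\mathbf{x}_\theta^{(i)},\mathbf{x}^{(i)}\rangle+A(\log A-\log\hat A)-(A-\hat A)\bigr\}\Bigr]dt.
\]
The reconstruction term vanishes as $t(0)\to0$, since $a_{t(0)}\to1$ forces $\mathbf{z}_{t(0)}=\mathbf{x}$ with probability tending to one, and carry-over unmasking then gives zero loss on that event. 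To finish, I would note that $\mathcal{L}_{\mathrm{velocity}}=\hat A\,\bigl(r\log r-r+1\bigr)$ with $r=A/\hat A$. Since $r\log r-r+1\ge0$ with equality iff $r=1$, we get nonnegativity with equality exactly at $A=\hat A$. For the special case $\alpha=\hat\alpha=\alpha_{\mathrm{mdlm}}$, this term vanishes and the remaining integrand reduces to $\mathcal{L}_{\mathrm{mdlm}}$.

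I expect the main obstacle to be justifying the limit $T\to\infty$, that is, exchanging the limit with the expectation and the sum. The difficulty is that $A$ may blow up near $t=0$; for example, $\alpha=1-t^{c}$ gives $A=c/t$. In addition, $\mathbf{x}_\theta$ and $\hat u$ depend on $\mathbf{z}_t$, so the Riemann sum is over a random, piecewise-constant integrand. My first approach would be to split $[0,\delta]$ from $[\delta,1]$. On $[\delta,1]$, uniform $C^1$ bounds give uniform $o(\Delta)$ errors, so dominated convergence applies. Near $0$, I would bound the contribution by the probability that a token is masked, $1-a_t$, times $A$, which is integrable because $\int A(1-a)\,dt=\int -\partial_t a\,dt<\infty$. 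I would also appeal to the finiteness condition deferred to Proposition~\ref{prop:finiteness} to control the $\log\hat A$ terms.
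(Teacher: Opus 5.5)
Your proposal is correct and follows essentially the same route as the paper. The main steps coincide: Jensen via the absolute-continuity lemma, the prior/reconstruction/diffusion split with the prior and reconstruction terms vanishing, the tokenwise closed-form KL expanded in $\Delta=1/(T+1)$ (the paper writes it with finite-difference velocities $A_T,\hat A_T$ and a mean-value step where you use a Taylor expansion), the $\lim_{\delta\to0^+}\int_\delta^1$ treatment near $t=0$ resting on $(1-\alpha)A=-\partial_t\alpha$ and the bounded-ratio condition of Proposition~\ref{prop:finiteness}, and nonnegativity of $\mathcal{L}_{\mathrm{velocity}}$ via $r\log r-r+1\ge 0$; your plan for the $T\to\infty$ interchange is, if anything, more explicit than the paper's.
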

\endgroup

Following \citet{sahoo2024simple}, discretize the time interval $\mathcal{T}$ with $T+1$ steps, and define $s(\tau)=\tau/(T+1)$ and $t(\tau)=(\tau+1)/(T+1)$ such that generative distribution is divided into $T$ diffusion reverse steps ($\mathbf{z}_{t(T)}\rightarrow\dots\rightarrow\mathbf{z}_{t(0)}$) and 1 reconstruction step ($\mathbf{z}_{t(0)}\rightarrow\mathbf{x}$). 
The negative evidence lower bound (NELBO) can be obtained as follows:
\begin{align}
-\log p_{\theta,\hat \alpha_{\mathcal{F}}}(\mathbf{x})
&= -\log \sum_{\mathbf{z}_{t(0:T)}} p_{\theta,\hat \alpha_{\mathcal{F}}}(\mathbf{x}, \mathbf{z}_{t(0:T)})\\
&= -\log \sum_{\mathbf{z}_{t(0:T)}} q_{\alpha_{\mathcal{F}}}(\mathbf{z}_{t(0:T)} \mid \mathbf{x}) 
      \frac{p_{\theta,\hat \alpha_{\mathcal{F}}}(\mathbf{x}, \mathbf{z}_{t(0:T)})}
           {q_{\alpha_{\mathcal{F}}}(\mathbf{z}_{t(0:T)} \mid \mathbf{x})}\hspace{4.4cm}\because\text{Lemma~\ref{lem:absolute_continuity_\framework{}}}\\
&= -\log \mathbb{E}_{\mathbf{z}_{t(0:T)}\sim q_{\alpha_{\mathcal{F}}}(\mathbf{z}_{t(0:T)}|\mathbf{x})}\bigg[
      \frac{p_{\theta,\hat \alpha_{\mathcal{F}}}(\mathbf{x}, \mathbf{z}_{t(0:T)})}
           {q_{\alpha_{\mathcal{F}}}(\mathbf{z}_{t(0:T)} \mid \mathbf{x})}\bigg]\\
&\le -\mathbb{E}_{q_{\alpha_{\mathcal{F}}}(\mathbf{z}_{t(0:T)}|\mathbf{x})}\bigg[\log
      \frac{p_{\theta,\hat \alpha_{\mathcal{F}}}(\mathbf{x}, \mathbf{z}_{t(0:T)})}
           {q_{\alpha_{\mathcal{F}}}(\mathbf{z}_{t(0:T)} \mid \mathbf{x})}\bigg]\hspace{4cm} \because\text{Jensen's inequality}\\
&= 
\mathbb{E}_{q_{\alpha_{\mathcal{F}}}}\!\Biggl[
\underbrace{-\log p_{\theta,\hat \alpha_{\mathcal{F}}}(\mathbf{x} \mid \mathbf{z}_{t(0)})}_{\mathcal{L}_{\text{reconstruction}}}
+
\underbrace{\sum_{\tau=1}^{T}
D_{\mathrm{KL}}\!\bigl(
    q_{\alpha_{\mathcal{F}}}(\mathbf{z}_{s(\tau)} \mid \mathbf{z}_{t(\tau)}, \mathbf{x})
    \,\|\, 
    p_{\theta,\hat \alpha_{\mathcal{F}}}(\mathbf{z}_{s(\tau)} \mid \mathbf{z}_{t(\tau)})
\bigr)}_{\mathcal{L}_{\text{diffusion}}^T}\nonumber\\
&\hspace{4cm}+
\underbrace{D_{\mathrm{KL}}\!\bigl(
    q_{\alpha_{\mathcal{F}}}(\mathbf{z}_{t(T)} \mid \mathbf{x})
    \,\|\, 
    p_{\theta,\hat \alpha_{\mathcal{F}}}(\mathbf{z}_{t(T)})
\bigr)}_{\mathcal{L}_{\text{prior}}}\Biggr].
\label{eq:nelbo-full}
\end{align}

Hereafter, we omit $\tau$ in $t(\tau)$ and $s(\tau)$ for brevity. We first derive $\mathcal{L}_{\text{diffusion}}^\infty=\lim_{T\rightarrow\infty}\mathcal{L}_{\text{diffusion}}^T$.
\subsection{Breaking Sequence-Level Diffusion Loss into Token-Level Diffusion Loss}
We breakdown $D_{\mathrm{KL}}\!\left(q_{\alpha_{\mathcal{F}}}(\mathbf{z}_s \mid \mathbf{z}_t, \mathbf{x}) \,\|\, p_{\theta,\hat \alpha_{\mathcal{F}}}(\mathbf{z}_s \mid \mathbf{z}_t)\right)$ into token-wsie KL divergence:

\begin{align*}
& D_{\mathrm{KL}}\!\left(
    q_{\alpha_{\mathcal{F}}}(\mathbf{z}_s \mid \mathbf{z}_t, \mathbf{x})
    \,\big\|\,
    p_{\theta,\hat \alpha_{\mathcal{F}}}(\mathbf{z}_s \mid \mathbf{z}_t)
  \right) \\
&= \sum_{\mathbf{z}_s}
    \biggl(
      q_{\alpha_{\mathcal{F}}}(\mathbf{z}_s \mid \mathbf{z}_t, \mathbf{x})
      \log \frac{
          q_{\alpha_{\mathcal{F}}}(\mathbf{z}_s \mid \mathbf{z}_t, \mathbf{x})
      }{
          p_{\theta,\hat \alpha_{\mathcal{F}}}(\mathbf{z}_s \mid \mathbf{z}_t)
      }
    \biggr) \\
&= \sum_{\mathbf{z}_s}
    \biggl(
      \Bigl( \prod_{j=1}^L q_{\alpha_{\mathcal{F}}}(\mathbf{z}_s^{(j)} \mid \mathbf{z}_t^{(j)}, \mathbf{x}) \Bigr)
      \log
      \frac{
          \prod_{i=1}^L q_{\alpha_{\mathcal{F}}}(\mathbf{z}_s^{(i)} \mid \mathbf{z}_t^{(i)}, \mathbf{x})
      }{
          \prod_{i=1}^L p_{\theta,\hat \alpha_{\mathcal{F}}}(\mathbf{z}_s^{(i)} \mid \mathbf{z}_t)
      }
    \biggr)\quad\because
  \begin{cases}
    q_{\alpha_{\mathcal{F}}}(\mathbf{z}_s \mid \mathbf{z}_t, \mathbf{x})
    = \prod_{j=1}^L q_{\alpha_{\mathcal{F}}}(\mathbf{z}_s^{(j)} \mid \mathbf{z}_t^{(j)}, \mathbf{x}),\\
    p_{\theta,\hat \alpha_{\mathcal{F}}}(\mathbf{z}_s \mid \mathbf{z}_t)
    = \prod_{i=1}^L p_{\theta,\hat \alpha_{\mathcal{F}}}(\mathbf{z}_s^{(i)} \mid \mathbf{z}_t)
  \end{cases}\\
&= \sum_{\mathbf{z}_s}
    \biggl(
      \Bigl( \prod_{j=1}^L q_{\alpha_{\mathcal{F}}}(\mathbf{z}_s^{(j)} \mid \mathbf{z}_t^{(j)}, \mathbf{x}) \Bigr)
      \sum_{i=1}^L
      \log
      \frac{
          q_{\alpha_{\mathcal{F}}}(\mathbf{z}_s^{(i)} \mid \mathbf{z}_t^{(i)}, \mathbf{x})
      }{
          p_{\theta,\hat \alpha_{\mathcal{F}}}(\mathbf{z}_s^{(i)} \mid \mathbf{z}_t)
      }
    \biggr) \\
&= \sum_{\mathbf{z}_s}
    \biggl(\sum_{i=1}^L\biggl(
      \Bigl( \prod_{j=1}^L q_{\alpha_{\mathcal{F}}}(\mathbf{z}_s^{(j)} \mid \mathbf{z}_t^{(j)}, \mathbf{x}) \Bigr)
      \log
      \frac{
          q_{\alpha_{\mathcal{F}}}(\mathbf{z}_s^{(i)} \mid \mathbf{z}_t^{(i)}, \mathbf{x})
      }{
          p_{\theta,\hat \alpha_{\mathcal{F}}}(\mathbf{z}_s^{(i)} \mid \mathbf{z}_t)
      }
    \biggr)\biggr) \\
&= \sum_{i=1}^L
    \sum_{\mathbf{z}_s^{(i)}}\;
    \sum_{\mathbf{z}_s^{(1)}} \cdots \sum_{\mathbf{z}_s^{(i-1)}}
    \sum_{\mathbf{z}_s^{(i+1)}} \cdots \sum_{\mathbf{z}_s^{(L)}}
    \biggl(
      \Bigl( \prod_{j=1}^L q_{\alpha_{\mathcal{F}}}(\mathbf{z}_s^{(j)} \mid \mathbf{z}_t^{(j)}, \mathbf{x}) \Bigr)
      \log
      \frac{
          q_{\alpha_{\mathcal{F}}}(\mathbf{z}_s^{(i)} \mid \mathbf{z}_t^{(i)}, \mathbf{x})
      }{
          p_{\theta,\hat \alpha_{\mathcal{F}}}(\mathbf{z}_s^{(i)} \mid \mathbf{z}_t)
      }
    \biggr)\\
&= \sum_{i=1}^L
    \sum_{\mathbf{z}_s^{(i)}}\;
    \sum_{\mathbf{z}_s^{(1)}} \cdots \sum_{\mathbf{z}_s^{(i-1)}}
    \sum_{\mathbf{z}_s^{(i+1)}} \cdots \sum_{\mathbf{z}_s^{(L)}}
    \biggl(
      q_{\alpha_{\mathcal{F}}}(\mathbf{z}_s^{(i)} \mid \mathbf{z}_t^{(i)}, \mathbf{x})
      \Bigl( \prod_{j\neq i} q_{\alpha_{\mathcal{F}}}(\mathbf{z}_s^{(j)} \mid \mathbf{z}_t^{(j)}, \mathbf{x}) \Bigr)
      \log
      \frac{
          q_{\alpha_{\mathcal{F}}}(\mathbf{z}_s^{(i)} \mid \mathbf{z}_t^{(i)}, \mathbf{x})
      }{
          p_{\theta,\hat \alpha_{\mathcal{F}}}(\mathbf{z}_s^{(i)} \mid \mathbf{z}_t)
      }
    \biggr) \\
&= \sum_{i=1}^L
    \sum_{\mathbf{z}_s^{(i)}}
    \biggl(
      q_{\alpha_{\mathcal{F}}}(\mathbf{z}_s^{(i)} \mid \mathbf{z}_t^{(i)}, \mathbf{x})
      \log
      \frac{
          q_{\alpha_{\mathcal{F}}}(\mathbf{z}_s^{(i)} \mid \mathbf{z}_t^{(i)}, \mathbf{x})
      }{
          p_{\theta,\hat \alpha_{\mathcal{F}}}(\mathbf{z}_s^{(i)} \mid \mathbf{z}_t)
      }
        \underbrace{\sum_{\mathbf{z}_s^{(1)}} \cdots \sum_{\mathbf{z}_s^{(i-1)}}
        \sum_{\mathbf{z}_s^{(i+1)}} \cdots \sum_{\mathbf{z}_s^{(L)}}
        \Bigl( \prod_{j\neq i}
        q_{\alpha_{\mathcal{F}}}(\mathbf{z}_s^{(j)} \mid \mathbf{z}_t^{(j)}, \mathbf{x}) \Bigr)}_{=1\text{ by (*)}}
    \biggr)\\
&= \sum_{i=1}^L
    \sum_{\mathbf{z}_s^{(i)}}
      q_{\alpha_{\mathcal{F}}}(\mathbf{z}_s^{(i)} \mid \mathbf{z}_t^{(i)}, \mathbf{x})
      \log
      \frac{
          q_{\alpha_{\mathcal{F}}}(\mathbf{z}_s^{(i)} \mid \mathbf{z}_t^{(i)}, \mathbf{x})
      }{
          p_{\theta,\hat \alpha_{\mathcal{F}}}(\mathbf{z}_s^{(i)} \mid \mathbf{z}_t)
      } = \sum_{i=1}^L
   D_{\mathrm{KL}}\!\left(
      q_{\alpha_{\mathcal{F}}}(\mathbf{z}_s^{(i)} \mid \mathbf{z}_t^{(i)}, \mathbf{x})
      \,\big\|\,
      p_{\theta,\hat \alpha_{\mathcal{F}}}(\mathbf{z}_s^{(i)} \mid \mathbf{z}_t)
   \right).
\end{align*}
where (*) holds by marginalization (or sum of product):
\begin{align*}
&\sum_{\mathbf{z}_s^{(1)}} \cdots \sum_{\mathbf{z}_s^{(i-1)}}
  \sum_{\mathbf{z}_s^{(i+1)}} \cdots \sum_{\mathbf{z}_s^{(L)}}
  \Bigl( \prod_{j\neq i}
  q_{\alpha_{\mathcal{F}}}(\mathbf{z}_s^{(j)} \mid \mathbf{z}_t^{(j)}, \mathbf{x}) \Bigr)\nonumber\\
&=
\sum_{\mathbf{z}_s^{(1)}} \cdots \sum_{\mathbf{z}_s^{(i-1)}}
  \sum_{\mathbf{z}_s^{(i+1)}} \cdots \sum_{\mathbf{z}_s^{(L)}}
  \Bigl(
  q_{\alpha_{\mathcal{F}}}(\mathbf{z}_s^{(1)} \mid \mathbf{z}_t^{(1)}, \mathbf{x})
  \cdots
  q_{\alpha_{\mathcal{F}}}(\mathbf{z}_s^{(i-1)} \mid \mathbf{z}_t^{(i-1)}, \mathbf{x})\nonumber\\
&\hspace{4.2cm}\cdot
  q_{\alpha_{\mathcal{F}}}(\mathbf{z}_s^{(i+1)} \mid \mathbf{z}_t^{(i+1)}, \mathbf{x})
  \cdots
  q_{\alpha_{\mathcal{F}}}(\mathbf{z}_s^{(L)} \mid \mathbf{z}_t^{(L)}, \mathbf{x})
  \Bigr)\\
&=
\sum_{\mathbf{z}_s^{(1)}}\Bigl(
q_{\alpha_{\mathcal{F}}}(\mathbf{z}_s^{(1)} \mid \mathbf{z}_t^{(1)}, \mathbf{x})
\sum_{\mathbf{z}_s^{(2)}}\Bigl(
q_{\alpha_{\mathcal{F}}}(\mathbf{z}_s^{(2)} \mid \mathbf{z}_t^{(2)}, \mathbf{x})\cdots\nonumber\\
&\hspace{5.2cm}\cdots
\sum_{\mathbf{z}_s^{(i-1)}}\Bigl(
q_{\alpha_{\mathcal{F}}}(\mathbf{z}_s^{(i-1)} \mid \mathbf{z}_t^{(i-1)}, \mathbf{x})
\sum_{\mathbf{z}_s^{(i+1)}}\Bigl(
q_{\alpha_{\mathcal{F}}}(\mathbf{z}_s^{(i+1)} \mid \mathbf{z}_t^{(i+1)}, \mathbf{x})\cdots\nonumber\\
&\hspace{7.6cm}\cdots
\sum_{\mathbf{z}_s^{(L)}} q_{\alpha_{\mathcal{F}}}(\mathbf{z}_s^{(L)} \mid \mathbf{z}_t^{(L)}, \mathbf{x})
\Bigr)\cdots\Bigr)\cdots\Bigr)\\
&=
\Bigl(\sum_{\mathbf{z}_s^{(1)}} q_{\alpha_{\mathcal{F}}}(\mathbf{z}_s^{(1)} \mid \mathbf{z}_t^{(1)}, \mathbf{x})\Bigr)
\cdots
\Bigl(\sum_{\mathbf{z}_s^{(i-1)}} q_{\alpha_{\mathcal{F}}}(\mathbf{z}_s^{(i-1)} \mid \mathbf{z}_t^{(i-1)}, \mathbf{x})\Bigr)\nonumber\\
&\hspace{2.2cm}\cdot
\Bigl(\sum_{\mathbf{z}_s^{(i+1)}} q_{\alpha_{\mathcal{F}}}(\mathbf{z}_s^{(i+1)} \mid \mathbf{z}_t^{(i+1)}, \mathbf{x})\Bigr)
\cdots
\Bigl(\sum_{\mathbf{z}_s^{(L)}} q_{\alpha_{\mathcal{F}}}(\mathbf{z}_s^{(L)} \mid \mathbf{z}_t^{(L)}, \mathbf{x})\Bigr)= 1 \cdots 1= 1,
\end{align*}
where $\sum_{\mathbf{z}_s^{(\ell)}} q_{\alpha_{\mathcal{F}}}(\mathbf{z}_s^{(\ell)} \mid \mathbf{z}_t^{(\ell)}, \mathbf{x})=1$ for every $\ell$ since the definition of $q$ is categorical distribution.

\subsection{Deriving Token-level KL into Closed-Form Equation}
From the previous decomposition, the total KL can be written as the sum of token-wise terms:
\begin{align}
D_{\mathrm{KL}}\!\left(
    q_{\alpha_{\mathcal{F}}}(\mathbf{z}_s \mid \mathbf{z}_t, \mathbf{x})
    \,\big\|\,
    p_{\theta,\hat \alpha_{\mathcal{F}}}(\mathbf{z}_s \mid \mathbf{z}_t)
\right)
= \sum_{i=1}^{L}
D_{\mathrm{KL}}\!\left(
    q_{\alpha_{\mathcal{F}}}(\mathbf{z}_s^{(i)} \mid \mathbf{z}_t^{(i)}, \mathbf{x})
    \,\big\|\,
    p_{\theta,\hat \alpha_{\mathcal{F}}}(\mathbf{z}_s^{(i)} \mid \mathbf{z}_t)
\right).
\label{eq:tokenwise-kl-sum}
\end{align}

For each token $i$, since $\mathbf{z}_t^{(i)} \in \{\mathbf{x}^{(i)}, \mathbf{m}\}$,
we can separate the two cases as:
\begin{align}
D_{\mathrm{KL}}\!\left(
    q_{\alpha_{\mathcal{F}}}(\mathbf{z}_s^{(i)} \mid \mathbf{z}_t^{(i)}, \mathbf{x})
    \,\big\|\,
    p_{\theta,\hat \alpha_{\mathcal{F}}}(\mathbf{z}_s^{(i)} \mid \mathbf{z}_t)
\right)
&=
D_{\mathrm{KL}}\!\left(
    q_{\alpha_{\mathcal{F}}}(\mathbf{z}_s^{(i)} \mid \mathbf{z}_t^{(i)}=\mathbf{x}^{(i)}, \mathbf{x})
    \,\big\|\,
    p_{\theta,\hat \alpha_{\mathcal{F}}}(\mathbf{z}_s^{(i)} \mid \mathbf{z}_t,\mathbf{z}_t^{(i)}=\mathbf{x}^{(i)})
\right)\!
\langle \mathbf{z}_t^{(i)}, \mathbf{x}^{(i)} \rangle
\nonumber\\[4pt]
&\quad +
D_{\mathrm{KL}}\!\left(
    q_{\alpha_{\mathcal{F}}}(\mathbf{z}_s^{(i)} \mid \mathbf{z}_t^{(i)}=\mathbf{m}, \mathbf{x})
    \,\big\|\,
    p_{\theta,\hat \alpha_{\mathcal{F}}}(\mathbf{z}_s^{(i)} \mid \mathbf{z}_t,\mathbf{z}_t^{(i)}=\mathbf{m})
\right)\!
\langle \mathbf{z}_t^{(i)}, \mathbf{m} \rangle.
\label{eq:two-cases}
\end{align}

For breaking the KL term, we denote elements of the arbitrary input domain $\mathcal{I}$ and $\mathcal{\hat I}$ as $u$ and $\hat u$ respectively, \ie, $u\in\mathcal{I}$ and $\hat u\in\mathcal{\hat I}$.

\noindent\textbf{Case 1: $\mathbf{z}_t^{(i)} = \mathbf{x}^{(i)}$.}
In this case, both posteriors collapse to the same categorical atom:
\[
q_{\alpha_{\mathcal{F}}}(\mathbf{z}_s^{(i)} \mid \mathbf{z}_t^{(i)}=\mathbf{x}^{(i)}, \mathbf{x})
= p_{\theta,\hat \alpha_{\mathcal{F}}}(\mathbf{z}_s^{(i)} \mid \mathbf{z}_t,\mathbf{z}_t^{(i)}=\mathbf{x}^{(i)})
= \mathrm{Cat}(\mathbf{z}_s^{(i)};\mathbf{x}^{(i)}),
\]
thus
\begin{align}
D_{\mathrm{KL}}\!\left(
    q_{\alpha_{\mathcal{F}}}(\mathbf{z}_s^{(i)} \mid \mathbf{z}_t^{(i)}=\mathbf{x}^{(i)}, \mathbf{x})
    \,\big\|\,
    p_{\theta,\hat \alpha_{\mathcal{F}}}(\mathbf{z}_s^{(i)} \mid \mathbf{z}_t,\mathbf{z}_t^{(i)}=\mathbf{x}^{(i)})
\right)
= 0.
\end{align}

\noindent\textbf{Case 2: $\mathbf{z}_t^{(i)} = \mathbf{m}$.}
Using the definitions of the forward and reverse posteriors defined earlier, we have:
\begin{align}
q_{\alpha_{\mathcal{F}}}(\mathbf{z}_s^{(i)} = \mathbf{x}^{(i)} \mid \mathbf{z}_t^{(i)} = \mathbf{m}, \mathbf{x})
&= \frac{\alpha_{\mathcal{F}}^{(i)}(u, s) - \alpha_{\mathcal{F}}^{(i)}(u, t)}{1 - \alpha_{\mathcal{F}}^{(i)}(u, t)},\\
q_{\alpha_{\mathcal{F}}}(\mathbf{z}_s^{(i)} = \mathbf{m} \mid \mathbf{z}_t^{(i)} = \mathbf{m}, \mathbf{x})
&= \frac{1 - \alpha_{\mathcal{F}}^{(i)}(u, s)}{1 - \alpha_{\mathcal{F}}^{(i)}(u, t)},\\
p_{\theta,\hat \alpha_{\mathcal{F}}}(\mathbf{z}_s^{(i)} = \mathbf{x}^{(i)} \mid \mathbf{z}_t,\mathbf{z}_t^{(i)} = \mathbf{m})
&= \frac{(\hat \alpha_{\mathcal{F}}^{(i)}(\hat u,s) - \hat \alpha_{\mathcal{F}}^{(i)}(\hat u,t))
\left\langle \mathbf{x}_\theta^{(i)}(\mathbf{z}_t,t), \mathbf{x}^{(i)} \right\rangle}
{1 - \hat \alpha_{\mathcal{F}}^{(i)}(\hat u,t)},\\
p_{\theta,\hat \alpha_{\mathcal{F}}}(\mathbf{z}_s^{(i)} = \mathbf{m} \mid \mathbf{z}_t,\mathbf{z}_t^{(i)} = \mathbf{m})
&= \frac{1 - \hat \alpha_{\mathcal{F}}^{(i)}(\hat u,s)}{1 - \hat \alpha_{\mathcal{F}}^{(i)}(\hat u,t)}.
\end{align}

We now directly compute the KL divergence:
\begin{align}
&D_{\mathrm{KL}}\!\left(
q_{\alpha_{\mathcal{F}}}(\mathbf{z}_s^{(i)} \mid \mathbf{z}_t^{(i)}=\mathbf{m}, \mathbf{x})
\,\big\|\,
p_{\theta,\hat \alpha_{\mathcal{F}}}(\mathbf{z}_s^{(i)} \mid \mathbf{z}_t,\mathbf{z}_t^{(i)}=\mathbf{m})
\right) \\
&= 
\sum_{\mathbf{z}_s^{(i)} \in \{\mathbf{x}^{(i)}, \mathbf{m}\}}
q_{\alpha_{\mathcal{F}}}(\mathbf{z}_s^{(i)} \mid \mathbf{z}_t^{(i)}=\mathbf{m}, \mathbf{x})
\log
\frac{
q_{\alpha_{\mathcal{F}}}(\mathbf{z}_s^{(i)} \mid \mathbf{z}_t^{(i)}=\mathbf{m}, \mathbf{x})
}{
p_{\theta,\hat \alpha_{\mathcal{F}}}(\mathbf{z}_s^{(i)} \mid \mathbf{z}_t,\mathbf{z}_t^{(i)}=\mathbf{m})
}.
\end{align}

Explicitly expanding both terms:
\begin{align}
&D_{\mathrm{KL}}\!\left(
q_{\alpha_{\mathcal{F}}}(\mathbf{z}_s^{(i)} \mid \mathbf{z}_t^{(i)}=\mathbf{m}, \mathbf{x})
\,\big\|\,
p_{\theta,\hat \alpha_{\mathcal{F}}}(\mathbf{z}_s^{(i)} \mid \mathbf{z}_t,\mathbf{z}_t^{(i)}=\mathbf{m})
\right) \nonumber\\
&=
q_{\alpha_{\mathcal{F}}}(\mathbf{z}_s^{(i)} = \mathbf{x}^{(i)} \mid \mathbf{z}_t^{(i)} = \mathbf{m}, \mathbf{x})
\log
\frac{
q_{\alpha_{\mathcal{F}}}(\mathbf{z}_s^{(i)} = \mathbf{x}^{(i)} \mid \mathbf{z}_t^{(i)} = \mathbf{m}, \mathbf{x})
}{
p_{\theta,\hat \alpha_{\mathcal{F}}}(\mathbf{z}_s^{(i)} = \mathbf{x}^{(i)} \mid \mathbf{z}_t,\mathbf{z}_t^{(i)} = \mathbf{m})
}
\nonumber\\
&\quad +
q_{\alpha_{\mathcal{F}}}(\mathbf{z}_s^{(i)} = \mathbf{m} \mid \mathbf{z}_t^{(i)} = \mathbf{m}, \mathbf{x})
\log
\frac{
q_{\alpha_{\mathcal{F}}}(\mathbf{z}_s^{(i)} = \mathbf{m} \mid \mathbf{z}_t^{(i)} = \mathbf{m}, \mathbf{x})
}{
p_{\theta,\hat \alpha_{\mathcal{F}}}(\mathbf{z}_s^{(i)} = \mathbf{m} \mid \mathbf{z}_t,\mathbf{z}_t^{(i)} = \mathbf{m})
}.
\label{eq:kl-2term-start}
\end{align}

Now substitute each probability definition. For the first ratio:
\begin{align}
\frac{
q_{\alpha_{\mathcal{F}}}(\mathbf{z}_s^{(i)} = \mathbf{x}^{(i)} \mid \mathbf{z}_t^{(i)} = \mathbf{m}, \mathbf{x})
}{
p_{\theta,\hat \alpha_{\mathcal{F}}}(\mathbf{z}_s^{(i)} = \mathbf{x}^{(i)} \mid \mathbf{z}_t,\mathbf{z}_t^{(i)} = \mathbf{m})
}
&=
\frac{
\dfrac{\alpha_{\mathcal{F}}^{(i)}(u,s) - \alpha_{\mathcal{F}}^{(i)}(u,t)}{1 - \alpha_{\mathcal{F}}^{(i)}(u,t)}
}{
\dfrac{(\hat \alpha_{\mathcal{F}}^{(i)}(\hat u,s) - \hat \alpha_{\mathcal{F}}^{(i)}(\hat u,t))
\left\langle \mathbf{x}_\theta^{(i)}(\mathbf{z}_t,t), \mathbf{x}^{(i)} \right\rangle}
{1 - \hat \alpha_{\mathcal{F}}^{(i)}(\hat u,t)}
}
\nonumber\\
&=
\frac{
(\alpha_{\mathcal{F}}^{(i)}(u,s) - \alpha_{\mathcal{F}}^{(i)}(u,t))(1 - \hat \alpha_{\mathcal{F}}^{(i)}(\hat u,t))
}{
(1 - \alpha_{\mathcal{F}}^{(i)}(u,t))
(\hat \alpha_{\mathcal{F}}^{(i)}(\hat u,s) - \hat \alpha_{\mathcal{F}}^{(i)}(\hat u,t))
\left\langle \mathbf{x}_\theta^{(i)}(\mathbf{z}_t,t), \mathbf{x}^{(i)} \right\rangle
}.
\label{eq:ratio-1}
\end{align}

For the second ratio:
\begin{align}
\frac{
q_{\alpha_{\mathcal{F}}}(\mathbf{z}_s^{(i)} = \mathbf{m} \mid \mathbf{z}_t^{(i)} = \mathbf{m}, \mathbf{x})
}{
p_{\theta,\hat \alpha_{\mathcal{F}}}(\mathbf{z}_s^{(i)} = \mathbf{m} \mid \mathbf{z}_t,\mathbf{z}_t^{(i)} = \mathbf{m})
}
&=
\frac{
\dfrac{1 - \alpha_{\mathcal{F}}^{(i)}(u,s)}{1 - \alpha_{\mathcal{F}}^{(i)}(u,t)}
}{
\dfrac{1 - \hat \alpha_{\mathcal{F}}^{(i)}(\hat u,s)}{1 - \hat \alpha_{\mathcal{F}}^{(i)}(\hat u,t)}
}
\nonumber\\
&=
\frac{
(1 - \alpha_{\mathcal{F}}^{(i)}(u,s))(1 - \hat \alpha_{\mathcal{F}}^{(i)}(\hat u,t))
}{
(1 - \alpha_{\mathcal{F}}^{(i)}(u,t))(1 - \hat \alpha_{\mathcal{F}}^{(i)}(\hat u,s))
}.
\label{eq:ratio-2}
\end{align}

Substituting \eqref{eq:ratio-1} and \eqref{eq:ratio-2} into \eqref{eq:kl-2term-start}, we have:
\begin{align}
&D_{\mathrm{KL}}\!\left(
q_{\alpha_{\mathcal{F}}}(\mathbf{z}_s^{(i)} \mid \mathbf{z}_t^{(i)}=\mathbf{m}, \mathbf{x})
\,\big\|\,
p_{\theta,\hat \alpha_{\mathcal{F}}}(\mathbf{z}_s^{(i)} \mid \mathbf{z}_t,\mathbf{z}_t^{(i)}=\mathbf{m})
\right)
\nonumber\\[3pt]
&=
\textcolor{red}{\frac{\alpha_{\mathcal{F}}^{(i)}(u,s) - \alpha_{\mathcal{F}}^{(i)}(u,t)}{1 - \alpha_{\mathcal{F}}^{(i)}(u,t)}
\log
\frac{
(\alpha_{\mathcal{F}}^{(i)}(u,s) - \alpha_{\mathcal{F}}^{(i)}(u,t))(1 - \hat \alpha_{\mathcal{F}}^{(i)}(\hat u,t))
}{
(1 - \alpha_{\mathcal{F}}^{(i)}(u,t))
(\hat \alpha_{\mathcal{F}}^{(i)}(\hat u,s) - \hat \alpha_{\mathcal{F}}^{(i)}(\hat u,t))
\langle \mathbf{x}_\theta^{(i)}(\mathbf{z}_t,t), \mathbf{x}^{(i)} \rangle
}}
\nonumber\\[3pt]
&\hspace{7cm} +
\textcolor{blue}{\frac{1 - \alpha_{\mathcal{F}}^{(i)}(u,s)}{1 - \alpha_{\mathcal{F}}^{(i)}(u,t)}
\log
\frac{
(1 - \alpha_{\mathcal{F}}^{(i)}(u,s))(1 - \hat \alpha_{\mathcal{F}}^{(i)}(\hat u,t))
}{
(1 - \alpha_{\mathcal{F}}^{(i)}(u,t))(1 - \hat \alpha_{\mathcal{F}}^{(i)}(\hat u,s))
}}.
\label{eq:kl-token-full}
\end{align}

Finally, since this applies only for masked tokens ($\mathbf{z}_t^{(i)}=\mathbf{m}$),
the overall KL divergence can be expressed as:
\begin{align}
D_{\mathrm{KL}}\!\left(
q_{\alpha_{\mathcal{F}}}(\mathbf{z}_s \mid \mathbf{z}_t, \mathbf{x})
\,\big\|\,
p_{\theta,\hat \alpha_{\mathcal{F}}}(\mathbf{z}_s \mid \mathbf{z}_t)
\right)
= \sum_{i=1}^{L}
\langle \mathbf{z}_t^{(i)}, \mathbf{m} \rangle \;
D_{\mathrm{KL}}\!\left(
q_{\alpha_{\mathcal{F}}}(\mathbf{z}_s^{(i)} \mid \mathbf{z}_t^{(i)}=\mathbf{m}, \mathbf{x})
\,\big\|\,
p_{\theta,\hat \alpha_{\mathcal{F}}}(\mathbf{z}_s^{(i)} \mid \mathbf{z}_t,\mathbf{z}_t^{(i)}=\mathbf{m})
\right).
\label{eq:kl-final-multitoken}
\end{align}

\subsection{Diffusion Loss into Tractable Loss with Infinite Discretization Steps}

The diffusion loss term is

\begin{align}
\mathcal{L}_{\text{diffusion}}^T
&= \mathbb{E}_{q_{\alpha_{\mathcal{F}}}}\bigg[\sum_{\tau=1}^{T}
D_{\mathrm{KL}}\!\left(
q_{\alpha_{\mathcal{F}}}(\mathbf{z}_{s(\tau)} \mid \mathbf{z}_{t(\tau)}, \mathbf{x})
\,\big\|\,
p_{\theta,\hat \alpha_{\mathcal{F}}}(\mathbf{z}_{s(\tau)} \mid \mathbf{z}_{t(\tau)})
\right)\bigg] \\
&= \mathbb{E}_{q_{\alpha_{\mathcal{F}}}}\bigg[T\cdot\sum_{\tau=1}^{T}\frac{1}{T}
D_{\mathrm{KL}}\!\left(
q_{\alpha_{\mathcal{F}}}(\mathbf{z}_{s(\tau)} \mid \mathbf{z}_{t(\tau)}, \mathbf{x})
\,\big\|\,
p_{\theta,\hat \alpha_{\mathcal{F}}}(\mathbf{z}_{s(\tau)} \mid \mathbf{z}_{t(\tau)})
\right)\bigg] \\
&= \mathbb{E}_{q_{\alpha_{\mathcal{F}}}}\bigg[T\mathbb{E}_{t\in\{\frac{2}{T+1},\frac{3}{T+1},\dots,1\}}\Big[
D_{\mathrm{KL}}\!\left(
q_{\alpha_{\mathcal{F}}}(\mathbf{z}_{s} \mid \mathbf{z}_{t}, \mathbf{x})
\,\big\|\,
p_{\theta,\hat \alpha_{\mathcal{F}}}(\mathbf{z}_{s} \mid \mathbf{z}_{t})
\right)\Big]\bigg] \\
&= \mathbb{E}_{q_{\alpha_{\mathcal{F}}}}\bigg[\mathbb{E}_{t\in\{\frac{2}{T+1},\frac{3}{T+1},\dots,1\}}\Big[T\cdot
D_{\mathrm{KL}}\!\left(
q_{\alpha_{\mathcal{F}}}(\mathbf{z}_{s} \mid \mathbf{z}_{t}, \mathbf{x})
\,\big\|\,
p_{\theta,\hat \alpha_{\mathcal{F}}}(\mathbf{z}_{s} \mid \mathbf{z}_{t})
\right)\Big]\bigg] \\
&= \mathbb{E}_{q_{\alpha_{\mathcal{F}}}}\bigg[\mathbb{E}_{t\in\{\frac{2}{T+1},\frac{3}{T+1},\dots,1\}}\Big[\sum_{i=1}^{L}
\langle \mathbf{z}_{t}^{(i)}, \mathbf{m} \rangle
\,T\cdot D_{\mathrm{KL}}\!\left(
q_{\alpha_{\mathcal{F}}}(\mathbf{z}_{s}^{(i)} \mid \mathbf{z}_{t}^{(i)}=\mathbf{m}, \mathbf{x})
\,\big\|\,
p_{\theta,\hat \alpha_{\mathcal{F}}}(\mathbf{z}_{s}^{(i)} \mid \mathbf{z}_{t}, \mathbf{z}_{t}^{(i)}=\mathbf{m})
\right)\Big]\bigg]\\
&= \mathbb{E}_{t\in\{\frac{2}{T+1},\frac{3}{T+1},\dots,1\}}\bigg[\mathbb{E}_{q_{\alpha_{\mathcal{F}}}}\Big[\sum_{i=1}^{L}
\langle \mathbf{z}_{t}^{(i)}, \mathbf{m} \rangle
\,T\cdot D_{\mathrm{KL}}\!\left(
q_{\alpha_{\mathcal{F}}}(\mathbf{z}_{s}^{(i)} \mid \mathbf{z}_{t}^{(i)}=\mathbf{m}, \mathbf{x})
\,\big\|\,
p_{\theta,\hat \alpha_{\mathcal{F}}}(\mathbf{z}_{s}^{(i)} \mid \mathbf{z}_{t}, \mathbf{z}_{t}^{(i)}=\mathbf{m})
\right)\Big]\bigg]
\label{eq:multi-kl-sum}
\end{align}

We will now transform $T\cdot D_{\mathrm{KL}}\!\left(
q_{\alpha_{\mathcal{F}}}(\mathbf{z}_{s}^{(i)} \mid \mathbf{z}_{t}^{(i)}=\mathbf{m}, \mathbf{x})
\,\big\|\,
p_{\theta,\hat \alpha_{\mathcal{F}}}(\mathbf{z}_{s}^{(i)} \mid \mathbf{z}_{t}, \mathbf{z}_{t}^{(i)}=\mathbf{m})
\right)$ into tractable loss when $T\rightarrow\infty$ that corresponds to continuous time. Define $A_T$, $\hat A_T$ as follows:
\begin{align*}
    A_T(u,t)\coloneqq \frac{\alpha_{\mathcal{F}}^{(i)}(u,t-\Delta)-\alpha_{\mathcal{F}}^{(i)}(u,t)}{\Delta(1-\alpha_{\mathcal{F}}^{(i)}(u,t))},\qquad \hat A_T(\hat u,t)\coloneqq \frac{\hat \alpha_{\mathcal{F}}^{(i)}(\hat u,t-\Delta)-\hat \alpha_{\mathcal{F}}^{(i)}(\hat u,t)}{\Delta(1-\hat \alpha_{\mathcal{F}}^{(i)}(\hat u,t))},
\end{align*}
where $\Delta=\frac{1}{T+1}$. Since $\alpha_{\mathcal{F}}^{(i)},\hat \alpha_{\mathcal{F}}^{(i)}$ is $AC([0,1])$ by definition,
\begin{align*}
    \alpha_{\mathcal{F}}^{(i)}(u,t-\Delta)- \alpha_{\mathcal{F}}^{(i)}(u,t) = -\int_{t-\Delta}^t \partial_r\alpha^{(i)}_{\mathcal{F}}(u,r)dr\quad\Rightarrow\quad \lim_{T\rightarrow\infty}\frac{\alpha_{\mathcal{F}}^{(i)}(u,t-\Delta)-\alpha^{(i)}_{\mathcal{F}}(u,t)}{\Delta}=-\partial_t \alpha^{(i)}_{\mathcal{F}}(u,t)\quad \text{a.e.}
\end{align*}
such that 
\begin{align}
    \lim_{T\rightarrow\infty} A_T^{(i)}(u,t)=A^{(i)}(u,t)=\frac{-\partial_t\alpha_{\mathcal{F}}^{(i)}(u,t)}{1-\alpha_{\mathcal{F}}^{(i)}(u,t)},\qquad t\in(0,1]\label{eq:lim_A}
\end{align}
and also $\lim_{T\rightarrow\infty}\hat A_T^{(i)}(\hat u,t)=\hat A^{(i)}(\hat u,t)$.

\noindent\textbf{\textcolor{red}{First term} expansion of \eqref{eq:kl-token-full}.}
\textcolor{red}{The first term of \eqref{eq:kl-token-full}} becomes

\begin{align}
&\textcolor{red}{\frac{\alpha_{\mathcal{F}}^{(i)}(u,s) - \alpha_{\mathcal{F}}^{(i)}(u,t)}{1 - \alpha_{\mathcal{F}}^{(i)}(u,t)}
\log
\frac{
(\alpha_{\mathcal{F}}^{(i)}(u,s) - \alpha_{\mathcal{F}}^{(i)}(u,t))(1 - \hat \alpha_{\mathcal{F}}^{(i)}(\hat u,t))
}{
(1 - \alpha_{\mathcal{F}}^{(i)}(u,t))
(\hat \alpha_{\mathcal{F}}^{(i)}(\hat u,s) - \hat \alpha_{\mathcal{F}}^{(i)}(\hat u,t))
\langle \mathbf{x}_\theta^{(i)}(\mathbf{z}_t,t), \mathbf{x}^{(i)} \rangle
}}\\
&=
\Delta A_T^{(i)}(u,t)
\Big(
\log A_T^{(i)}(u,t)
-\log \hat A_T^{(i)}(\hat u,t)
-\log\langle \mathbf{x}_\theta^{(i)}(\mathbf{z}_t,t),\mathbf{x}^{(i)}\rangle
\Big).
\label{eq:first-term-i}
\end{align}

\noindent\textbf{\textcolor{blue}{Second term} expansion of \eqref{eq:kl-token-full}.}
Similarly, the \textcolor{blue}{second term of \eqref{eq:kl-token-full}} becomes
\begin{align}
\textcolor{blue}{\frac{1 - \alpha_{\mathcal{F}}^{(i)}(u,s)}{1 - \alpha_{\mathcal{F}}^{(i)}(u,t)}
\log
\frac{
(1 - \alpha_{\mathcal{F}}^{(i)}(u,s))(1 - \hat \alpha_{\mathcal{F}}^{(i)}(\hat u,t))
}{
(1 - \alpha_{\mathcal{F}}^{(i)}(u,t))(1 - \hat \alpha_{\mathcal{F}}^{(i)}(\hat u,s))
}}
=(1-\Delta A_T^{(i)}(u,t))\Big(\log(1-\Delta A_T^{(i)}(u,t)) - \log(1-\Delta \hat A_T^{(i)}(\hat u, t))\Big)
\label{eq:second-term-i}
\end{align}
where
\begin{align}
    \log(1-\Delta A_T^{(i)}(u,t)) - \log(1-\Delta \hat A_T^{(i)}(\hat u, t)=-\frac{\Delta}{1-\Delta\zeta_T} (A_T^{(i)}(u,t)-\hat A_T^{(i)}(\hat u,t)),\qquad \zeta_T\in(A_T^{(i)}(u,t),\hat A_T^{(i)}(\hat u,t)),\label{eq:mvt}
\end{align}
holds by the mean value theorem.

\noindent\textbf{Combining both parts.}
With $T\rightarrow\infty$ such that $\Delta=\frac{1}{T+1}\rightarrow0+$, summing \eqref{eq:lim_A},~\eqref{eq:first-term-i}, ~\eqref{eq:second-term-i}, and \eqref{eq:mvt}, the infinitesimal KL for token $i$ becomes
\begin{align}
&\lim_{T\to\infty} T \cdot
D_{\mathrm{KL}}\!\left(
q_{\alpha_{\mathcal{F}}}(\mathbf{z}_s^{(i)} \mid \mathbf{z}_t^{(i)}
=\mathbf{m},\mathbf{x})
\,\big\|\,
p_{\theta,\hat \alpha_{\mathcal{F}}}(\mathbf{z}_s^{(i)} \mid \mathbf{z}_t,\mathbf{z}_t^{(i)}=\mathbf{m})
\right)\\
&=\lim_{T\to\infty} (\Delta\cdot T) \cdot \frac{1}{\Delta}
D_{\mathrm{KL}}\!\left(
q_{\alpha_{\mathcal{F}}}(\mathbf{z}_s^{(i)} \mid \mathbf{z}_t^{(i)}
=\mathbf{m},\mathbf{x})
\,\big\|\,
p_{\theta,\hat \alpha_{\mathcal{F}}}(\mathbf{z}_s^{(i)} \mid \mathbf{z}_t,\mathbf{z}_t^{(i)}=\mathbf{m})
\right)
\\
&=-A^{(i)}
\log\langle \mathbf{x}_\theta^{(i)}(\mathbf{z}_t,t),\mathbf{x}^{(i)}\rangle+
A^{(i)}(\log A^{(i)}-\log\hat A^{(i)})-(A^{(i)}-\hat A^{(i)})
\Big).
\label{eq:limit-i}
\end{align}

\noindent\textbf{Final loss form.}
By \eqref{eq:multi-kl-sum} and \eqref{eq:limit-i}, $\mathcal{L}_{\text{diffusion}}^\infty=\lim_{T\rightarrow\infty}\mathcal{L}_{\text{diffusion}}^T$ gives:

\begin{align}
\mathcal{L}_{\text{diffusion}}^{\infty}
&= \lim_{T\rightarrow\infty}\mathbb{E}_{t\in\{\frac{2}{T+1},\frac{3}{T+1},\dots,1\}}\bigg[\mathbb{E}_{q_{\alpha_{\mathcal{F}}}}\Big[\sum_{i=1}^{L}
\langle \mathbf{z}_{t}^{(i)}, \mathbf{m} \rangle
\,T\cdot D_{\mathrm{KL}}\!\left(
q_{\alpha_{\mathcal{F}}}(\mathbf{z}_{s}^{(i)} \mid \mathbf{z}_{t}^{(i)}=\mathbf{m}, \mathbf{x})
\,\big\|\,
p_{\theta,\hat \alpha_{\mathcal{F}}}(\mathbf{z}_{s}^{(i)} \mid \mathbf{z}_{t}, \mathbf{z}_{t}^{(i)}=\mathbf{m})
\right)\Big]\bigg]\nonumber\\
&=\lim_{\delta\rightarrow0+}\int_{\delta}^{1}
\mathbb{E}_{q_{\alpha_{\mathcal{F}}}}
\Biggl[
\sum_{i=1}^{L}
\langle\mathbf{z}_t^{(i)},\mathbf{m}\rangle
\biggl\{
-A^{(i)}
\log\langle \mathbf{x}_\theta^{(i)}(\mathbf{z}_t,t),\mathbf{x}^{(i)}\rangle+
A^{(i)}(\log A^{(i)}-\log\hat A^{(i)})-(A^{(i)}-\hat A^{(i)})
\Big)\biggr\}\\
&=
\lim_{\delta\rightarrow0+}\int_{\delta}^{1}
\mathbb{E}_{q_{\alpha_{\mathcal{F}}}}
\Biggl[
\sum_{i=1}^{L}
\langle\mathbf{z}_t^{(i)},\mathbf{m}\rangle
\Biggl\{
\underbrace{\frac{\partial_t\alpha_{\mathcal{F}}^{(i)}(u,t)}{1-\alpha_{\mathcal{F}}^{(i)}(u,t)}
\log\langle \mathbf{x}_\theta^{(i)}(\mathbf{z}_t,t),\mathbf{x}^{(i)}\rangle}_{\text{original MDLM loss}}
\label{eq:before_loss}\\
&-
\underbrace{\frac{\partial_t\alpha_{\mathcal{F}}^{(i)}(u,t)}{1-\alpha_{\mathcal{F}}^{(i)}(u,t)}
\Biggl(
\log\frac{-\partial_t\alpha_{\mathcal{F}}^{(i)}(u,t)}{1-\alpha_{\mathcal{F}}^{(i)}(u,t)}
-\log\frac{-\partial_t \hat \alpha_{\mathcal{F}}^{(i)}(\hat u,t)}{1-\hat \alpha_{\mathcal{F}}^{(i)}(\hat u,t)}
\Biggr)-
\Biggl(
\frac{-\partial_t\alpha_{\mathcal{F}}^{(i)}(u,t)}{1-\alpha_{\mathcal{F}}^{(i)}(u,t)}
-
\frac{-\partial_t \hat \alpha_{\mathcal{F}}^{(i)}(\hat u,t)}{1-\hat \alpha_{\mathcal{F}}^{(i)}(\hat u,t)}
\Biggr)}_{\text{is minimized when ${\partial_t\alpha_{\mathcal{F}}(u,t)}/{(1-\alpha_{\mathcal{F}}(u,t))}={\partial_t \hat\alpha_{\mathcal{F}}(\hat u,t)}/{(1-\hat \alpha_{\mathcal{F}}(\hat u,t))}$}}
\Biggr\}
\Biggr] dt.
\label{eq:final_diffusion_loss}
\end{align}
where $A^{(i)}(\log A^{(i)}-\log\hat A^{(i)})-(A^{(i)}-\hat A^{(i)})\ge0$ holds and minimized to 0 when $A=\hat A$. 
Since $A^{(i)}(t), \hat A^{(i)}(t)$ can be unbounded as $t\to0^+$ and the discrete-time objective samples from $t\in\{\frac{2}{T+1},\frac{3}{T+1}\dots,1\}$, we write as 
$\lim_{\delta\to0^+}\int_\delta^1(\cdot)\,dt$; we explain this detail in Appendix~\ref{sec:appendix_improper_NELBO}
and prove in Appendix~\ref{sec:appendix_NELBO_finiteness} that the integrand is integrable near $t=0$, so the limit is well-defined and finite.

\noindent\textbf{Remark. Why $A^{(i)}(\log A^{(i)}-\log\hat A^{(i)})-(A^{(i)}-\hat A^{(i)})\ge0$?}
Fix an index $i$ and write $a:=A^{(i)}>0$ and $b:=\hat A^{(i)}>0$.
Consider the function
\begin{align}
f(a;b) := a(\log a-\log b) - (a-b)
= a\log\frac{a}{b} - a + b .
\end{align}
Let $r := \frac{a}{b}>0$. Then
\begin{align}
f(a;b) = b\big(r\log r - r + 1\big).
\end{align}
Since $b>0$, it suffices to show $g(r):=r\log r - r + 1 \ge 0$ for all $r>0$.
We have
\begin{align}
g'(r) = \log r,\qquad g''(r) = \frac{1}{r}>0 \ \ (r>0),
\end{align}
so $g$ is strictly convex on $(0,\infty)$ and its unique minimizer satisfies $g'(r)=0$, i.e., $r=1$.
Evaluating at $r=1$ gives $g(1)=0$, hence $g(r)\ge 0$ for all $r>0$, with equality if and only if $r=1$.
Therefore $f(a;b)\ge 0$ for all $a,b>0$, and the minimum $0$ is attained if and only if $a=b$, i.e., $A^{(i)}=\hat A^{(i)}$.

\subsection{Prior Loss}
Recall that the prior loss is given as follows:
\begin{align}
    \mathcal{L}_{\text{prior}}=\mathbb{E}_{q_{\alpha_{\mathcal{F}}}}[D_{\mathrm{KL}}\!\bigl(
    q_{\alpha_{\mathcal{F}}}(\mathbf{z}_{t(T)} \mid \mathbf{x})
    \,\|\, 
    p_{\theta,\hat \alpha_{\mathcal{F}}}(\mathbf{z}_{t(T)})]
\bigr)\nonumber
\end{align}
Since $t(\tau)=(\tau+1)/(T+1)$, $t(T)$ becomes 1. Therefore, $p_{\theta,\hat \alpha_{\mathcal{F}}}(\mathbf{z}_{t(T)})$ just becomes prior distribution, that is,
\begin{gather*}
    p_{\theta,\hat \alpha_{\mathcal{F}}}(\mathbf{z}_{t(T)}^{(i)})=p_{\theta,\hat \alpha_{\mathcal{F}}}(\mathbf{z}_{1}^{(i)})=\text{Cat}(\mathbf{m}).
\end{gather*}
Furthermore, substituting $t=1$ into forward process $q_{\alpha_{\mathcal{F}}}(\mathbf{z}_t^{(i)} \mid \mathbf{x}) = 
\mathrm{Cat}\!\left(\,
\alpha_{\mathcal{F}}^{(i)}(\cdot,t)\mathbf{x}^{(i)} 
+ (1 - \alpha_{\mathcal{F}}^{(i)}(\cdot,t))\mathbf{m}\right)$ gives:
\begin{gather*}
    q_{\alpha_{\mathcal{F}}}(\mathbf{z}_1^{(i)} \mid \mathbf{x})=\text{Cat}(\mathbf{m}),
\end{gather*}
where $\alpha^{(i)}_{\mathcal{F}}(\cdot,1)=0$ by the definition of \scheduler{}. Since the above equations hold for every $i$, prior loss becomes zero:
\begin{align}
    \mathcal{L}_{\text{prior}}=\mathbb{E}_{q_{\alpha_{\mathcal{F}}}}[D_{\mathrm{KL}}\!\bigl(
    q_{\alpha_{\mathcal{F}}}(\mathbf{z}_{t(T)} \mid \mathbf{x})
    \,\|\, 
    p_{\theta,\hat \alpha_{\mathcal{F}}}(\mathbf{z}_{t(T)})]
\bigr)=0.\label{eq:final_prior_loss}
\end{align}

\subsection{Reconstruction Loss}\label{sec:appendix_recon}

Recall that the reconstruction loss is given as follows:
\begin{align*}
    \mathcal{L}_{\text{reconstruction}}=\mathbb{E}_{q_{\alpha_{\mathcal{F}}}}[-\log p_{\theta,\hat \alpha_{\mathcal{F}}}(\mathbf{x} \mid \mathbf{z}_{t(0)})].
\end{align*}
Since the expectation is conditioned on $q_{\alpha_{\mathcal{F}}}$, the sampling of $\mathbf{z}_{t(0)}$ is conducted as follows:
\begin{gather*}
    \mathbf{z}_{t(0)}^{(i)}\sim \mathrm{Cat}\!\left(\,
\alpha_{\mathcal{F}}^{(i)}(\cdot,t(0))\mathbf{x}^{(i)} 
+ (1 - \alpha_{\mathcal{F}}^{(i)}(\cdot,t(0)))\mathbf{m}\right).
\end{gather*}
Since $t(0)=1/(T+1)$ and by definition of \scheduler{},
in the continuous time case with $T\rightarrow\infty$, 
$\alpha_{\mathcal{F}}$ becomes 1:
\begin{gather*}
    \lim_{T\rightarrow\infty}\alpha^{(i)}_{\mathcal{F}}(\cdot,t(0))=\alpha_{\mathcal{F}}^{(i)}(\cdot,0)=1,
\end{gather*}
where $\alpha_{\mathcal{F}[\mathcal{I}]}^{(i)}(u,\cdot)\in AC([0,1])$ and $\alpha_{\mathcal{F}[\mathcal{I}]}^{(i)}(u,0)=1$ hold for all $u \in\mathcal{I}$ and $i\in[1,\dots,L]$ by definition. 
Therefore,  $\lim_{T\rightarrow\infty}\mathbf{z}_{t(0)}=\mathbf{x}$ holds:
\begin{gather*}
    \mathbf{z}_{t(0)}^{(i)}\sim \lim_{T\rightarrow\infty}\mathrm{Cat}\!\left(\,
\alpha_{\mathcal{F}}^{(i)}(\cdot,t(0))\mathbf{x}^{(i)} 
+ (1 - \alpha_{\mathcal{F}}^{(i)}(\cdot,t(0)))\mathbf{m}\right)\\
\Rightarrow \mathbf{z}_{t(0)}^{(i)}\sim \text{Cat}(\mathbf{x}^{(i)})
\Rightarrow\mathbf{z}_{t(0)}^{(i)}=\mathbf{x}^{(i)}.
\end{gather*}
Furthermore, by carry-over unmasking of SUBS parametrization, $p_{\theta,\hat\alpha_{\mathcal{F}}}(\mathbf{x}^{(i)}|\mathbf{z}_{t(0)})=p_{\theta,\hat\alpha_{\mathcal{F}}}(\mathbf{x}^{(i)}|\mathbf{x})=1$ for all $i\in[1,\dots,L]$.
Combining the above results, the reconstruction loss in continuous time becomes zero:
\begin{align}
    \mathcal{L}_{\text{reconstruction}}=\mathbb{E}_{q_{\alpha_{\mathcal{F}}}}[-\log p_{\theta,\hat \alpha_{\mathcal{F}}}(\mathbf{x} \mid \mathbf{z}_{t(0)})]=0.\label{eq:final_reconstruction_loss}
\end{align}
Note that MDLM~\citep{sahoo2024simple} and BD3LM~\citep{arriola2025bd3lm} follow same derivation process with $\alpha_{\text{mdlm}}(t(0))=T/(T+1)$.

\subsection{Final NELBO Objective}\label{sec:appendix_improper_NELBO}
Combining all the results derived above, the final NELBO objective in continuous time can be rewritten as:
\begin{align}
&-\log p_{\theta,\hat \alpha_{\mathcal{F}}}(\mathbf{x})
\le\underbrace{\mathcal{L}_{\text{diffusion}}^\infty}_{\eqref{eq:final_diffusion_loss}}+\underbrace{\mathcal{L}_{\text{prior}}}_{=0\text{ from }\eqref{eq:final_prior_loss}}+\underbrace{\mathcal{L}_{\text{reconstruction}}}_{=0\text{ from}~\eqref{eq:final_reconstruction_loss}}\\
&
=\lim_{\delta\rightarrow0+}\int_{\delta}^{1}
\mathbb{E}_{q_{\alpha_{\mathcal{F}}}}
\Biggl[
\sum_{i=1}^{L}
\langle\mathbf{z}_t^{(i)},\mathbf{m}\rangle
\biggl\{
-A^{(i)}
\log\langle \mathbf{x}_\theta^{(i)}(\mathbf{z}_t,t),\mathbf{x}^{(i)}\rangle+
A^{(i)}(\log A^{(i)}-\log\hat A^{(i)})-(A^{(i)}-\hat A^{(i)})
\Big)
\biggr\}
\Biggr] dt.\label{eq:ewfwef}\\
&=\int_{0}^{1}
\underbrace{\mathbb{E}_{q_{\alpha_{\mathcal{F}}}}
\Biggl[
\sum_{i=1}^{L}
\langle\mathbf{z}_t^{(i)},\mathbf{m}\rangle
\biggl\{
-A^{(i)}
\log\langle \mathbf{x}_\theta^{(i)}(\mathbf{z}_t,t),\mathbf{x}^{(i)}\rangle+
A^{(i)}(\log A^{(i)}-\log\hat A^{(i)})-(A^{(i)}-\hat A^{(i)})
\Big)
\biggr\}}_{\coloneqq\ell(t)}
\Biggr] dt,
\end{align}
where $A^{(i)}(\cdot,t)$ and $\hat A^{(i)}(\cdot,t)$ are defined only for $t\in(0,1]$; yet, at $t=0$, the forward process is deterministic since $\alpha_{\mathcal F}^{(i)}(u,0)=1$ implies
$q_{\alpha_{\mathcal F}}(\mathbf{z}_0\mid \mathbf{x})=\delta_{\mathbf{x}}$, so that
$\langle \mathbf{z}_0^{(i)},\mathbf{m}\rangle=0$ almost surely for all $i\in[L]$.
Hence it is natural to extend the integrand to $[0,1]$ by setting its value at $t=0$ to be $0$
without defining $A^{(i)}(0)$ or $\hat A^{(i)}(0)$.
However, since the coefficient $A^{(i)}(\cdot,t)=-\partial_t\alpha^{(i)}(\cdot,t)/(1-\alpha^{(i)}(\cdot,t))$ can diverge as $t\to0+$ when $\alpha_{\mathcal F}(0)=1$,
 we will separately prove that $\int_0^1 \ell(t)\,dt<\infty$ in the following section.

\subsection{Finiteness of NELBO}\label{sec:appendix_NELBO_finiteness}
We show under what conditions the integral over $[0,1]$ is finite.
First, we can make a reasonable assumption \footnote{Since the composition of finite linear transformations and continuous activation functions results in a continuous mapping, a neural network with a finite number of neurons and finite weights will always produce a finite output for any finite input.} that the output of the diffusion network $\theta$ cannot take the values $+\infty$ or $-\infty$.
Furthermore, since $\mathbf{x}_\theta^{(i)}(\mathbf{z},t)\in\Delta^{V+1}$ is produced by a neural network followed by a softmax head, \ie $\mathbf{x}_\theta^{(i)}(\mathbf{z},t)=\mathrm{Softmax}(\theta^{(i)}(\mathbf{z},t))$, for any fixed $\theta$, we can prove that $-\log\langle \mathbf{x}_\theta^{(i)}(\mathbf{z},t), \mathbf{x}^{(i)}\rangle$ is finite:
\begin{align*}
-\log\langle \mathbf{x}_\theta^{(i)}(\mathbf{z},t), \mathbf{x}^{(i)}\rangle &=-\langle\theta^{(i)}(\mathbf{z},t),\mathbf{x}^{(i)}\rangle +\log\sum_{\mathbf{x}'\in\mathcal{X}}\exp(\langle\theta^{(i)}(\mathbf{z},t),\mathbf{x}'\rangle)\qquad\because\text{log-softmax}\\
&\le-\langle\theta^{(i)}(\mathbf{z},t),\mathbf{x}^{(i)}\rangle +\log V\max_{\mathbf{x}'\in\mathcal{X}}\exp(\langle\theta^{(i)}(\mathbf{z},t),\mathbf{x}'\rangle)\\
&=-\langle\theta^{(i)}(\mathbf{z},t),\mathbf{x}^{(i)}\rangle + \max_{\mathbf{x}'\in\mathcal{X}}\langle\theta^{(i)}(\mathbf{z},t),\mathbf{x}'\rangle+\log V\\
&<+\infty,\qquad \forall i\in[L], \forall\mathbf{x}\in\mathcal{X}^L,\forall\mathbf{z}\in\mathcal{S}(\mathbf{x}),\forall j \in[L]\}, \forall t\in[0,1],
\end{align*}
where $\mathcal{S}(\mathbf{x})$ is set of all possible masked sequence induced from $\mathbf{x}$ (Definition~\ref{def:masked_sequence_set}).
Since $-\log\langle \mathbf{x}_\theta^{(i)}(\mathbf{z},t), \mathbf{x}^{(i)}\rangle$ is finite, there exists uniform bound $\varepsilon_\theta$ for fixed $\theta$:
\begin{align}
-\log\langle \mathbf{x}_\theta^{(i)}(\mathbf{z},t), \mathbf{x}^{(i)}\rangle \le \varepsilon_\theta,
\quad \forall i\in[L], \forall\mathbf{x}\in\mathcal{X}^L,\forall\mathbf{z}\in\mathcal{S}(\mathbf{x}), \forall t\in[0,1].
\end{align}
where $\mathcal{X}^L$ and $\mathcal{S}(\mathbf{x})$ are finite.

\begin{proposition}[Finiteness of the \framework{} NELBO]\label{prop:finiteness}
Assume the diffusion model output is finite everywhere (as is typical for neural networks).
Fix arbitrary $u$ and $\hat u$, and assume that $A^{(i)}(u,t)$ and $\hat A^{(i)}(\hat u,t)$ are well-defined such that there exists a constant $k\ge 1$ satisfying:
\begin{align*}
    \frac{A^{(i)}(u,t)}{\hat A^{(i)}(\hat u,t)}\in[k^{-1},k],
    \qquad \forall\, t\in(0,1],\;\forall\, i\in[L].
\end{align*}
Then the \framework{} NELBO is finite:
\begin{align*}
\mathcal{L}_{\mathrm{\framework{}}}
&=
\int_{0}^{1}
\mathbb{E}_{q_{\alpha_{\mathcal{F}}}}
\biggl[
\sum_{i=1}^{L}
\langle\mathbf{z}_t^{(i)},\mathbf{m}\rangle
\biggl\{
-A^{(i)}
\log\langle \mathbf{x}_\theta^{(i)}(\mathbf{z}_t,t),\mathbf{x}^{(i)}\rangle
+A^{(i)}\bigl(\log A^{(i)}-\log\hat A^{(i)}\bigr)
-\bigl(A^{(i)}-\hat A^{(i)}\bigr)
\biggr\}
\biggr]dt
<\infty,
\end{align*}
where we omit $u$ and $\hat u$ in $A^{(i)}$ and $\hat A^{(i)}$ for brevity.
\end{proposition}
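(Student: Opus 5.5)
The plan is to bound the integrand $\ell(t)$ pointwise by a constant multiple of $A^{(i)}(u,t)$ times a masking probability. We then show that $\int_0^1 \mathbb{E}_{q_{\alpha_{\mathcal F}}}[\langle\mathbf{z}_t^{(i)},\mathbf{m}\rangle]\,A^{(i)}(u,t)\,dt$ is finite. This works because the factor $1-\alpha^{(i)}_{\mathcal F}(u,t)$ in the denominator of $A^{(i)}$ exactly cancels against the masking probability.

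\textbf{Step 1 (pointwise bounds).} For a masked position we use the uniform bound $-\log\langle \mathbf{x}_\theta^{(i)},\mathbf{x}^{(i)}\rangle\le\varepsilon_\theta$ established just before the proposition. The main term is therefore at most $\varepsilon_\theta A^{(i)}$. For the velocity term, write $a=A^{(i)}$, $b=\hat A^{(i)}$ and $r=a/b\in[k^{-1},k]$. Then
\[
a\log\tfrac{a}{b}-a+b=a\Bigl(\log r-1+r^{-1}\Bigr)\le a\,C_k,
\]
where $C_k:=\max_{r\in[k^{-1},k]}(\log r-1+1/r)<\infty$ by continuity on a compact interval. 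We also know this term is $\ge0$ by the Remark. Hence the integrand in braces is nonnegative and bounded by $(\varepsilon_\theta+C_k)A^{(i)}(u,t)$. Nonnegativity also justifies exchanging sum, expectation and integral via Tonelli, and makes the improper integral $\lim_{\delta\to0^+}\int_\delta^1$ coincide with $\int_0^1$.

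\textbf{Step 2 (cancellation).} Under the forward process, $\mathbb{E}_{q_{\alpha_{\mathcal F}}}[\langle\mathbf{z}_t^{(i)},\mathbf{m}\rangle]=1-\alpha^{(i)}_{\mathcal F}(u,t)$. Since $u$ is fixed, $\alpha^{(i)}_{\mathcal F}$ does not depend on $\mathbf{z}_t$. Therefore
\[
\mathcal{L}_{\mathrm{\framework{}}}\le(\varepsilon_\theta+C_k)\sum_{i=1}^L\int_0^1 (1-\alpha^{(i)}_{\mathcal F}(u,t))\frac{-\partial_t\alpha^{(i)}_{\mathcal F}(u,t)}{1-\alpha^{(i)}_{\mathcal F}(u,t)}\,dt=(\varepsilon_\theta+C_k)\sum_{i=1}^L\int_0^1-\partial_t\alpha^{(i)}_{\mathcal F}(u,t)\,dt.
\]

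\textbf{Step 3 (absolute continuity).} Because $\alpha^{(i)}_{\mathcal F}(u,\cdot)\in AC([0,1])$, its derivative is integrable and the fundamental theorem of calculus gives $\int_0^1-\partial_t\alpha^{(i)}_{\mathcal F}\,dt=\alpha^{(i)}_{\mathcal F}(u,0)-\alpha^{(i)}_{\mathcal F}(u,1)=1$. So $\mathcal{L}_{\mathrm{\framework{}}}\le L(\varepsilon_\theta+C_k)<\infty$.

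The main point needing care is Step 2, the cancellation near $t=0$, where $A^{(i)}$ may blow up. It must be argued through the exact masking probability rather than a bound on $A^{(i)}$ itself. A secondary issue is that when $u$ is data-dependent, the expectation over $\mathbf{z}_t$ only involves the fixed $u$. If $\hat u$ depends on $\mathbf{z}_t$, the ratio hypothesis is assumed uniformly in $\hat u$, so $C_k$ is uniform.
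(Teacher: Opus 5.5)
Your proposal is correct and follows essentially the paper's route. Both arguments bound the main term by $\varepsilon_\theta A^{(i)}$, bound the velocity term by $A^{(i)}(\log r-1+1/r)\le M A^{(i)}$ for $r\in[k^{-1},k]$, and then use $(1-\alpha^{(i)}_{\mathcal F})A^{(i)}=-\partial_t\alpha^{(i)}_{\mathcal F}$ with absolute continuity to integrate to $1$. The one difference is that you bound the braces uniformly in $\mathbf{z}_t$ before taking the expectation, so the exact masking marginal appears directly, whereas the paper drops the other coordinates' probabilities and sums over all of $\mathcal{S}(\mathbf{x})$, picking up a harmless $2^L$ factor; since $\mathcal{S}(\mathbf{x})$ is finite, your uniform-$k$ reading in $\hat u$ holds automatically in either version.
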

\begin{proof}
To prove the finiteness of NELBO, we will upper bound integrand $\ell(t)$ with an equation without the expectation term. Before that, we recall the input domain of $\alpha_{\mathcal{F}}$ and $\hat\alpha_{\mathcal{F}}$. Note that $\alpha_{\mathcal{F}}$ is the scheduler for forward and true posterior, \ie, $q_{\alpha_{\mathcal{F}}}(\mathbf{z}_t|\mathbf{x})$ and $q_{\alpha_{\mathcal{F}}}(\mathbf{z}_s|\mathbf{x},\mathbf{z}_t)$ respectively. This means that $\mathcal{I}$ only can include $\mathcal{X}$: when the true sequence $\mathbf{x}$ is given, every $\alpha_{\mathcal{F}}$ is fixed such that the map $r\mapsto \alpha_{\mathcal{F}}^{(i)}( u,r)$ is evaluated. Furthermore, $\hat \alpha_{\mathcal{F}}$ is the scheduler for parametrized reverse posterior, \ie, $p_{\theta,\hat\alpha_{\mathcal{F}}}(\mathbf{z}_s|\mathbf{z}_t)$. This means that $\mathcal{\hat I}$ can include $\mathcal{Z}_t$ and $\theta$: when they are fixed, every $\hat\alpha_{\mathcal{F}}$ is fixed such that the map $r\mapsto \hat \alpha_{\mathcal{F}}^{(i)}( u,r)$ is evaluated. When deriving $\ell(t)$, we will omit $u$ term in $A$ and $\alpha_{\mathcal{F}}$ for brevity, but note that $A$ and $\alpha$ cannot be exists alone without $u$ (which can be $\mathbf{x}$). This also should be applied to $\hat A$ and $\hat \alpha_\mathcal{F}$, they cannot stand alone without $\hat u$ (which can be $\mathbf{z}$  or $\mathbf{\theta}$). But also note that $\mathbf{x}$ and $\mathbf{\theta}$ are actually given for $\ell(t)$ since we are measuring $\log p_{\theta,\alpha_\mathcal{F}}(\mathbf{x})$; so that we only consider that $\mathbf{z}$ should be given when including $\hat A$ and $\hat \alpha_\mathcal{F}$. Within such property, we transform $\ell(t)$ as follows:

\begin{align*}
\ell(t)
&:=
\mathbb{E}_{q_{\alpha_{\mathcal{F}}}}
\Biggl[
\sum_{i=1}^{L}
\langle\mathbf{z}_t^{(i)},\mathbf{m}\rangle
\biggl\{
-A^{(i)}
\log\langle \mathbf{x}_\theta^{(i)}(\mathbf{z}_t,t),\mathbf{x}^{(i)}\rangle+
A^{(i)}(\log A^{(i)}-\log\hat A^{(i)})-(A^{(i)}-\hat A^{(i)})
\Big)\biggr\}
\Biggr]\\
&=
\sum_{\mathbf{z}\in\mathcal{S}(\mathbf{x})} q_{\alpha_{\mathcal{F}}}(\mathbf{z}_t=\mathbf{z}|\mathbf{x})
\Biggl[
\sum_{i=1}^{L}
\langle\mathbf{z}^{(i)},\mathbf{m}\rangle
\biggl\{
-A^{(i)}
\log\langle \mathbf{x}_\theta^{(i)}(\mathbf{z},t),\mathbf{x}^{(i)}\rangle+
A^{(i)}(\log A^{(i)}-\log\hat A^{(i)})-(A^{(i)}-\hat A^{(i)})
\Big)\biggr\}
\Biggr]\\
&=
\sum_{\mathbf{z}\in\mathcal{S}(\mathbf{x})} \prod_{j=1}^L q_{\alpha_{\mathcal{F}}}(\mathbf{z}_t^{(j)}=\mathbf{z}^{(j)}|\mathbf{x})\hspace{3cm}\because\text{Forward process is independent for every indices}\\
&\hspace{3cm}\cdot \Biggl[
\sum_{i=1}^{L}
\langle\mathbf{z}^{(i)},\mathbf{m}\rangle
\biggl\{
-A^{(i)}
\log\langle \mathbf{x}_\theta^{(i)}(\mathbf{z},t),\mathbf{x}^{(i)}\rangle+
A^{(i)}(\log A^{(i)}-\log\hat A^{(i)})-(A^{(i)}-\hat A^{(i)})
\Big)\biggr\}
\Biggr]\\
&=
\sum_{\mathbf{z}\in\mathcal{S}(\mathbf{x})} 
\Biggl[
\sum_{i=1}^{L}\Biggr\{
\Big((1-\alpha_{\mathcal{F}}^{(i)})\langle\mathbf{z}^{(i)},\mathbf{m}\rangle\Big)\cdot\Big(\prod_{j\neq i} q_{\alpha_{\mathcal{F}}}(\mathbf{z}_t^{(j)}=\mathbf{z}^{(j)}|\mathbf{x})\Big)\\
&\hspace{4cm}\biggl(
-A^{(i)}
\log\langle \mathbf{x}_\theta^{(i)}(\mathbf{z},t),\mathbf{x}^{(i)}\rangle+
A^{(i)}(\log A^{(i)}-\log\hat A^{(i)})-(A^{(i)}-\hat A^{(i)})
\Big)\biggr)\Biggr\}
\Biggr]\\
&\le
\sum_{\mathbf{z}\in\mathcal{S}(\mathbf{x})} 
\Biggl[
\underbrace{\sum_{i=1}^{L}
(1-\alpha_{\mathcal{F}}^{(i)})
\biggl\{
-A^{(i)}
\log\langle \mathbf{x}_\theta^{(i)}(\mathbf{z},t),\mathbf{x}^{(i)}\rangle+
A^{(i)}(\log A^{(i)}-\log\hat A^{(i)})-(A^{(i)}-\hat A^{(i)})
\Big)\biggr\}}_{\coloneqq\hat\ell(\mathbf{z},t)}
\Biggr].
\end{align*}
We upper bound $\ell(t)$ with $\mathbf{z}^*=\argmax_{\mathbf{z}\in\mathcal{S}(\mathbf{x})}\int_0^1\hat\ell(\mathbf{z},t)dt$ as follows:
\begin{align}
    -\log p_{\theta,\alpha_{\mathcal{F}}}(\mathbf{x})\le\int_0^1 \ell(t) dt \le \int_0^1 \sum_{\mathbf{z}\in\mathcal{S}(\mathbf{x})} \hat\ell(\mathbf{z},t) dt\le 2^L\int_0^1\hat\ell(\mathbf{z}^*,t)dt.\label{eq:overall_bound}
\end{align}
Now our goal is to show that $\int_0^1\hat\ell(\mathbf{z}^*,t)dt$ is finite. Divide $\hat \ell(\mathbf{z}^*,t)$ as follows:
\begin{gather*}
    \hat \ell_1(\mathbf{z}^*,t)=-\sum_{i=1}^{L}
(1-\alpha_{\mathcal{F}}^{(i)})
A^{(i)}
\log\langle \mathbf{x}_\theta^{(i)}(\mathbf{z},t),\mathbf{x}^{(i)}\rangle dt,\\
\hat \ell_2(\mathbf{z}^*,t)=\sum_{i=1}^{L}
(1-\alpha_{\mathcal{F}}^{(i)})
\Big(A^{(i)}(\log A^{(i)}-\log\hat A^{(i)})-(A^{(i)}-\hat A^{(i)})
\Big)dt
\end{gather*}
such that $\hat \ell(\mathbf{z}^*,t)=\hat \ell_1(\mathbf{z}^*,t)+\hat \ell_2(\mathbf{z}^*,t)$ 
We first show that $\int_0^1\hat\ell_1(\mathbf{z}^*,t)dt$ is finite:
\begin{align}
    \int_{t=0}^1\sum_{i=1}^{L}
-(1-\alpha_{\mathcal{F}}^{(i)})
A^{(i)}
\log\langle \mathbf{x}_\theta^{(i)}(\mathbf{z}^*,t),\mathbf{x}^{(i)}\rangle dt
=\sum_{i=1}^{L}\int_{t=0}^1
-(1-\alpha_{\mathcal{F}}^{(i)})
A^{(i)}
\log\langle \mathbf{x}_\theta^{(i)}(\mathbf{z}^*,t),\mathbf{x}^{(i)}\rangle dt\nonumber\\
=\sum_{i=1}^{L}\int_{\alpha_{\mathcal{F}}^{(i)}=1}^0
\log\langle \mathbf{x}_\theta^{(i)}(\mathbf{z}^*,t),\mathbf{x}^{(i)}\rangle d\alpha_{\mathcal{F}}^{(i)}
=\sum_{i=1}^{L}\int_{\alpha_{\mathcal{F}}^{(i)}=0}^1
-\log\langle \mathbf{x}_\theta^{(i)}(\mathbf{z}^*,t),\mathbf{x}^{(i)}\rangle d\alpha_{\mathcal{F}}^{(i)}
\le
 L\epsilon_\theta\label{eq:l1_max}
\end{align}

We now show that $\int_0^1\hat\ell_2(\mathbf{z}^*,t)\,dt$ is finite.
Fix $i\in[L]$. For $t\in(0,1]$, define
\[
A^{(i)}(t):=-\frac{\partial_t\alpha_{\mathcal F}^{(i)}(u,t)}{1-\alpha_{\mathcal F}^{(i)}(u,t)},
\qquad
\hat A^{(i)}(t):=-\frac{\partial_t\hat\alpha_{\mathcal F}^{(i)}(\hat u,t)}{1-\hat\alpha_{\mathcal F}^{(i)}(\hat u,t)},
\]
and
\[
r^{(i)}(t):=\frac{A^{(i)}(t)}{\hat A^{(i)}(t)} > 0,
\qquad
\kappa(r):=r\log r-(r-1)\ge 0.
\]
Then for $t\in(0,1]$,
\[
(1-\alpha_{\mathcal F}^{(i)}(u,t))
\Big(A^{(i)}(\log A^{(i)}-\log\hat A^{(i)})-(A^{(i)}-\hat A^{(i)})\Big)
=
(1-\alpha_{\mathcal F}^{(i)}(u,t))\,\hat A^{(i)}(t)\,\kappa(r^{(i)}(t)).
\]

By the assumption, for fixed $u$ and $\hat u$, there exists $k\ge 1$ such that
$r^{(i)}(t)\in[k^{-1},k]$ for all $t\in(0,1]$.
Since the map $r\mapsto \log r-1+\frac{1}{r}$ is continuous on $(0,\infty)$ and
$r$ is bounded on $[k^{-1},k]$; let
\[
M:=\sup_{r\in[k^{-1},k]}\Bigl(\log r-1+\frac{1}{r}\Bigr)<\infty.
\]
Using $A^{(i)}(t)=r^{(i)}(t)\hat A^{(i)}(t)$, we have for all $t\in(0,1]$,
\begin{align*}
0
&\le (1-\alpha_{\mathcal F}^{(i)}(u,t))\,\hat A^{(i)}(t)\,\kappa(r^{(i)}(t)) \\
&= (1-\alpha_{\mathcal F}^{(i)}(u,t))\,A^{(i)}(t)\,\frac{\kappa(r^{(i)}(t))}{r^{(i)}(t)} \\
&= \bigl(-( \partial_t\alpha_{\mathcal F}^{(i)}(u,t))\bigr)
\Bigl(\log r^{(i)}(t)-1+\frac{1}{r^{(i)}(t)}\Bigr) \\
&\le M\bigl(-\partial_t\alpha_{\mathcal F}^{(i)}(u,t)\bigr),
\end{align*}
where we used $(1-\alpha_{\mathcal F}^{(i)}(u,t))A^{(i)}(t)=-\partial_t\alpha_{\mathcal F}^{(i)}(u,t)$. Since $\alpha_{\mathcal F}^{(i)}(u,\cdot)\in AC([0,1])$ with
$\alpha_{\mathcal F}^{(i)}(u,0)=1$ and $\alpha_{\mathcal F}^{(i)}(u,1)=0$,
the fundamental theorem of calculus for absolutely continuous functions yields
\[
\int_0^1 -\partial_t\alpha_{\mathcal F}^{(i)}(u,t)\,dt
=\alpha_{\mathcal F}^{(i)}(u,0)-\alpha_{\mathcal F}^{(i)}(u,1)=1.
\]
Therefore,
\begin{align}
\int_0^1
(1-\alpha_{\mathcal F}^{(i)}(u,t))\,\hat A^{(i)}(t)\,\kappa(r^{(i)}(t))\,dt
\le
M\int_0^1 -\partial_t\alpha_{\mathcal F}^{(i)}(u,t)\,dt
= M
<\infty.\label{eq:l2_max}
\end{align}
Finally, by \eqref{eq:overall_bound}--\eqref{eq:l2_max}, the continuous-time NELBO of \framework{} is finite on $t\in[0,1]$ provided that there exists $k\ge 1$ such that
$r^{(i)}(t)\in[k^{-1},k]$ for all $t\in(0,1]$ and all $i\in[L]$.
\end{proof}
This condition should not be viewed as an artificial assumption introduced solely for the proof; rather, it provides practical intuition on how schedulers should be designed so that the NELBO remains a valid (finite) training objective for masked diffusion models.
For example, under the linear scheduler of MDLM, both $\alpha$ and $\hat\alpha$ take the form $1-t$, so $r^{(i)}(t)\equiv 1$ and the finiteness of the NELBO follows immediately.
More broadly, when constructing input-dependent schedulers, this condition highlights a potential pitfall: if $\alpha$ and $\hat\alpha$ are parameterized by overly different function classes so that $r^{(i)}(t)$ cannot be uniformly controlled, then finiteness of the NELBO is no longer guaranteed.
In this sense, the bounded-ratio condition serves as a mild regularity guideline for scheduler parameterizations that are expressive yet remain compatible with stable NELBO-based training.

\noindent\textbf{Remark. Why NELBO of \method{} is finite?} Recall that the true posterior and reverse velocity of \method{} are given as follows:
\begin{gather*}
    A_\phi^{(i)}(\mathbf{x},t)=
    \tfrac{c_1+c_2\cdot[\text{NormSig}(g_\phi(f(\mathbf{x}))]_i}{t},\qquad
    \hat A_\psi^{(i)}(\mathbf{z}_t,t)=
    \tfrac{c_1+c_2\cdot[\text{NormSig}(g_\psi(f(\mathbf{z}_t)))]_i}{t},
\end{gather*}
where $c_1>c_2$. Therefore, the inequality
\begin{gather*}
    \frac{c_1-c_2}{c_1+c_2}\le\frac{A_\phi^{(i)}(\mathbf{x},t)}{\hat A_\psi^{(i)}(\mathbf{z}_t,t)}\le\frac{c_1+c_2}{c_1-c_2}
\end{gather*}
always holds for any $\mathbf{x}\in\mathcal{X}^L$ and $\mathbf{z}_t\in\mathcal{S}(\mathbf{x})$, such that NELBO of \method{} is always finite by Proposition~\ref{prop:finiteness}.

\section{Other Proofs}
\subsection{\framework{} Can Express Autoregerssive Models}\label{sec:appendix_proof_arm}
In this section, we provide the complete proof of Proposition~\ref{cor:unifying} omitted in the main paper.
We first define the ARM scheduler that satisfies the statements in Proposition~\ref{cor:unifying},  and further provide the definition of the trajectory sets and one lemma required for the proof.

\begin{definition}[ARM scheduler $\alpha_{\mathrm{arm},\varepsilon}$]
\label{def:arm_scheduler}
Fix $L\in\mathbb{N}$ and define time windows
\begin{equation}
\label{eq:arm-windows}
t_i^{\mathrm{start}} := 1-\frac{i}{L},\qquad
t_i^{\mathrm{end}} := 1-\frac{i-1}{L},\qquad
\Delta := t_i^{\mathrm{end}}-t_i^{\mathrm{start}}=\frac1L,
\quad i\in[L].
\end{equation}
Let $S:\mathbb R\to[0,1]$ be the $C^1$ smoothstep
\begin{equation}
\label{eq:smoothstep}
S(u):=
\begin{cases}
0, & u\le 0,\\
3u^2-2u^3, & 0<u<1,\\
1, & u\ge 1.
\end{cases}
\end{equation}
For $\varepsilon\in(0,1)$, define $\alpha_{\mathrm{arm},\varepsilon}\in\mathcal{F}[\emptyset]$ coordinate-wise by
\begin{equation}
\label{eq:alpha-arm-eps}
\alpha_{\mathrm{arm},\varepsilon}^{(i)}(t)
:= 1-\varepsilon t - (1-\varepsilon)\,S\!\Big(\frac{t-t_i^{\mathrm{start}}}{\Delta}\Big),
\qquad t\in[0,1],\ i\in[L].
\end{equation}
\end{definition}

\begin{definition}[Autoregressive trajectory sets conditioned on $\mathbf{x}$]
\label{def:traj_sets_x}
Fix a target sequence $\mathbf{x}\in\mathcal{X}^L$.
Fix $T\gg L$ and a time grid $t(\tau)=(\tau+1)/(T+1)$.
For each $i\in[L]$, define the window-index set
\[
\mathcal{W}_i:=\{\tau\in[T]:\ t(\tau)\in[t_i^{\mathrm{start}},t_i^{\mathrm{end}}]\},
\qquad
\tau_i^{\mathrm{start}}:=\min \mathcal{W}_i,\quad
\tau_i^{\mathrm{end}}:=\max \mathcal{W}_i,
\]
Define the autoregressive trajectory set as the subset of $\mathcal{S}_{\mathrm{absorb}}(\mathbf{x};T)$ (Definition~\ref{def:absorbing_mask_trajectory}), 
whose masking times fall inside the designated windows:
\begin{align}
\label{eq:Sarm_x}
\mathcal{S}_{\mathrm{arm}}(\mathbf{x},T)
:=\Big\{(\mathbf{z}_{t(0)},\ldots,\mathbf{z}_{t(T)})\in&\mathcal{S}_{\mathrm{absorb}}(\mathbf{x};L,T)\ \Big|\ 
\exists\,(\kappa_i)_{i=2}^L\ \text{with }\kappa_i\in \mathcal{W}_i\ \text{s.t.}\nonumber\\
&\mathbf{z}_{t(\tau)}^{(i)}
=\mathbf{x}^{(i)}\,\mathds{1}\{\tau<\kappa_i\}
+\mathbf{m}\,\mathds{1}\{\tau\ge \kappa_i\},
\ \forall i\in[L],\ \forall \tau\in\{0,\ldots,T\}
\Big\}.
\end{align}
\end{definition}

That is, each coordinate $i\in[2,\dots,L]$ switches from $\mathbf{x}^{(i)}$ to $\mathbf{m}$ exactly once, and the switch index lies within its own window. For $i=1$, $\mathbf{z}_{t(\tau)}^{(1)}$ may transform into $\mathbf{x}^{(1)}$ by reverse process in $\tau\in\mathcal{W}_1$ or by $p_{\theta,\alpha_{\mathrm{arm},\varepsilon}}(\mathbf{x}^{(1)}\mid \mathbf{z}_{t(0)})=\text{Cat}\Big(\mathbf{x}^{(1)}_\theta\big(\mathbf{z}_{t(0)},t(0)\big)\Big)$. Let $\mathcal{S}_{\mathrm{rest}}(\mathbf{x},T):=\mathcal{S}_{\mathrm{absorb}}(\mathbf{x},T)\setminus\mathcal{S}_{\mathrm{arm}}(\mathbf{x},T)$.

\begin{lemma}
\label{lem:alpha_dominates_marginal}
In discrete-time \framework{}, for every $i\in[L]$, every grid index $\tau\in\{0,\ldots,T\}$ where $t(0)=0<t(1)<\dots<t(T)=1$, every input-agnostic \scheduler{} $\hat\alpha_{\mathcal{F}[\emptyset]}\in\mathcal{F}[\emptyset]$, and for generative model distribution $p_{\theta,\hat\alpha_{\mathcal{F}[\emptyset]}}(\mathbf{x})$ induced by reverse kernel $p_{\theta,\hat\alpha_{\mathcal{F}[\emptyset]}}(\mathbf{z}_s|\mathbf{z}_t)$,
\begin{equation}
\label{eq:unmask_prob_le_alpha}
p_{\theta,\hat\alpha_{\mathcal{F}[\emptyset]}}\!\big(\mathbf{z}_{t(\tau)}^{(i)}\neq \mathbf{m}\big)
= \hat\alpha_{\mathcal{F}[\emptyset]}^{(i)}(t(\tau)).
\end{equation}
\end{lemma}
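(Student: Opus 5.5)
We argue by backward induction on the grid index $\tau$, from $\tau=T$ down to $\tau=0$, tracking only the marginal probability that coordinate $i$ is unmasked. Write $\hat\alpha^{(i)}(\cdot)$ for $\hat\alpha^{(i)}_{\mathcal{F}[\emptyset]}(\cdot)$. Because the scheduler is input-agnostic, the reverse kernel's unmasking probability for coordinate $i$ at each step depends only on $s$ and $t$. It does not depend on $\mathbf{z}_t$, on the other coordinates, or on $\theta$. This is what makes a closed-form marginal possible.

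\textbf{Base case.} At $\tau=T$ we have $t(T)=1$. The prior is fully masked, so $p_{\theta,\hat\alpha}(\mathbf{z}_{1}^{(i)}\neq\mathbf{m})=0=\hat\alpha^{(i)}(1)$, using the boundary condition of the \scheduler{} class.

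\textbf{Inductive step.} Let $s=t(\tau-1)$ and $t=t(\tau)$. Condition on the event $\{\mathbf{z}_t^{(i)}\neq\mathbf{m}\}$ and its complement.
\begin{itemize}
\item By the carry-over property of SUBS, an unmasked token stays unmasked with probability one.
\item If $\mathbf{z}_t^{(i)}=\mathbf{m}$, case (II) of the reverse kernel gives the token probability mass $(\hat\alpha^{(i)}(s)-\hat\alpha^{(i)}(t))/(1-\hat\alpha^{(i)}(t))$ on non-mask tokens. The zero-masking property ensures $\mathbf{x}_\theta^{(i)}$ puts no mass on $\mathbf{m}$, so this entire mass goes to unmasked tokens. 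The probability is the same for every value of $\mathbf{z}_t$ with $\mathbf{z}_t^{(i)}=\mathbf{m}$, so averaging over $\mathbf{z}_t$ is harmless.
\end{itemize}
By the tower rule and the induction hypothesis $p(\mathbf{z}_t^{(i)}\neq\mathbf{m})=\hat\alpha^{(i)}(t)$,
\begin{align*}
p(\mathbf{z}_s^{(i)}\neq\mathbf{m})
&=\hat\alpha^{(i)}(t)+\bigl(1-\hat\alpha^{(i)}(t)\bigr)\frac{\hat\alpha^{(i)}(s)-\hat\alpha^{(i)}(t)}{1-\hat\alpha^{(i)}(t)}\\
&=\hat\alpha^{(i)}(s).
\end{align*}

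\textbf{Main obstacle.} The only delicate point is the division by $1-\hat\alpha^{(i)}(t)$. For $t\in(0,1]$, strict monotonicity with $\hat\alpha^{(i)}(0)=1$ gives $\hat\alpha^{(i)}(t)<1$, so the factor is positive and the kernel is well defined. The degenerate case $1-\hat\alpha^{(i)}(t)=0$ would also be harmless, since then the masked branch has probability zero.

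The other thing to check is the indexing convention. The lemma writes $t(0)=0$, whereas the paper otherwise uses $t(\tau)=(\tau+1)/(T+1)$. The argument never uses the specific grid values, only that the grid is increasing, that each reverse step maps $t(\tau)\to t(\tau-1)$, and that $t(T)=1$. So it applies under either convention, including the endpoint value $\hat\alpha^{(i)}(0)=1$.
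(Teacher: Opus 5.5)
Your proposal is correct and follows the paper's proof essentially verbatim. Both use backward induction from $\tau=T$, with the fully masked prior and $\hat\alpha^{(i)}(1)=0$ as the base case, and then the law of total probability with the mask-branch unmasking probability $(\hat\alpha^{(i)}(s)-\hat\alpha^{(i)}(t))/(1-\hat\alpha^{(i)}(t))$ to obtain $\hat\alpha^{(i)}(s)$; your explicit appeals to carry-over, zero-masking, and the input-agnostic scheduler (which makes the conditional unmasking probability independent of $\mathbf{z}_t$) spell out facts the paper uses implicitly.
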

\begin{proof}
At time $t(T)=1$, the prior is $\mathrm{Cat}(\mathbf{m})$ at each coordinate, hence
$p_{\theta,\hat\alpha_{\mathcal{F}[\emptyset]}}\!\big(\mathbf{z}_{t(T)}^{(i)}\neq \mathbf{m}\big)=0$.
Also, by the boundary condition of a \scheduler{},
$\hat\alpha_{\mathcal{F}[\emptyset]}^{(i)}(t(T))=\hat\alpha_{\mathcal{F}[\emptyset]}^{(i)}(1)=0$.
Therefore the claim holds for $\tau=T$.
Assume the equality holds at some $\tau\in\{1,\ldots,T\}$, \ie, $p_{\theta,\hat\alpha_{\mathcal{F}[\emptyset]}}\!\big(\mathbf{z}_{t(\tau)}^{(i)}\neq \mathbf{m}\big)
= \hat\alpha_{\mathcal{F}[\emptyset]}^{(i)}(t(\tau))$.
We prove it for $\tau-1$:
\begin{align*}
p_{\theta,\hat\alpha_{\mathcal{F}[\emptyset]}}\!\big(\mathbf{z}_{t(\tau-1)}^{(i)}\neq \mathbf{m}\big)
&=
p_{\theta,\hat\alpha_{\mathcal{F}[\emptyset]}}\!\big(\mathbf{z}_{t(\tau)}^{(i)}\neq \mathbf{m}\big) +p_{\theta,\hat\alpha_{\mathcal{F}[\emptyset]}}\!\big(\mathbf{z}_{t(\tau)}^{(i)}=\mathbf{m}\big)\,
p_{\theta,\hat\alpha_{\mathcal{F}[\emptyset]}}\!\big(\mathbf{z}_{t(\tau-1)}^{(i)}\neq \mathbf{m}\mid \mathbf{z}_{t(\tau)}^{(i)}=\mathbf{m}\big)
\\
&=
p_{\theta,\hat\alpha_{\mathcal{F}[\emptyset]}}\!\big(\mathbf{z}_{t(\tau)}^{(i)}\neq \mathbf{m}\big)
+\Bigl(1-p_{\theta,\hat\alpha_{\mathcal{F}[\emptyset]}}\!\big(\mathbf{z}_{t(\tau)}^{(i)}\neq \mathbf{m}\big)\Bigr)\,
p_{\theta,\hat\alpha_{\mathcal{F}[\emptyset]}}\!\big(\mathbf{z}_{t(\tau-1)}^{(i)}\neq \mathbf{m}\mid \mathbf{z}_{t(\tau)}^{(i)}=\mathbf{m}\big)\\
&=p_{\theta,\hat\alpha_{\mathcal{F}[\emptyset]}}\!\big(\mathbf{z}_{t(\tau)}^{(i)}\neq \mathbf{m}\big)
+
\Bigl(1-p_{\theta,\hat\alpha_{\mathcal{F}[\emptyset]}}\!\big(\mathbf{z}_{t(\tau)}^{(i)}\neq \mathbf{m}\big)\Bigr)
\frac{\hat\alpha_{\mathcal{F}[\emptyset]}^{(i)}(t(\tau-1))-\hat\alpha_{\mathcal{F}[\emptyset]}^{(i)}(t(\tau))}
{1-\hat\alpha_{\mathcal{F}[\emptyset]}^{(i)}(t(\tau))}
\end{align*}
Using the induction hypothesis
$p_{\theta,\hat\alpha_{\mathcal{F}[\emptyset]}}\!\big(\mathbf{z}_{t(\tau)}^{(i)}\neq \mathbf{m}\big)
= \hat\alpha_{\mathcal{F}[\emptyset]}^{(i)}(t(\tau))$,
we obtain
\begin{align*}
p_{\theta,\hat\alpha_{\mathcal{F}[\emptyset]}}\!\big(\mathbf{z}_{t(\tau-1)}^{(i)}\neq \mathbf{m}\big)
&=
\hat\alpha_{\mathcal{F}[\emptyset]}^{(i)}(t(\tau))
+
\Bigl(1-\hat\alpha_{\mathcal{F}[\emptyset]}^{(i)}(t(\tau))\Bigr)
\frac{\hat\alpha_{\mathcal{F}[\emptyset]}^{(i)}(t(\tau-1))-\hat\alpha_{\mathcal{F}[\emptyset]}^{(i)}(t(\tau))}
{1-\hat\alpha_{\mathcal{F}[\emptyset]}^{(i)}(t(\tau))}
\\
&=
\hat\alpha_{\mathcal{F}[\emptyset]}^{(i)}(t(\tau-1)).
\end{align*}
This completes the backward induction, and the claim holds for every $\tau\in\{0,\ldots,T\}$.
\end{proof}

We now proceed to the proof of Proposition~\ref{cor:unifying}. Throughout the proof, we omit $(\mathbf{x},T)$ in $\mathcal{S}_{\mathrm{arm}}(\mathbf{x},T)$, $\mathcal{S}_{\mathrm{absorb}}(\mathbf{x},T)$, and $\mathcal{S}_{\mathrm{rest}}(\mathbf{x},T)$ for brevity.

% \unifying*

\begingroup
\renewcommand{\thetheorem}{\ref{cor:unifying}}
\renewcommand{\theHtheorem}{app.restate.\ref{cor:unifying}}

\begin{proposition}[Autoregressive models as a special case of \framework{}]
If $\mathbf{x}_\theta$ is time-agnostic as typical ARMs, there exists $\alpha_{\mathrm{arm},\epsilon}\in\mathcal{F}[\emptyset]$ that makes $p_{\theta,\hat\alpha_{\mathcal{F}}}$ becomes approximately equal to ARMs. Formally, the generative distribution induced by the reverse kernel $p_{\theta,\alpha_{\mathrm{arm},\epsilon}}(\mathbf{z}_s|\mathbf{z}_t)$ satisfies:
\begin{gather*}
    p_{\theta,\alpha_{\mathrm{arm},\epsilon}}(\mathbf{x})=\prod_{i=1}^L\langle\mathbf{x}_\theta^{(i)}(\mathbf{y}_i),\mathbf{x}^{(i)}\rangle+O(\epsilon),
\end{gather*}
where $\mathbf{y}_i=[\mathbf{x}^{(1:i-1)}:\mathbf{m}^{L-i+1}]$. In continuous-time,
\begin{align*}
    &\mathcal{L}_{\mathrm{\framework{}}}(\mathbf{x},\theta,\alpha_{\mathrm{arm},\epsilon},\alpha_{\mathrm{arm},\epsilon})=-\log \prod_{i=1}^L\langle\mathbf{x}_\theta^{(i)}(\mathbf{y}_i),\mathbf{x}^{(i)}\rangle+O(\epsilon),
\end{align*}
such that \framework{} converges to ARM closely as $\epsilon\rightarrow0+$.
\end{proposition}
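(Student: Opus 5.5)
The plan is to treat the two claims separately, both built on the explicit scheduler $\alpha_{\mathrm{arm},\varepsilon}$ of Definition~\ref{def:arm_scheduler}. First I will record its elementary properties. It lies in $\mathcal{F}[\emptyset]$: it is $C^1$ and decreasing, with $\partial_t\alpha^{(i)}\le-\varepsilon<0$ and boundary values $1$ and $0$, the latter because $t_i^{\mathrm{start}}\le 1-0$ is attained. Outside its window $[t_i^{\mathrm{start}},t_i^{\mathrm{end}}]$, coordinate $i$ moves only at rate $\varepsilon$. Concretely, $\alpha^{(i)}(t)=1-\varepsilon t$ for $t\le t_i^{\mathrm{start}}$ and $\alpha^{(i)}(t)=\varepsilon(1-t)$ for $t\ge t_i^{\mathrm{end}}$. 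So positions are unmasked in reverse time one per window, in the order $1,2,\dots,L$ (window $i$ is $[1-i/L,1-(i-1)/L]$). Since the scheduler is input-agnostic, Lemma~\ref{lem:alpha_dominates_marginal} gives the exact marginal $p(\mathbf{z}^{(i)}_{t(\tau)}\neq\mathbf{m})=\alpha^{(i)}(t(\tau))$.

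\textbf{Likelihood claim.} I would split $p_{\theta,\alpha_{\mathrm{arm},\varepsilon}}(\mathbf{x})=\sum_{\mathcal{S}_{\mathrm{arm}}}+\sum_{\mathcal{S}_{\mathrm{rest}}}$, using Lemma~\ref{lem:absolute_continuity_\framework{}} to restrict to absorbing trajectories.

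For $\mathcal{S}_{\mathrm{rest}}$, some coordinate $i$ is unmasked outside its window. A union bound with the lemma controls this. Before the window, $p(\mathbf{z}^{(i)}_{t_i^{\mathrm{end}}}\ne\mathbf{m})=\alpha^{(i)}(t_i^{\mathrm{end}})=\varepsilon(1-t_i^{\mathrm{end}})\le\varepsilon$. After the window, $p(\mathbf{z}^{(i)}_{t_i^{\mathrm{start}}}=\mathbf{m})=1-\alpha^{(i)}(t_i^{\mathrm{start}})=\varepsilon t_i^{\mathrm{start}}\le\varepsilon$. The total mass of $\mathcal{S}_{\mathrm{rest}}$ is therefore at most $2L\varepsilon=O(\varepsilon)$.

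On $\mathcal{S}_{\mathrm{arm}}$, every transition in window $i$ sees the state $\mathbf{y}_i$ exactly: positions $<i$ are already revealed and positions $>i$ are still masked. Because $\mathbf{x}_\theta$ is time-agnostic, each unmasking of coordinate $i$ emits $\mathbf{x}^{(i)}$ with probability $\langle\mathbf{x}_\theta^{(i)}(\mathbf{y}_i),\mathbf{x}^{(i)}\rangle$. Summing over the switch index $\kappa_i\in\mathcal{W}_i$ then gives (probability of unmasking inside the window)$\times$(token probability). The first factor is $1-O(\varepsilon)$ by the same marginal computation; the conditional independence across coordinates of the reverse kernel is what lets the product factorize. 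Position $1$'s possible late reveal at the reconstruction step also gives the token $\mathbf{x}_\theta^{(1)}(\mathbf{y}_1)$ up to $O(\varepsilon)$. Multiplying the $L$ factors yields $\prod_i\langle\mathbf{x}^{(i)}_\theta(\mathbf{y}_i),\mathbf{x}^{(i)}\rangle+O(\varepsilon)$, uniformly in $T$, so the bound passes to $T\to\infty$.

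\textbf{NELBO claim.} With $\hat\alpha=\alpha$ we have $\mathcal{L}_{\mathrm{velocity}}=0$, so $\mathcal{L}=\sum_i\int_0^1 A^{(i)}(t)\,q(\mathbf{z}_t^{(i)}=\mathbf{m})\,\mathbb{E}[-\log\langle\mathbf{x}_\theta^{(i)}(\mathbf{z}_t),\mathbf{x}^{(i)}\rangle\mid \mathbf{z}_t^{(i)}=\mathbf{m}]\,dt$. Using $q(\mathbf{z}^{(i)}_t=\mathbf{m})A^{(i)}=-\partial_t\alpha^{(i)}$, each term becomes $\int_0^1(-\partial_t\alpha^{(i)})\,\mathbb{E}[\cdot]\,dt$. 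The weight $-\partial_t\alpha^{(i)}$ has total mass $1$, of which at least $1-\varepsilon$ sits inside window $i$.

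Inside window $i$, the forward coordinates $j<i$ are unmasked with probability $\alpha^{(j)}(t)\ge1-\varepsilon$ and those $j>i$ are masked with probability $\ge1-\varepsilon$. Hence $\mathbf{z}_t=\mathbf{y}_i$ with probability $1-O(L\varepsilon)$, and otherwise the loss is bounded by $\varepsilon_\theta$ (the uniform bound from Appendix~\ref{sec:appendix_NELBO_finiteness}). The mass outside the window is at most $\varepsilon$ and is also bounded by $\varepsilon_\theta$. Summing over $i$ gives $-\sum_i\log\langle\mathbf{x}^{(i)}_\theta(\mathbf{y}_i),\mathbf{x}^{(i)}\rangle+O(\varepsilon)$.

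The main obstacle is the bookkeeping on $\mathcal{S}_{\mathrm{arm}}$. I must make rigorous that the per-window unmasking probabilities factorize and are $1-O(\varepsilon)$ uniformly in $T$, including boundary grid points shared between adjacent windows and the special treatment of coordinate $1$ at the reconstruction step. I would handle this by working with hitting times of each coordinate, whose laws are exactly $1-\alpha^{(i)}$ by Lemma~\ref{lem:alpha_dominates_marginal}, rather than with trajectory probabilities directly.
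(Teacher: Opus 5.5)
Your proposal is correct and follows essentially the same route as the paper. For the likelihood you split into $\mathcal{S}_{\mathrm{arm}}$ and $\mathcal{S}_{\mathrm{rest}}$ and take a union bound via Lemma~\ref{lem:alpha_dominates_marginal}; for the NELBO you use $(1-\alpha^{(i)})A^{(i)}=-\partial_t\alpha^{(i)}$ together with the in-window concentration of $\mathbf{z}_t$ on $\mathbf{y}_i$. You are in fact more careful than the paper about the $1-O(\varepsilon)$ timing factor on $\mathcal{S}_{\mathrm{arm}}$, the uniform bound $\varepsilon_\theta$, and the shared boundary grid points, which the paper passes over.

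One small slip: the coordinate that may be revealed at the reconstruction step from the canonical state is position $L$, whose window $[0,1/L]$ contains $t(0)$, not position $1$, which is revealed first. By time-agnosticism this still contributes $\langle\mathbf{x}_\theta^{(L)}(\mathbf{y}_L),\mathbf{x}^{(L)}\rangle$, so nothing in your argument breaks.
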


\endgroup
\addtocounter{theorem}{-1}

\begin{proof}[Proof of the first statement: $p_{\theta,\alpha_{\text{arm},\epsilon}}(\mathbf{x})=\prod_{i=1}^L\langle\mathbf{x}_\theta^{(i)}(\mathbf{y}_i),\mathbf{x}^{(i)}\rangle+O(\epsilon)$] 
\; 

Define $s(\tau):=t(\tau-1)$ for $\tau\in[T]$, so each reverse step is $(t(\tau)\to s(\tau))$. The path distribution yields \begin{align} \label{eq:px_pathsum} p_{\theta,\alpha_{\mathrm{arm},\varepsilon}}(\mathbf{x}) &=\sum_{\mathbf{z}_{t(0:T)}\in\mathcal{S}_{\mathrm{absorb}}} p_{\theta,\alpha_{\mathrm{arm},\varepsilon}}(\mathbf{x}\mid \mathbf{z}_{t(0)}) \Bigl(\prod_{\tau=1}^T p_{\theta,\alpha_{\mathrm{arm},\varepsilon}}(\mathbf{z}_{s(\tau)}\mid \mathbf{z}_{t(\tau)})\Bigr) p_{\theta,\alpha_{\mathrm{arm},\varepsilon}}(\mathbf{z}_{t(T)}). \end{align} 

Under $\alpha_{\mathrm{arm},\varepsilon}$ we have $\alpha_{\mathrm{arm},\varepsilon}^{(i)}(1)=0$, hence $\mathbf{z}_{t(T)}=\mathbf{m}^{L}$ deterministically. Therefore $p_{\theta,\alpha_{\mathrm{arm},\varepsilon}}(\mathbf{z}_{t(T)})=1$, and \eqref{eq:px_pathsum} simplifies to 
\begin{align} \label{eq:px_pathsum_simplified} p_{\theta,\alpha_{\mathrm{arm},\varepsilon}}(\mathbf{x}) &=\sum_{\mathbf{z}_{t(0:T)}\in\mathcal{S}_{\mathrm{absorb}}} p_{\theta,\alpha_{\mathrm{arm},\varepsilon}}(\mathbf{x}\mid \mathbf{z}_{t(0)})\prod_{\tau=1}^T p_{\theta,\alpha_{\mathrm{arm},\varepsilon}}(\mathbf{z}_{s(\tau)}\mid \mathbf{z}_{t(\tau)}) \\
&=\sum_{\mathbf{z}_{t(0:T)}\in\mathcal{S}_{\mathrm{arm}}}p_{\theta,\alpha_{\mathrm{arm},\varepsilon}}(\mathbf{x}\mid \mathbf{z}_{t(0)})\prod_{\tau=1}^T p_{\theta,\alpha_{\mathrm{arm},\varepsilon}}(\mathbf{z}_{s(\tau)}\mid \mathbf{z}_{t(\tau)}) \;+\; R_\varepsilon, 
\end{align} 
where $\mathbf{z}_{t(T)}=\mathbf{m}^L$ and
\begin{equation} \label{eq:R_eps_def} R_\varepsilon :=\sum_{\mathbf{z}_{t(0:T)}\in\mathcal{S}_{\mathrm{rest}}}p_{\theta,\alpha_{\mathrm{arm},\varepsilon}}(\mathbf{x}\mid \mathbf{z}_{t(0)})\prod_{\tau=1}^T p_{\theta,\alpha_{\mathrm{arm},\varepsilon}}(\mathbf{z}_{s(\tau)}\mid \mathbf{z}_{t(\tau)}). \end{equation}

Introduce the reverse-time path measure $\mathbb{P}_{\theta,\varepsilon}$ induced by the Markov chain
$\mathbf{z}_{t(T)}\to \mathbf{z}_{t(T-1)}\to\cdots\to \mathbf{z}_{t(0)}\rightarrow\mathbf{x}$
with transitions $p_{\theta,\alpha_{\mathrm{arm},\varepsilon}}(\mathbf{z}_{s(\tau)}\mid \mathbf{z}_{t(\tau)})$, $p_{\theta,\alpha_{\mathrm{arm},\varepsilon}}(\mathbf{x}^{(i)}\mid \mathbf{z}_{t(0)})=\text{Cat}\Big(\mathbf{x}^{(i)}_\theta\big(\mathbf{z}_{t(0)},t(0)\big)\Big)$,
and initialization $\mathbf{z}_{t(T)}=\mathbf{m}^L$.
By the definition of a path probability, 
% \eqref{eq:R_eps_def_x} is exactly
\begin{equation}
\label{eq:R_eps_as_prob_x}
R_\varepsilon
\le\mathbb{P}_{\theta,\varepsilon}\!\Big((\mathbf{z}_{t(0)},\ldots,\mathbf{z}_{t(T)},\mathbf{x})\in\{\mathbf{x},\mathbf{z}_{t(0:T)}|\mathbf{x}\in\mathcal{V}^L,\mathbf{z}_{t(0:T)}\in\mathcal{S}_{\mathrm{rest}}(\mathbf{x},L,T)\}\Big).
\end{equation}

Note that Markov chain is formed by reversing the time interval through $t(T)\rightarrow t(T-1)\rightarrow\dots t(0)$.
Then, for each $i\in[L-1]$, define the ``late'' event that position $i$ remains masked after the start of its designated window, and the ``early'' event that some strictly-right position $i$ is unmasked before the end of the $i$-th window:
\begin{equation}
\label{eq:Ei_late}
\mathcal{E}^{\mathrm{late}}_i
:=\big\{\mathbf{z}_{t(\tau_i^{\mathrm{start}}-1)}^{(i)}= \mathbf{m}\big\},\qquad \mathcal{E}^{\mathrm{early}}_i
:=\Big\{ \mathbf{z}_{t(\tau_i^{\mathrm{end}}+1)}^{(i)}\neq \mathbf{m}\Big\}.
\end{equation}
If a trajectory lies in $\mathcal{S}_{\mathrm{rest}}=\mathcal{S}_{\mathrm{absorb}}\setminus\mathcal{S}_{\mathrm{arm}}$,
then either some $i$ is remains masked after the start of its own window (i.e., $\mathcal{E}^{\mathrm{late}}_i$),
or some $i$ is unmasked before the end of $i$-th window (i.e., $\mathcal{E}^{\mathrm{early}}_i$).
Consequently,
\begin{equation}
\label{eq:Srest_union_x}
\{\mathbf{x},\mathbf{z}_{t(0:T)}|\mathbf{x}\in\mathcal{V}^L,\mathbf{z}_{t(0:T)}\in\mathcal{S}_{\mathrm{rest}}(\mathbf{x},T)\}
\subseteq \bigcup_{i=1}^{L-1}\big(\mathcal{E}^{\mathrm{late}}_{i+1}\cup \mathcal{E}^{\mathrm{early}}_i\big),
\end{equation}
where the late event only occurs in $i\in[2,\dots,L]$ and the early event only occurs in $i\in[1,\dots,L-1]$.
We now bound these events under $\mathbb{P}_{\theta,\varepsilon}$.
By Lemma~\ref{lem:alpha_dominates_marginal} and $S(1)=1$,
\[
\mathbb{P}_{\theta,\varepsilon}(\mathcal{E}^{\mathrm{early}}_i)
= \alpha_{\mathrm{arm},\varepsilon}^{(i)}(t(\tau_i^{\mathrm{end}}+1))
= 1-\varepsilon t(\tau_i^{\mathrm{end}}+1)-(1-\varepsilon)
= \varepsilon(1-t(\tau_i^{\mathrm{end}}+1))
\le\varepsilon.
\]
Similarly, by  Lemma~\ref{lem:alpha_dominates_marginal},
\[
\mathbb{P}_{\theta,\varepsilon}(\mathcal{E}^{\mathrm{late}}_i)
=1-\alpha_{\mathrm{arm},\varepsilon}^{(i)}(t(\tau_i^{\mathrm{start}}-1))=1-(1-\varepsilon t(\tau_i^{\mathrm{start}}-1))=\varepsilon t(\tau_i^{\mathrm{start}}-1))\le\epsilon
\]

Combining \eqref{eq:R_eps_as_prob_x} and \eqref{eq:Srest_union_x} with the union bound gives
\[
R_\varepsilon
\le \sum_{i=1}^{L-1}\Big(\mathbb{P}_{\theta,\varepsilon}(\mathcal{E}^{\mathrm{late}}_{i+1})
+\mathbb{P}_{\theta,\varepsilon}(\mathcal{E}^{\mathrm{early}}_i)\Big)
\le \sum_{i=1}^{L-1}(\varepsilon+\varepsilon)
=O(\varepsilon),
\]
where the implicit constant depends only on $L$. Hence we conclude
\begin{equation}
\label{eq:px_equals_arm_plus_oeps_x}
p_{\theta,\alpha_{\mathrm{arm},\varepsilon}}(\mathbf{x})
=\sum_{(\mathbf{z}_{t(0)},\ldots,\mathbf{z}_{t(T)})\in\mathcal{S}_{\mathrm{arm}}}p_{\theta,\alpha_{\mathrm{arm},\varepsilon}}(\mathbf{x}\mid \mathbf{z}_{t(0)})
\prod_{\tau=1}^T p_{\theta,\alpha_{\mathrm{arm},\varepsilon}}(\mathbf{z}_{s(\tau)}\mid \mathbf{z}_{t(\tau)})
\;+\; O(\epsilon).
\end{equation}
Finally, since the trajectory comes from $\mathcal{S}_{\text{arm}}$, every $\mathbf{z}_t$ is a left-most-masked sequence
$\mathbf{z}_t=[\mathbf{x}^{(1:k)}:\mathbf{m}^{L-k}]$.
Recall that the reverse kernel of \framework{} (with $\hat \alpha_{\mathcal{F}}=\alpha_{\mathrm{arm},\varepsilon}$) is coordinate-wise
\begin{align}
\label{eq:reverse_kernel_arm_eps}
p_{\theta,\alpha_{\mathrm{arm},\varepsilon}}(\mathbf{z}_s^{(i)} \mid \mathbf{z}_t)
=
\begin{cases}
\mathrm{Cat}(\mathbf{z}_t^{(i)}), & \text{if }\mathbf{z}_t^{(i)}\neq \mathbf{m}, \\
\mathrm{Cat}\!\left(
  \tfrac{
      (1-\alpha_{\mathrm{arm},\varepsilon}^{(i)}(s))\mathbf{m}
      + (\alpha_{\mathrm{arm},\varepsilon}^{(i)}(s)-\alpha_{\mathrm{arm},\varepsilon}^{(i)}(t))
          \mathbf{x}^{(i)}_{\theta}(\mathbf{z}_t,t)
   }{
      1 - \alpha_{\mathrm{arm},\varepsilon}^{(i)}(t)
   }
\right), 
& \text{if }\mathbf{z}_t^{(i)}= \mathbf{m}.
\end{cases}
\end{align}

We only need to consider the step in $\mathcal{S}_{\text{arm}}$ where the \emph{first unmasking} of position $i$ occurs.
By the first case of \eqref{eq:reverse_kernel_arm_eps}, once $\mathbf{z}_t^{(i)}\neq \mathbf{m}$, the value is carried over deterministically in all subsequent steps.
Hence, along any trajectory in $\mathcal{S}_{\text{arm}}$, each token $\mathbf{x}^{(i)}$ is sampled \emph{exactly once} at the unique step where
$\mathbf{z}_t^{(i)}=\mathbf{m}$ and $\mathbf{z}_s^{(i)}\neq \mathbf{m}$.
Conditioned on this unmasking event, the second case of \eqref{eq:reverse_kernel_arm_eps} implies that the induced conditional distribution over the sampled token is given by the model prediction $\mathbf{x}^{(i)}_\theta(\mathbf{z}_t,t)$ (since the $\alpha$-dependent prefactor only controls \emph{whether} unmasking happens, not \emph{which token} is drawn once unmasking occurs). Finally, the reconstruction distribution is given as $p_{\theta,\alpha_{\mathrm{arm},\varepsilon}}(\mathbf{x}^{(i)}\mid \mathbf{z}_{t(0)})=\text{Cat}\Big(\mathbf{x}^{(i)}_\theta\big(\mathbf{z}_{t(0)},t(0)\big)\Big)$.

By the assumption in Proposition~\ref{cor:unifying}, $\mathbf{x}_\theta$ is time-agnostic, so $\mathbf{x}^{(i)}_\theta(\mathbf{z}_t,t)=\mathbf{x}^{(i)}_\theta(\mathbf{z}_t)$.
Moreover, in $\mathcal{S}_{\text{arm}}$, right before unmasking position $i$ we necessarily have the canonical state
$\mathbf{z}_t=\mathbf{y}_i=[\mathbf{x}^{(1:i-1)}:\mathbf{m}^{L-i+1}]$.
Therefore,
\[
\mathbb{P}\!\Big(\mathbf{x}^{(i)}\ \Big|\ \mathbf{x}^{(1:i-1)},\ (\mathbf{z}_{t(0)},\ldots,\mathbf{z}_{t(T)})\in\mathcal{S}_{\text{arm}}\Big)
=\big\langle \mathbf{x}_\theta^{(i)}(\mathbf{y}_i),\mathbf{x}^{(i)}\big\rangle .
\]
Multiplying these one-time unmasking conditionals over $i=1,\ldots,L$ gives the probability mass assigned to $\mathbf{x}$ by trajectories in $\mathcal{S}_{\text{arm}}$,
and combining with \eqref{eq:px_equals_arm_plus_oeps_x} yields
\[
p_{\theta,\alpha_{\text{arm},\epsilon}}(\mathbf{x})
=\prod_{i=1}^L\big\langle\mathbf{x}_\theta^{(i)}(\mathbf{y}_i),\mathbf{x}^{(i)}\big\rangle+O(\epsilon),
\]
as desired.

\end{proof}
\begin{proof}[Proof of the second statement: $\mathcal{L}_{\text{\framework{}}}(\theta,\alpha_{\mathcal{F}},\hat\alpha_{\mathcal{F}})=-\log \prod_{i=1}^L\langle\mathbf{x}_\theta^{(i)}(\mathbf{y}_i),\mathbf{x}^{(i)}\rangle+O(\epsilon)$] 
Recall that the NELBO of \framework{} is
\begin{align}
\mathcal{L}_{\text{\framework{}}}(\mathbf{x},\theta,\alpha_{\mathcal{F}},\hat\alpha_{\mathcal{F}})
=\int_{0}^{1}
\mathbb{E}_{q_{\alpha_{\mathcal{F}}}}
\Biggl[
\sum_{i=1}^{L}
\langle\mathbf{z}_t^{(i)},\mathbf{m}\rangle
\Bigl\{
-A^{(i)}
\log\langle \mathbf{x}_\theta^{(i)}(\mathbf{z}_t,t),\mathbf{x}^{(i)}\rangle
+\mathcal{L}_{\text{velocity}}
\Bigr\}
\Biggr] dt. \nonumber
\end{align}
Set $\alpha_{\mathcal{F}}=\hat\alpha_{\mathcal{F}}=\alpha_{\mathrm{arm},\varepsilon}$.
Then $A^{(i)}=\hat A^{(i)}$ for all $i$, hence $\mathcal{L}_{\text{velocity}}=0$ and
\begin{align}
\mathcal{L}_{\text{\framework{}}}(\mathbf{x},\theta,\alpha_{\mathrm{arm},\varepsilon},\alpha_{\mathrm{arm},\varepsilon})
=\int_{0}^{1}
\mathbb{E}_{q_{\alpha_{\mathrm{arm},\varepsilon}}}
\Biggl[
\sum_{i=1}^{L}
-\langle\mathbf{z}_t^{(i)},\mathbf{m}\rangle\,
A_{\mathrm{arm},\varepsilon}^{(i)}(t)\,
\log\langle \mathbf{x}_\theta^{(i)}(\mathbf{z}_t,t),\mathbf{x}^{(i)}\rangle
\Biggr] dt. \label{eq:L\framework{}_arm_main_only}
\end{align}

By the definition of velocity, $A^{(i)}(t)= -\frac{\partial_t \alpha^{(i)}(t)}{1-\alpha^{(i)}(t)}$; thus
\begin{equation}
\label{eq:alphaA_identity}
\bigl(1-\alpha_{\mathrm{arm},\varepsilon}^{(i)}(t)\bigr)\,A_{\mathrm{arm},\varepsilon}^{(i)}(t)
= -\partial_t \alpha_{\mathrm{arm},\varepsilon}^{(i)}(t).
\end{equation}
Moreover, under $q_{\alpha_{\mathrm{arm},\varepsilon}}(\mathbf{z}_t\mid \mathbf{x})$ we have
\[
q_{\alpha_{\mathrm{arm},\varepsilon}}(\mathbf{z}_t^{(i)}=\mathbf{m}\mid \mathbf{x})
=1-\alpha_{\mathrm{arm},\varepsilon}^{(i)}(t),
\qquad
q_{\alpha_{\mathrm{arm},\varepsilon}}(\mathbf{z}_t^{(i)}=\mathbf{x}^{(i)}\mid \mathbf{x})
=\alpha_{\mathrm{arm},\varepsilon}^{(i)}(t),
\]
and the coordinates factorize. Using conditional expectation in \eqref{eq:L\framework{}_arm_main_only} and \eqref{eq:alphaA_identity},
\begin{align}
\mathcal{L}_{\text{\framework{}}}(\mathbf{x},\theta,\alpha_{\mathrm{arm},\varepsilon},\alpha_{\mathrm{arm},\varepsilon})
&=\sum_{i=1}^{L}\int_{0}^{1}
\bigl(-\partial_t \alpha_{\mathrm{arm},\varepsilon}^{(i)}(t)\bigr)\,
\underbrace{\mathbb{E}_{q_{\alpha_{\mathrm{arm},\varepsilon}}}
\Bigl[-\log\langle \mathbf{x}_\theta^{(i)}(\mathbf{z}_t,t),\mathbf{x}^{(i)}\rangle\ \Big|\ \mathbf{z}_t^{(i)}=\mathbf{m}\Bigr]}_{=:g_i(t)}
\,dt. \label{eq:L\framework{}_weighted_g}
\end{align}

We now evaluate $g_i(t)$ for the ARM scheduler.
Fix $i$ and take $t\in[t_i^{\mathrm{start}},t_i^{\mathrm{end}}]$.
For any $j<i$, we have $t<t_j^{\mathrm{start}}$, hence $S\!\big(\frac{t-t_j^{\mathrm{start}}}{\Delta}\big)=0$ and
$\alpha_{\mathrm{arm},\varepsilon}^{(j)}(t)=1-\varepsilon t$, so
$q(\mathbf{z}_t^{(j)}=\mathbf{m}\mid \mathbf{x})=1-\alpha_{\mathrm{arm},\varepsilon}^{(j)}(t)=\varepsilon t\le \varepsilon$.
For any $j>i$, we have $t\ge t_j^{\mathrm{end}}$, hence $S\!\big(\frac{t-t_j^{\mathrm{start}}}{\Delta}\big)=1$ and
$\alpha_{\mathrm{arm},\varepsilon}^{(j)}(t)=\varepsilon(1-t)$, so
$q(\mathbf{z}_t^{(j)}\neq \mathbf{m}\mid \mathbf{x})=\alpha_{\mathrm{arm},\varepsilon}^{(j)}(t)\le \varepsilon$.
Therefore, conditioned on $\mathbf{z}_t^{(i)}=\mathbf{m}$, the event $\mathbf{z}_t=\mathbf{y}_i=[\mathbf{x}^{(1:i-1)}:\mathbf{m}^{L-i+1}]$ holds with probability $1-O(\varepsilon)$ (union bound over $j\neq i$).
Using the time-agnostic assumption $\mathbf{x}_\theta(\mathbf{z}_t,t)=\mathbf{x}_\theta(\mathbf{z}_t)$, we obtain
\begin{equation}
\label{eq:gi_in_window}
g_i(t)= -\log\langle \mathbf{x}_\theta^{(i)}(\mathbf{y}_i),\mathbf{x}^{(i)}\rangle + O(\varepsilon),
\qquad t\in[t_i^{\mathrm{start}},t_i^{\mathrm{end}}],
\end{equation}
where the $O(\varepsilon)$ term is uniform in $t$ (assuming the integrand is finite, \ie, $\langle \mathbf{x}_\theta^{(i)}(\cdot),\mathbf{x}^{(i)}\rangle>0$ on the relevant states).

Outside the $i$-th window, we have $S'(\cdot)=0$, hence $\partial_t \alpha_{\mathrm{arm},\varepsilon}^{(i)}(t)=-\varepsilon$ and so
\[
\int_{[0,1]\setminus[t_i^{\mathrm{start}},t_i^{\mathrm{end}}]} \bigl(-\partial_t \alpha_{\mathrm{arm},\varepsilon}^{(i)}(t)\bigr)\, g_i(t)\,dt
= O(\varepsilon).
\]
Combining with \eqref{eq:gi_in_window} in \eqref{eq:L\framework{}_weighted_g} yields
\begin{align*}
\int_{0}^{1}\bigl(-\partial_t \alpha_{\mathrm{arm},\varepsilon}^{(i)}(t)\bigr)\,g_i(t)\,dt
&=
\Bigl(-\log\langle \mathbf{x}_\theta^{(i)}(\mathbf{y}_i),\mathbf{x}^{(i)}\rangle\Bigr)
\int_{0}^{1}\bigl(-\partial_t \alpha_{\mathrm{arm},\varepsilon}^{(i)}(t)\bigr)\,dt
+O(\varepsilon)\\
&=
-\log\langle \mathbf{x}_\theta^{(i)}(\mathbf{y}_i),\mathbf{x}^{(i)}\rangle
\cdot\bigl(\alpha_{\mathrm{arm},\varepsilon}^{(i)}(0)-\alpha_{\mathrm{arm},\varepsilon}^{(i)}(1)\bigr)
+O(\varepsilon)\\
&=
-\log\langle \mathbf{x}_\theta^{(i)}(\mathbf{y}_i),\mathbf{x}^{(i)}\rangle
+O(\varepsilon),
\end{align*}
since $\alpha_{\mathrm{arm},\varepsilon}^{(i)}(0)=1$ and $\alpha_{\mathrm{arm},\varepsilon}^{(i)}(1)=0$.
Summing over $i=1,\ldots,L$ proves
\[
\mathcal{L}_{\text{\framework{}}}(\mathbf{x},\theta,\alpha_{\mathrm{arm},\varepsilon},\alpha_{\mathrm{arm},\varepsilon})
=
\sum_{i=1}^{L}
-\log\langle \mathbf{x}_\theta^{(i)}(\mathbf{y}_i),\mathbf{x}^{(i)}\rangle
+O(\varepsilon)
=
-\log\prod_{i=1}^{L}\langle \mathbf{x}_\theta^{(i)}(\mathbf{y}_i),\mathbf{x}^{(i)}\rangle
+O(\varepsilon),
\]
which completes the proof.
\end{proof}
Furthermore, we can extend the above theoretical results for the auto-regressive modeling of any fixed order:
\begin{corollary}\label{cor:any-fixed-orderings}
    Let $\pi$ be an arbitrary but fixed permutation of $[L] := \{1,\ldots,L\}$, and denote the induced generation order by
$(\pi(1), \ldots, \pi(L))$. For fixed sequence $\mathbf{x}$, let $ \mathbf{y}_{\pi,i}$ be the masked sequence with $L-i+1$ masks following $\pi$ permutation ordering:
\begin{align*}
    \mathbf{y}_{\pi,i}=(\mathbf{x}^{(\pi(1))},\dots,\mathbf{x}^{(\pi(i-1))},\underbrace{\mathbf{m},\dots,\mathbf{m}}_{L-i+1})
\end{align*}
Define auto-regressive modeling of $p_{\theta,\pi}$ with time-agnostic model $\theta$ and its corresponding negative log-likelihood as follows:
\begin{gather*}
    p_{\theta,\pi}(\mathbf{x})=\prod_{i=1}^L\langle\mathbf{x}_\theta^{(i)}(\mathbf{y}_{\pi,i}),\mathbf{x}^{(i)}\rangle,\qquad \mathcal{L}_{\pi}=-\log \prod_{i=1}^L\langle\mathbf{x}_\theta^{(i)}(\mathbf{y}_{\pi,i}),\mathbf{x}^{(i)}\rangle
\end{gather*}
Then, there exists $\alpha_{\pi,\epsilon}\in\mathcal{F}[\emptyset]$ that satisfies 
\begin{align*}
    p_{\theta,\alpha_{\pi,\epsilon}}(\mathbf{x})=p_{\theta,\pi}(\mathbf{x})+O(\epsilon),\qquad\mathcal{L}_{\text{\framework{}}}(\theta,\alpha_{\pi,\epsilon},\alpha_{\pi,\epsilon})=\mathcal{L}_{\pi}+O(\epsilon).
\end{align*}
\end{corollary}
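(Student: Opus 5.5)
The plan is to reduce Corollary~\ref{cor:any-fixed-orderings} to Proposition~\ref{cor:unifying} by relabeling positions. We build $\alpha_{\pi,\epsilon}$ from the ARM scheduler of Definition~\ref{def:arm_scheduler}: position $\pi(i)$ gets the schedule that position $i$ had, namely
\begin{align*}
\alpha_{\pi,\epsilon}^{(\pi(i))}(t):=\alpha_{\mathrm{arm},\epsilon}^{(i)}(t)=1-\epsilon t-(1-\epsilon)\,S\!\Big(\tfrac{t-t_i^{\mathrm{start}}}{\Delta}\Big),\qquad i\in[L].
\end{align*}
Since $\pi$ only permutes coordinates, each coordinate is still one of the functions $\alpha_{\mathrm{arm},\epsilon}^{(i)}$. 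These lie in $\mathcal{F}[\emptyset]$: they are absolutely continuous and $C^1$, with endpoint values $1$ and $0$, and $\partial_t\le-\epsilon<0$. Hence $\alpha_{\pi,\epsilon}\in\mathcal{F}[\emptyset]$.

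For the first claim we repeat the path-decomposition argument from the proof of Proposition~\ref{cor:unifying} with windows reindexed through $\pi$. Define $\mathcal{S}_{\pi}(\mathbf{x},T)$ as the absorbing trajectories in which coordinate $\pi(i)$ switches inside window $\mathcal{W}_i$, and let $\mathcal{S}_{\mathrm{rest}}$ be its complement in $\mathcal{S}_{\mathrm{absorb}}$. The late and early events become $\{\mathbf{z}^{(\pi(i))}_{t(\tau_i^{\mathrm{start}}-1)}=\mathbf{m}\}$ and $\{\mathbf{z}^{(\pi(i))}_{t(\tau_i^{\mathrm{end}}+1)}\neq\mathbf{m}\}$. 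Lemma~\ref{lem:alpha_dominates_marginal} applies to any input-agnostic scheduler, so each of these events has probability at most $\epsilon$. A union bound then gives $R_\epsilon=O(\epsilon)$.

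On $\mathcal{S}_\pi$, the state just before $\pi(i)$ is unmasked is the sequence in which $\mathbf{x}^{(\pi(1))},\dots,\mathbf{x}^{(\pi(i-1))}$ are revealed in their own positions and everything else is masked; this is what $\mathbf{y}_{\pi,i}$ denotes. Carry-over unmasking, together with the observation that the scheduler prefactor only decides \emph{whether} a position unmasks, gives the product of the predicted probabilities of the true tokens at $\mathbf{y}_{\pi,i}$. One notational point needs settling: the statement writes the $i$-th factor as $\langle\mathbf{x}_\theta^{(i)}(\mathbf{y}_{\pi,i}),\mathbf{x}^{(i)}\rangle$. I would read it as the prediction at position $\pi(i)$, i.e.\ $\langle\mathbf{x}_\theta^{(\pi(i))}(\mathbf{y}_{\pi,i}),\mathbf{x}^{(\pi(i))}\rangle$, and reindex the product accordingly.

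For the NELBO claim we set $\hat\alpha=\alpha=\alpha_{\pi,\epsilon}$, so $\mathcal{L}_{\mathrm{velocity}}=0$. Using $(1-\alpha)A=-\partial_t\alpha$ we rewrite the loss as $\sum_i\int_0^1(-\partial_t\alpha^{(\pi(i))})\,g_{\pi(i)}(t)\,dt$, where $g_{\pi(i)}(t)$ is the conditional expectation of the log-loss at position $\pi(i)$ given that it is masked at time $t$. Inside window $i$, coordinates $\pi(j)$ with $j<i$ are unmasked and those with $j>i$ are masked, each except with probability $\le\epsilon$. So $g_{\pi(i)}(t)$ equals the log-loss at the state $\mathbf{y}_{\pi,i}$ up to $O(\epsilon)$; time-agnosticity of $\mathbf{x}_\theta$ lets us drop the $t$-dependence. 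Outside window $i$ the weight $-\partial_t\alpha^{(\pi(i))}$ equals $\epsilon$, contributing $O(\epsilon)$. The total weight integrates to $1$, so summing over $i$ yields $\mathcal{L}_\pi+O(\epsilon)$.

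The main obstacle is the uniformity of the $O(\epsilon)$ terms. We need $g$ bounded uniformly over all masked states, which the finiteness bound $\varepsilon_\theta$ from Appendix~\ref{sec:appendix_NELBO_finiteness} supplies. The constants must also depend only on $L$ and $\theta$, not on $\pi$, which holds because all windows have the same structure as in the identity case.
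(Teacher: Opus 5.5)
Your proposal is correct and follows the paper's route: the paper's entire proof is the remark that one relabels positions $1,\dots,L$ as $\pi(1),\dots,\pi(L)$ in the proof of Proposition~\ref{cor:unifying}, and your write-up carries out that relabeling explicitly (permuted ARM scheduler, trajectory decomposition, Lemma~\ref{lem:alpha_dominates_marginal} with a union bound, and the velocity-weighted NELBO computation). You were also right to read the $i$-th factor as $\langle\mathbf{x}_\theta^{(\pi(i))}(\mathbf{y}_{\pi,i}),\mathbf{x}^{(\pi(i))}\rangle$, with $\mathbf{y}_{\pi,i}$ revealing $\mathbf{x}^{(\pi(1))},\dots,\mathbf{x}^{(\pi(i-1))}$ in their own positions. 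Taken literally, the stated product would give factor $1$ at any already-revealed position by carry-over unmasking, so it would not be the autoregressive likelihood.
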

The proof can be easily done by replacing indices $1,\dots,L$ with $\pi(1),\dots,\pi(L)$ in the proof of Proposition~\ref{cor:unifying}.

\subsection{\framework{} Can Express Block Diffusion Models}\label{sec:appendix_bd3lm}
In this section, we show that \framework{} can also express block diffusion generation schemes.
\citet{arriola2025bd3lm} propose block discrete denoising diffusion models (BD3LMs), which interpolate between autoregressive and discrete diffusion language models by (i) factorizing a sequence distribution autoregressively over blocks, and (ii) modeling each block-conditional via a discrete denoising diffusion process restricted to that block. 

\noindent\textbf{Brief explanation of BD3LMs.}
Let the length-$L$ sequence be partitioned into $B$ disjoint blocks $\{\mathcal{B}_b\}_{b=1}^B$ and write
$\mathbf{x}^b := \mathbf{x}^{\mathcal{B}_b}$ and $\mathbf{x}^{<b} := \mathbf{x}^{1:b-1}$. Let $|\mathcal{B}_i|=L'$ for all $i\in[B]$ such that $L=BL'$ such that $\mathcal{B}_b = \{(b-1)L'+1,\ldots,bL'\}\subset[L]$. 
BD3LMs define
\begin{equation}
\label{eq:bd3lm_factorization}
p_{\theta,\mathrm{bd3lm}}(\mathbf{x})
= \prod_{b=1}^B p_{\theta,\mathrm{bd3lm}}(\mathbf{x}^b \mid \mathbf{x}^{<b}),
\qquad\text{equivalently}\qquad
\log p_{\theta,\mathrm{bd3lm}}(\mathbf{x})=\sum_{b=1}^B \log p_{\theta,\mathrm{bd3lm}}(\mathbf{x}^b\mid \mathbf{x}^{<b}).
\end{equation}

With a time-agnostic model $\mathbf{x}_\theta(\mathbf{z})$ defined under SUBS parametrization, BD3LM defines the reverse transition restricted to block $b$ and conditional reverse process as follows:
\begin{gather*}
\label{eq:bd3lm_reverse_kernel_block}
p_{\theta,\mathrm{bd3lm}}(\mathbf{z}_s^b \mid \mathbf{z}_t^b, \mathbf{x}^{<b})
:= \sum_{\tilde{\mathbf{x}}^b}
q_{\alpha_{\mathrm{mdlm}}}(\mathbf{z}_s^b \mid \mathbf{z}_t^b, \tilde{\mathbf{x}}^b)\;
p_{\theta,\mathrm{bd3lm}}(\tilde{\mathbf{x}}^b \mid \mathbf{z}_t^b, \mathbf{x}^{<b}),\\
p_{\theta,\mathrm{bd3lm}}(\mathbf{x}^b \mid \mathbf{z}_t^b,\mathbf{x}^{<b})=\prod_{i\in \mathcal{B}_b} p_{\theta,\mathrm{bd3lm}}(\mathbf{x}^{(i)} \mid \mathbf{z}_t^b,\mathbf{x}^{<b})=\prod_{i\in \mathcal{B}_b} \langle\mathbf{x}_\theta^{(i)}( \mathbf{z}_t^b,\mathbf{x}^{<b}),\mathbf{x}^{(i)}\rangle.
\end{gather*}
where the forward process and true posterior are defined the same as in MDLM; yet both act only on tokens in block $b$. 
Consequently, the block-conditional generative distribution is the marginal of the reverse Markov chain:
\begin{equation}
\label{eq:bd3lm_block_generative}
p_{\theta,\mathrm{bd3lm}}(\mathbf{x}^b \mid \mathbf{x}^{<b})
:= \sum_{\mathbf{z}_{t(0:T)}^b}
p_{\theta,\mathrm{bd3lm}}(\mathbf{z}_{t(T)}^b)\,
\Big(\prod_{\tau=1}^T p_{\theta,\mathrm{bd3lm}}(\mathbf{z}_{s}^b \mid \mathbf{z}_{t}^b,\mathbf{x}^{<b})\Big)p_{\theta,\mathrm{bd3lm}}(\mathbf{x}^b|\mathbf{z}^b_{t(0)}).
\end{equation}
Sampling is performed sequentially over blocks: for $b=1,2,\dots,B$, run a diffusion sampler for the conditional
$p_\theta(\mathbf{x}^b\mid \mathbf{x}^{<b})$ and append the generated block, \ie, it iterates the diffusion process, $t=1\rightarrow t=0$, $B$ times sequentially.

In continuous-time limit (with $T\to\infty$), BD3LM derive the simplified NELBO:
\begin{align}
\mathcal{L}_{\mathrm{bd3lm}}(\mathbf{x};\theta)
&:= \sum_{b=1}^B \mathbb{E}_{t\sim \mathrm{Unif}[0,1]}\;
\mathbb{E}_{\mathbf{z}_t^b \sim q_{\alpha_{\mathrm{mdlm}}}(\cdot \mid \mathbf{x}^b)}
\Bigg[
\frac{\alpha_{\mathrm{mdlm}}(t)}{1-\alpha_{\mathrm{mdlm}}(t)}\,\log p_{\theta,\mathrm{bd3lm}}(\mathbf{x}^b\mid \mathbf{z}_t^b,\mathbf{x}^{<b})
\Bigg]\label{eq:bd3lm_simplified_obj}\\
&= \sum_{b=1}^B \mathbb{E}_{t\sim \mathrm{Unif}[0,1]}\;
\mathbb{E}_{\mathbf{z}_t^b \sim q_{\alpha_{\mathrm{mdlm}}}(\cdot \mid \mathbf{x}^b)}
\Bigg[
\frac{\alpha_{\mathrm{mdlm}}(t)}{1-\alpha_{\mathrm{mdlm}}(t)}\sum_{i\in \mathcal{B}_b} \log \langle\mathbf{x}_\theta^{(i)}( \mathbf{z}_t^b,\mathbf{x}^{<b}),\mathbf{x}^{(i)}\rangle
\Bigg]
\end{align}

\noindent\textbf{\framework{} can express BD3LM.} 
As detailed above, BD3LM interpolates autoregressive modeling and diffusion modeling to express block-wise L2R and block-within random ordering. We now explain how our \framework{} can also simulate such a generation scheme only with the diffusion framework. Define $\alpha_{\mathrm{bd3lm},\epsilon}$ and corresponding trajectory sets as follows:

\begin{definition}[BD3LM scheduler $\alpha_{\mathrm{bd3lm},\varepsilon}$]
\label{def:bd3lm_scheduler}
Define block-level time windows
\begin{equation}
\label{eq:bd3lm-windows}
t_b^{\mathrm{start}} := 1-\frac{b}{B},\qquad
t_b^{\mathrm{end}} := 1-\frac{b-1}{B},\qquad
\Delta := t_b^{\mathrm{end}}-t_b^{\mathrm{start}}=\frac1B,
\quad b\in[B].
\end{equation}
Let $S:\mathbb R\to[0,1]$ be the $C^1$ smoothstep
\begin{equation}
\label{eq:smoothstep_bd3lm}
S(u):=
\begin{cases}
0, & u\le 0,\\
3u^2-2u^3, & 0<u<1,\\
1, & u\ge 1.
\end{cases}
\end{equation}
For $\varepsilon\in(0,1)$, define $\alpha_{\mathrm{bd3lm},\varepsilon}\in\mathcal{F}[\emptyset]$ coordinate-wise by
\begin{equation}
\label{eq:alpha-bd3lm-eps}
\alpha_{\mathrm{bd3lm},\varepsilon}^{(i)}(t)
:= 1-\varepsilon t - (1-\varepsilon)\,
S\!\Big(\frac{t-t_{b(i)}^{\mathrm{start}}}{\Delta}\Big),
\qquad t\in[0,1],\ i\in[L].
\end{equation}
where $b(i)$ refers to the unique block index of $i$ such that $b(i)\coloneqq\lceil\frac{i}{L'}\rceil\in [B]$.
\end{definition}

\begin{definition}[Block-window index sets]
\label{def:bd3lm_windows_index}
For each block $b\in[B]$, define the window
\[
t_b^{\mathrm{start}}:=1-\frac{b}{B},\qquad
t_b^{\mathrm{end}}:=1-\frac{b-1}{B},\qquad \Delta:=\frac1B.
\]
Define the corresponding grid-index set
\[
W_b := \{\tau\in[T]: t(\tau)\in[t_b^{\mathrm{start}},t_b^{\mathrm{end}}]\},\qquad
\tau_b^{\mathrm{start}}:=\min W_b,\quad \tau_b^{\mathrm{end}}:=\max W_b.
\]
Assume $T\gg B$ so that $W_b\neq\emptyset$ for all $b\in[B]$.
\end{definition}

\begin{definition}[BD3LM trajectory sets conditioned on $\mathbf{x}$]
\label{def:bd3lm_trajectory_sets}
Let $\mathcal{S}_{\mathrm{absorb}}(\mathbf{x},T)$ be the monotone-masking set (Definition~\ref{def:absorbing_mask_trajectory}), i.e.,
the set of trajectories $(\mathbf{z}_{t(0)},\dots,\mathbf{z}_{t(T)})$ whose masked set grows monotonically
and any unmasked token equals the endpoint $\mathbf{x}$. Define the BD3LM trajectory set as the subset of $\mathcal{S}_{\mathrm{absorb}}(\mathbf{x},T)$,
whose masking times fall inside the designated block windows:
\begin{align*}
&\mathcal{S}_{\mathrm{bd3lm}}(\mathbf{x},T)=\Big\{\mathbf{z}_{t(0:T)}\in\mathcal{S}_{\mathrm{absorb}}(\mathbf{x},T)\ \Big|\ 
\exists(\kappa_i)_{i=1}^L\ \text{s.t. } \kappa_i\in W_{b(i)},\ 
\mathbf{z}^{(i)}_{t(\tau)} = \mathbf{x}^{(i)}\mathds{1}\{\tau<\kappa_i\}+\mathbf{m}\mathds{1}\{\tau\ge \kappa_i\},
\ \forall i,\forall \tau
\Big\},
\end{align*}
where $b(i)\in[B]$ is the (unique) block index of coordinate $i$.
Let $\mathcal{S}_{\mathrm{rest}}(\mathbf{x},T):=\mathcal{S}_{\mathrm{absorb}}(\mathbf{x},T)\setminus\mathcal{S}_{\mathrm{bd3lm}}(\mathbf{x},T)$.
\end{definition}

Since the proof that \framework{} simulates BD3LM follows the same trajectory-set decomposition as in Proposition~\ref{cor:unifying} (ARM case), we provide a brief proof sketch.
Consider \framework{} instantiated with
$\alpha_{\mathcal{F}}\coloneqq\alpha_{\mathrm{bd3lm},\varepsilon}$ and
$\hat\alpha_{\mathcal{F}}\coloneqq\alpha_{\mathrm{bd3lm},\varepsilon}$.

\begin{itemize}
\item \noindent\textbf{Equivalence of generative distribution.}
By construction of $\alpha_{\mathrm{bd3lm},\varepsilon}$, for each coordinate $i$ the forward process masks $i$
almost surely \emph{within} its designated block window, and the event that $i$ is masked outside its window has probability
$O(\varepsilon)$. Taking a union bound over $i\in[L]$ yields
$\mathbb{P}_{\theta,\alpha_{\mathrm{bd3lm},\varepsilon}}(\bigcup_{\mathbf{x}}\mathcal{S}_{\mathrm{rest}}(\mathbf{x},T))=O(\varepsilon)$.
Hence, with probability $1-O(\varepsilon)$ the trajectory lies in $\mathcal{S}_{\mathrm{bd3lm}}$ and the reverse-time denoising
unmasks blocks sequentially in block-wise left-to-right order, while the ordering within each block is unconstrained
(and thus effectively random due to symmetry among indices in the same block). Therefore,
$p_{\theta,\alpha_{\mathrm{bd3lm},\varepsilon}}(\mathbf{x})$ matches the BD3LM block-factorized generation
in \eqref{eq:bd3lm_factorization} up to an $O(\varepsilon)$ error.
\item \noindent\textbf{Equivalence of NELBO.}
For the same reason, the NELBO of \framework{} decomposes block-wise up to $O(\varepsilon)$.
Since we set $\alpha_{\mathcal{F}}=\hat\alpha_{\mathcal{F}}$, $\mathcal{L}_{\mathrm{velocity}}$ term in $\mathcal{L}_{\mathrm{\framework{}}}$ vanishes,
and only the reconstruction term remains.
For each $(i)$-th token and conditional expectation of reconstruction loss for $(i)$-th token, during the time window of block $b(i)$, the probability of BD3LM-pattern masked sequence becomes $1-O(\epsilon)$, so the resulting reconstruction
term coincides with the BD3LM conditional likelihood
$\log p_{\theta,\mathrm{bd3lm}}(\mathbf{x}^{(i)}\mid \mathbf{z}_t^{(i)},\mathbf{x}^{<b(i)})$ with $O(\epsilon)$ error.
Outside the window, the time integral of $\mathcal{L}_{\mathrm{main}}$ also falls into $O(\epsilon)$.
Altogether, $\mathcal{L}_{\mathrm{OeMDM}}(\mathbf{x};\theta,\alpha_{\mathrm{bd3lm},\varepsilon},\alpha_{\mathrm{bd3lm},\varepsilon})
=\mathcal{L}_{\mathrm{bd3lm}}(\mathbf{x};\theta)+O(\varepsilon)$.
\end{itemize}

\subsection{Interpreting GenMD4 through \framework{}}\label{sec:appendix_genmd4}
In this section, we compare GenMD4 to our \method{}. 
We first recall the forward process, true posterior, parametrized reverse process, and continuous-time NELBO of \framework{}:
\begin{align*}
q_{\alpha_{\mathcal{F}}}(\mathbf{z}_t^{(i)} \mid \mathbf{x}) &= 
\mathrm{Cat}\!\left(\,
\alpha_{\mathcal{F}}^{(i)}(u,t)\mathbf{x}^{(i)} 
+ (1 - \alpha_{\mathcal{F}}^{(i)}(u,t))\mathbf{m}\right),
\\
q_{\alpha_{\mathcal{F}}}(\mathbf{z}_s^{(i)} \mid \mathbf{z}_t, \mathbf{x}) 
&=
\begin{cases}
\mathrm{Cat}(\mathbf{z}_t^{(i)}),  
&\text{if } \mathbf{z}_t^{(i)} \neq \mathbf{m},\\
\mathrm{Cat}\!\left(\tfrac{(1-\alpha_{\mathcal{F}}^{(i)}(u,s))\mathbf{m} + (\alpha_{\mathcal{F}}^{(i)}(u,s) - \alpha_{\mathcal{F}}^{(i)}(u,t))\mathbf{x}^{(i)}}{1 - \alpha_{\mathcal{F}}^{(i)}(u,t)}\right),
&\text{if } \mathbf{z}_t^{(i)} = \mathbf{m},
\end{cases}
\\
p_{\theta,\hat{\alpha}_{\mathcal{F}}}(\mathbf{z}_s^{(i)} \mid \mathbf{z}_t)
&=
\begin{cases}
\mathrm{Cat}(\mathbf{z}_t^{(i)}), & \text{if } \mathbf{z}_t^{(i)} \neq \mathbf{m}, \\
\mathrm{Cat}\!\left(
  \tfrac{
      (1-\hat{\alpha}_{\mathcal{F}}^{(i)}(\hat u,s))\mathbf{m}
      + (\hat{\alpha}_{\mathcal{F}}^{(i)}(\hat u,s)-\hat{\alpha}_{\mathcal{F}}^{(i)}(\hat u,t))
          \mathbf{x}^{(i)}_{\theta}(\mathbf{z}_t,t)
   }{
      1 - \hat{\alpha}_{\mathcal{F}}^{(i)}(\hat u,t)
   }
\right), 
& \text{if } \mathbf{z}_t^{(i)} = \mathbf{m}.
\end{cases}
\end{align*}
\begin{align*}
    \mathcal{L}_{\mathrm{\framework{}}}
&=\int_{0}^{1}
\mathbb{E}_{q_{\alpha_{\mathcal{F}}}}
\Biggl[
\sum_{i=1}^{L}
\langle\mathbf{z}_t^{(i)},\mathbf{m}\rangle
\Biggl\{
-A^{(i)}
\log\langle \mathbf{x}_\theta^{(i)}(\mathbf{z}_t,t),\mathbf{x}^{(i)}\rangle+A^{(i)}(\log A^{(i)}-\log\hat A^{(i)})-(A^{(i)}-\hat A^{(i)})\Biggl\}
\Biggr] dt. 
\end{align*}

In contrast, GenMD4~\citep{shi2025simplified} defines vocabulary-wise scheduler $\alpha_{\text{Gen}}:\mathcal{T}\rightarrow[0,1]^{V+1}$ where each dimension represents the different amount of noise for corresponding word. In this regard, forward process adds different amount of noise to each position depending on which \emph{word} it owes, \textit{e.g.}, if $\alpha_{\mathrm{Gen}}(t)_{\text{[``dog'']}}>\alpha_{\mathrm{Gen}}(t)_{\text{[``cat'']}}$\footnote{Here, $\alpha_{\mathrm{Gen}}(t)\in[0,1]^{V+1}$ is indexed by vocabulary tokens, and we simply write $\alpha_{\mathrm{Gen}}(t)_{\text{[``dog'']}}$ as the entry corresponding to the one-hot index of the token ``dog''.}, scheduler noise ``cat'' more than ``dog''. To investigate further,
we provide the forward process, true posterior, parametrized reverse process, and continuous-time NELBO of GenMD4 with our notations. We denote by $\mathbf{1}\in\mathbb{R}^{V+1}$ the all-ones vector, and use $\odot$ for element-wise product. For element-wise division we simply write it as a fraction. Then,
\begin{align*}
q_{\alpha_{\text{Gen}}}(\mathbf{z}_t^{(i)} \mid \mathbf{x}) &= 
\mathrm{Cat}\!\left(\,
\langle \alpha_{\text{Gen}}(t),\mathbf{x}^{(i)}\rangle\mathbf{x}^{(i)} 
+ \langle \mathbf{1}-\alpha_{\text{Gen}}(t),\mathbf{x}^{(i)}\rangle\,\mathbf{m}\right),
\\
q_{\alpha_{\text{Gen}}}(\mathbf{z}_s^{(i)} \mid \mathbf{z}_t, \mathbf{x}) 
&=
\begin{cases}
\mathrm{Cat}(\mathbf{z}_t^{(i)}), & \text{if } \mathbf{z}_t^{(i)} \neq \mathbf{m}, \\
\mathrm{Cat}\!\left(
  \left\langle
      \tfrac{\mathbf{1}-\alpha_{\text{Gen}}(s)}{\mathbf{1}-\alpha_{\text{Gen}}(t)},\,\mathbf{x}^{(i)}
  \right\rangle \mathbf{m}
  +
  \left(
   \tfrac{\alpha_{\text{Gen}}(s)-\alpha_{\text{Gen}}(t)}{\mathbf{1}-\alpha_{\text{Gen}}(t)}
  \right)
  \odot
  \mathbf{x}^{(i)}
\right), 
& \text{if } \mathbf{z}_t^{(i)} = \mathbf{m},
\end{cases}
\\
p_{\theta,\alpha_{\text{Gen}}}(\mathbf{z}_s^{(i)} \mid \mathbf{z}_t)
&=
\begin{cases}
\mathrm{Cat}(\mathbf{z}_t^{(i)}), & \text{if } \mathbf{z}_t^{(i)} \neq \mathbf{m}, \\
\mathrm{Cat}\!\left(
  \left\langle
      \tfrac{\mathbf{1}-\alpha_{\text{Gen}}(s)}{\mathbf{1}-\alpha_{\text{Gen}}(t)},\,\mathbf{x}^{(i)}_\theta(\mathbf{z}_t,t)
  \right\rangle \mathbf{m}
  +
  \left(
   \tfrac{\alpha_{\text{Gen}}(s)-\alpha_{\text{Gen}}(t)}{\mathbf{1}-\alpha_{\text{Gen}}(t)}
  \right)
  \odot
  \mathbf{x}^{(i)}_{\theta}(\mathbf{z}_t,t)
\right), 
& \text{if } \mathbf{z}_t^{(i)} = \mathbf{m}.
\end{cases}
\end{align*}
\begin{align}
    \mathcal{L}_{\text{Gen}}&=\int_{0}^{1}
\mathbb{E}_{q_{\alpha_{\text{Gen}}}}
\Biggl[
\sum_{i=1}^{L}
\langle\mathbf{z}_t^{(i)},\mathbf{m}\rangle
\Biggl\{
\bigg\langle\frac{\partial_t\alpha_{\text{Gen}}(t)}{\mathbf{1}-\alpha_{\text{Gen}}(t)},\Big(\log\langle \mathbf{x}_\theta^{(i)}(\mathbf{z}_t,t),\mathbf{x}^{(i)}\rangle\Big)\mathbf{x}^{(i)}+\mathbf{x}^{(i)}-\mathbf{x}_\theta^{(i)}(\mathbf{z}_t,t)\bigg\rangle\Biggl\}
\Biggr] dt\label{eq:NELBO_genmd4}
\end{align}
where $\alpha_{\text{Gen}}:\mathcal{T}\rightarrow[0,1]^{V+1}$ is a vocabulary-wise scheduler, so the noising velocity is determined only by single token. 
More specifically, in the forward process, the term
$\langle 1-\alpha_{\mathrm{Gen}}(t), \mathbf{x}^{(i)} \rangle$
represents the probability that the $i$-th token is replaced by a mask.
When $\mathbf{x}^{(i)}$ is a one-hot vector corresponding to the $j$-th vocabulary token, this masking probability simplifies to $1-\alpha_{\mathrm{Gen}}(t)_j$.
Note that the equations might appear different from \citet{shi2025simplified} since we explicitly use categorical distribution notation, but they are algebraically equivalent.

\noindent\textbf{Interpretation.}
GenMD4 and our \framework{} both consider learnable schedulers, but they differ fundamentally in ``what it dependent to'' and in the expressive freedom of the scheduler.
In GenMD4, the scheduler is a \emph{vocabulary-wise} vector \(\alpha_{\text{Gen}}(t)\in[0,1]^{V+1}\), so the noising velocity is determined only by the token identity (i.e., which vocabulary entry the clean token belongs to), and is shared across all contexts and positions.
In contrast, our framework treats the scheduler as a \emph{free-form object} \(\alpha_{\mathcal{F}}^{(i)}(u,t)\) with minimal constraints, allowing it to depend on rich context \(u\) (e.g., the entire sequence \(\mathbf{x}\), the current state \(\mathbf{z}_t\), and the position \(i\)).
As a result, \method{} can represent a broad family of generation orderings within a single framework (e.g., ARMs, MDMs, BD3LMs) by appropriate choices of \(\alpha_{\mathcal{F}}\), while GenMD4 corresponds to a specific restricted parametrization.
Moreover, we show below that GenMD4 \emph{does} fall into the \framework{} framework in the continuous-time (infinitesimal) limit.

\noindent\textbf{Expressing the GenMD4 forward process and true posterior within \framework{}.}
For a fixed position \(i\), let
\[
\alpha_{\mathcal{F}}^{(i)}(u,t)
\;:=\;
\langle \alpha_{\text{Gen}}(t),\mathbf{x}^{(i)}\rangle
\;\in\;[0,1],
\]
where \(u\) contains \(\mathbf{x}^{(i)}\) and $\alpha_{\mathrm{Gen}}(t)$.
Since \(\mathbf{x}^{(i)}\) is one-hot, we have
\(
\langle \mathbf{1}-\alpha_{\text{Gen}}(t),\mathbf{x}^{(i)}\rangle
=
1-\langle \alpha_{\text{Gen}}(t),\mathbf{x}^{(i)}\rangle
=
1-\alpha_{\mathcal{F}}^{(i)}(u,t).
\)
Thus the GenMD4 forward process is \emph{exactly} the \framework{} forward process with scheduler defined as above:
\begin{align*}
q_{\alpha_{\text{Gen}}}(\mathbf{z}_t^{(i)}\mid \mathbf{x})
=
\mathrm{Cat}\!\left(
\langle \alpha_{\text{Gen}}(t),\mathbf{x}^{(i)}\rangle\mathbf{x}^{(i)}
+
\langle \mathbf{1}-\alpha_{\text{Gen}}(t),\mathbf{x}^{(i)}\rangle\mathbf{m}
\right)\\
=
\mathrm{Cat}\!\left(
\alpha_{\mathcal{F}}^{(i)}(u,t)\mathbf{x}^{(i)}
+
(1-\alpha_{\mathcal{F}}^{(i)}(u,t))\mathbf{m}
\right)
=
q_{\alpha_{\mathcal{F}}}(\mathbf{z}_t^{(i)}\mid \mathbf{x}).
\end{align*}

Moreover, the GenMD4 true posterior also reduces to the \framework{} true posterior with the same scalar scheduler \(\alpha_{\mathcal{F}}^{(i)}\).
Indeed, for the mask case \(\mathbf{z}_t^{(i)}=\mathbf{m}\), one-hotness implies
\[
\left\langle
\frac{\mathbf{1}-\alpha_{\text{Gen}}(s)}{\mathbf{1}-\alpha_{\text{Gen}}(t)},\,\mathbf{x}^{(i)}
\right\rangle
=
\frac{1-\langle \alpha_{\text{Gen}}(s),\mathbf{x}^{(i)}\rangle}{1-\langle \alpha_{\text{Gen}}(t),\mathbf{x}^{(i)}\rangle}
=
\frac{1-\alpha_{\mathcal{F}}^{(i)}(u,s)}{1-\alpha_{\mathcal{F}}^{(i)}(u,t)},
\]
and
\[
\left(
\frac{\alpha_{\text{Gen}}(s)-\alpha_{\text{Gen}}(t)}{\mathbf{1}-\alpha_{\text{Gen}}(t)}
\right)\odot \mathbf{x}^{(i)}
=
\left\langle
\frac{\alpha_{\text{Gen}}(s)-\alpha_{\text{Gen}}(t)}{\mathbf{1}-\alpha_{\text{Gen}}(t)},\,\mathbf{x}^{(i)}
\right\rangle \mathbf{x}^{(i)}
=
\frac{\alpha_{\mathcal{F}}^{(i)}(u,s)-\alpha_{\mathcal{F}}^{(i)}(u,t)}{1-\alpha_{\mathcal{F}}^{(i)}(u,t)}\,\mathbf{x}^{(i)}.
\]
Plugging these identities into \(q_{\alpha_{\text{Gen}}}(\mathbf{z}_s^{(i)}\mid \mathbf{z}_t,\mathbf{x})\) shows that the GenMD4 true posterior coincides with
\[
q_{\alpha_{\mathcal{F}}}(\mathbf{z}_s^{(i)}\mid \mathbf{z}_t,\mathbf{x})
=
\mathrm{Cat}\!\left(
\frac{1-\alpha_{\mathcal{F}}^{(i)}(u,s)}{1-\alpha_{\mathcal{F}}^{(i)}(u,t)}\,\mathbf{m}
+
\frac{\alpha_{\mathcal{F}}^{(i)}(u,s)-\alpha_{\mathcal{F}}^{(i)}(u,t)}{1-\alpha_{\mathcal{F}}^{(i)}(u,t)}\,\mathbf{x}^{(i)}
\right),
\qquad \text{when }\mathbf{z}_t^{(i)}=\mathbf{m},
\]
and the non-mask case \(\mathbf{z}_t^{(i)}\neq \mathbf{m}\) is identical in both formulations.
Therefore, GenMD4's forward process and true posterior are fully captured by \framework{} via the context-dependent scalar scheduler \(\alpha_{\mathcal{F}}^{(i)}(u,t)=\langle \alpha_{\text{Gen}}(t),\mathbf{x}^{(i)}\rangle\).
What remains is to compare the \emph{model-parametrized} reverse process.

\noindent\textbf{Expressing GenMD4 reverse process within the \framework{} framework via reparametrization.}
In the discrete-time reverse kernel of GenMD4, the token component is reweighted as
\(
\big(\tfrac{\alpha_{\text{Gen}}(s)-\alpha_{\text{Gen}}(t)}{1-\alpha_{\text{Gen}}(t)}\big)\odot \mathbf{x}^{(i)}_\theta(\mathbf{z}_t,t)
\),
which depends on the pair $(s,t)$ through element-wise weights.
For a general vector-valued scheduler $\alpha_{\text{Gen}}(\cdot)$ this token-wise reweighting cannot be written in the plain scalar-mixture form
\(
\mathrm{Cat}\!\big(\beta\,\mathbf{m}+(1-\beta)\mathbf{x}^{(i)}_\theta(\mathbf{z}_t,t)\big)
\)
of the discrete \framework{} reverse process without allowing an additional $(s,t)$-dependent transformation inside velocity.
Hence, exact discrete-time equality for arbitrary $s<t$ is in general not possible.

However, in continuous time, the inclusion becomes exact at the level of the infinitesimal reverse transition.
Let $s\coloneqq t-\mathrm{d}t$ and define the GenMD4 velocity
\[
A_{\text{Gen}}(t)
:= -\frac{\partial_t \alpha_{\text{Gen}}(t)}{\mathbf{1}-\alpha_{\text{Gen}}(t)}
\in\mathbb{R}^{V+1}_{+},
\qquad\text{(element-wise division),}
\qquad A_{\text{Gen}}(t)>\mathbf{0}.
\]
A first-order expansion yields
\[
\frac{\alpha_{\text{Gen}}(t-\mathrm{d}t)-\alpha_{\text{Gen}}(t)}{\mathbf{1}-\alpha_{\text{Gen}}(t)}
=
A_{\text{Gen}}(t)\,\mathrm{d}t+o(\mathrm{d}t),
\qquad
\frac{\mathbf{1}-\alpha_{\text{Gen}}(t-\mathrm{d}t)}{\mathbf{1}-\alpha_{\text{Gen}}(t)}
=
\mathbf{1}-A_{\text{Gen}}(t)\,\mathrm{d}t+o(\mathrm{d}t).
\]
Plugging these into the GenMD4 reverse kernel (mask case) gives
\begin{align*}
p_{\theta,\alpha_{\text{Gen}}}\!\left(\mathbf{z}^{(i)}_{t-\mathrm{d}t}\mid \mathbf{z}_t\right)
=
\mathrm{Cat}\!\left(
  \big(1-\langle A_{\text{Gen}}(t),\mathbf{x}^{(i)}_\theta(\mathbf{z}_t,t)\rangle\,\mathrm{d}t\big)\mathbf{m}
  +
  \big(A_{\text{Gen}}(t)\odot \mathbf{x}^{(i)}_\theta(\mathbf{z}_t,t)\big)\mathrm{d}t
\right)+o(\mathrm{d}t),
\end{align*}
where we used $\langle\mathbf{1},\mathbf{x}^{(i)}_\theta\rangle=1$ and $\mathbf{z}^{(i)}_t=\mathbf{m}$.

Define the reparametrized model output
\[
\tilde{\mathbf{x}}^{(i)}_\theta(\mathbf{z}_t,t)
:=
\frac{A_{\text{Gen}}(t)\odot \mathbf{x}^{(i)}_\theta(\mathbf{z}_t,t)}
{\left\langle A_{\text{Gen}}(t),\,\mathbf{x}^{(i)}_\theta(\mathbf{z}_t,t)\right\rangle}
\in\Delta^{V+1},
\qquad\text{(note that }x^{(i)}_{\theta,\mathbf{m}}=0\Rightarrow \tilde x^{(i)}_{\theta,\mathbf{m}}=0\text{)}.
\]
Then
\[
A_{\text{Gen}}(t)\odot \mathbf{x}^{(i)}_\theta(\mathbf{z}_t,t)
=
\left\langle A_{\text{Gen}}(t),\mathbf{x}^{(i)}_\theta(\mathbf{z}_t,t)\right\rangle\,
\tilde{\mathbf{x}}^{(i)}_\theta(\mathbf{z}_t,t),
\]
and moreover the scalar hazard can be written using only $(A_{\text{Gen}},\tilde{\mathbf{x}}_\theta)$ as
\begin{align*}
\left\langle A_{\text{Gen}}^{\odot-1}(t),\tilde{\mathbf{x}}^{(i)}_\theta(\mathbf{z}_t,t)\right\rangle
&=
\left\langle A_{\text{Gen}}^{\odot-1}(t),
\frac{A_{\text{Gen}}(t)\odot \mathbf{x}^{(i)}_\theta(\mathbf{z}_t,t)}
{\left\langle A_{\text{Gen}}(t),\mathbf{x}^{(i)}_\theta(\mathbf{z}_t,t)\right\rangle}
\right\rangle\\
&=
\frac{\langle \mathbf{1},\mathbf{x}^{(i)}_\theta(\mathbf{z}_t,t)\rangle}
{\left\langle A_{\text{Gen}}(t),\mathbf{x}^{(i)}_\theta(\mathbf{z}_t,t)\right\rangle}
=
\frac{1}{\left\langle A_{\text{Gen}}(t),\mathbf{x}^{(i)}_\theta(\mathbf{z}_t,t)\right\rangle},
\end{align*}
where $\odot^{-1}$ denotes the element-wise inverse operator, applied to each coordinate of $A_{\mathrm{Gen}}(t)$. Hence,
\[
\left\langle A_{\text{Gen}}(t),\mathbf{x}^{(i)}_\theta(\mathbf{z}_t,t)\right\rangle
=
\left\langle A_{\text{Gen}}^{\odot-1}(t),\tilde{\mathbf{x}}^{(i)}_\theta(\mathbf{z}_t,t)\right\rangle^{-1}.
\]
Substituting into the infinitesimal kernel yields the scalar-mixture form
\begin{align*}
p_{\theta,\alpha_{\text{Gen}}}\!\left(\mathbf{z}^{(i)}_{t-\mathrm{d}t}\mid \mathbf{z}_t\right)
=
\mathrm{Cat}\!\left(
  \big(1-\hat A^{(i)}(\hat u,t)\,\mathrm{d}t\big)\mathbf{m}
  +
  \hat A^{(i)}(\hat u,t)\,\mathrm{d}t\;\mathbf{x}^{(i)}_{\theta,\mathrm{\framework{}}}(\mathbf{z}_t,t)
\right)+o(\mathrm{d}t),
\qquad \mathbf{z}^{(i)}_t=\mathbf{m},
\end{align*}
by choosing
\[
\mathbf{x}^{(i)}_{\theta,\mathrm{\framework{}}}(\mathbf{z}_t,t):=\tilde{\mathbf{x}}^{(i)}_\theta(\mathbf{z}_t,t),
\qquad
\hat A^{(i)}(\hat u,t):=\left\langle A_{\text{Gen}}^{\odot-1}(t),\mathbf{x}^{(i)}_{\theta,\mathrm{\framework{}}}(\mathbf{z}_t,t)\right\rangle^{-1},
\]
where $\hat u$ contains $(\mathbf{z}_t,t)$ and the scheduler (equivalently $A_{\text{Gen}}$).
Therefore, GenMD4 is exactly expressed by the continuous-time (velocity-form) \framework{} reverse parameterization.

\noindent\textbf{NELBO equivalence.}
We now show that the continuous-time NELBO of GenMD4 coincides with the \framework{} NELBO under the above identifications.
First, since the forward process (and hence the true posterior) is identical under
\(\alpha_{\mathcal{F}}^{(i)}(u,t)=\langle \alpha_{\text{Gen}}(t),\mathbf{x}^{(i)}\rangle\),
the trajectory distributions agree, so the expectations are taken over the same law:
\(
\mathbb{E}_{q_{\alpha_{\mathcal{F}}}}[\cdot]=\mathbb{E}_{q_{\alpha_{\text{Gen}}}}[\cdot].
\)
Next, consider the \framework{} integrand inside \(\mathcal{L}_{\mathrm{\framework{}}}\) on masked positions.
With the infinitesimal reparametrization established above, we substitute
\(
\mathbf{x}^{(i)}_{\theta,\mathrm{\framework{}}}(\mathbf{z}_t,t)=\tilde{\mathbf{x}}^{(i)}_\theta(\mathbf{z}_t,t)
\)
and
\(
\hat A^{(i)}(\hat u,t)=\langle A_{\text{Gen}}^{\odot-1}(t),\mathbf{x}^{(i)}_{\theta,\mathrm{\framework{}}}(\mathbf{z}_t,t)\rangle^{-1}
=\langle A_{\text{Gen}}(t),\mathbf{x}^{(i)}_\theta(\mathbf{z}_t,t)\rangle.
\)
Moreover, for the forward (true) scalar velocity we take the GenMD4-induced hazard
\(
A^{(i)}=\langle A_{\text{Gen}}(t),\mathbf{x}^{(i)}\rangle
\)
(which is consistent with the above choice of \(\alpha_{\mathcal{F}}^{(i)}\)).
Using one-hotness of \(\mathbf{x}^{(i)}\), we have
\[
\Big\langle \tilde{\mathbf{x}}^{(i)}_\theta(\mathbf{z}_t,t),\mathbf{x}^{(i)}\Big\rangle
=
\frac{\big\langle A_{\text{Gen}}(t)\odot \mathbf{x}^{(i)}_\theta(\mathbf{z}_t,t),\mathbf{x}^{(i)}\big\rangle}
{\big\langle A_{\text{Gen}}(t),\mathbf{x}^{(i)}_\theta(\mathbf{z}_t,t)\big\rangle}
=
\frac{\big\langle A_{\text{Gen}}(t),\mathbf{x}^{(i)}\big\rangle\,\big\langle \mathbf{x}^{(i)}_\theta(\mathbf{z}_t,t),\mathbf{x}^{(i)}\big\rangle}
{\big\langle A_{\text{Gen}}(t),\mathbf{x}^{(i)}_\theta(\mathbf{z}_t,t)\big\rangle}.
\]
Substituting this identity together with
\(A^{(i)}=\langle A_{\text{Gen}}(t),\mathbf{x}^{(i)}\rangle\)
and
\(\hat A^{(i)}=\langle A_{\text{Gen}}(t),\mathbf{x}^{(i)}_\theta(\mathbf{z}_t,t)\rangle\)
into the \framework{} integrand
\(
-A^{(i)}\log\langle \mathbf{x}^{(i)}_{\theta,\mathrm{\framework{}}},\mathbf{x}^{(i)}\rangle
+ A^{(i)}(\log A^{(i)}-\log\hat A^{(i)})-(A^{(i)}-\hat A^{(i)})
\):

\[
\begin{aligned}
&-A^{(i)}\log\Big\langle \mathbf{x}^{(i)}_{\theta,\mathrm{\framework{}}}(\mathbf{z}_t,t),\mathbf{x}^{(i)}\Big\rangle
+ A^{(i)}\big(\log A^{(i)}-\log\hat A^{(i)}\big)-(A^{(i)}-\hat A^{(i)})\\
&=
-A^{(i)}\log\Big\langle \tilde{\mathbf{x}}^{(i)}_\theta(\mathbf{z}_t,t),\mathbf{x}^{(i)}\Big\rangle
+ A^{(i)}\big(\log A^{(i)}-\log\hat A^{(i)}\big)-(A^{(i)}-\hat A^{(i)})\\
&=
-A^{(i)}\log\!\left(
\frac{\big\langle A_{\text{Gen}}(t),\mathbf{x}^{(i)}\big\rangle\,
      \big\langle \mathbf{x}^{(i)}_\theta(\mathbf{z}_t,t),\mathbf{x}^{(i)}\big\rangle}
     {\big\langle A_{\text{Gen}}(t),\mathbf{x}^{(i)}_\theta(\mathbf{z}_t,t)\big\rangle}
\right)
+ A^{(i)}\big(\log A^{(i)}-\log\hat A^{(i)}\big)-(A^{(i)}-\hat A^{(i)})\\
&=
-A^{(i)}\Big(
\log\big\langle A_{\text{Gen}}(t),\mathbf{x}^{(i)}\big\rangle
+\log\big\langle \mathbf{x}^{(i)}_\theta(\mathbf{z}_t,t),\mathbf{x}^{(i)}\big\rangle
-\log\big\langle A_{\text{Gen}}(t),\mathbf{x}^{(i)}_\theta(\mathbf{z}_t,t)\big\rangle
\Big)\\
&\qquad\qquad
+ A^{(i)}\log A^{(i)}-A^{(i)}\log\hat A^{(i)}-A^{(i)}+\hat A^{(i)}\\
&=
-A^{(i)}\log\big\langle A_{\text{Gen}}(t),\mathbf{x}^{(i)}\big\rangle
-A^{(i)}\log\big\langle \mathbf{x}^{(i)}_\theta(\mathbf{z}_t,t),\mathbf{x}^{(i)}\big\rangle
+ A^{(i)}\log\big\langle A_{\text{Gen}}(t),\mathbf{x}^{(i)}_\theta(\mathbf{z}_t,t)\big\rangle\\
&\qquad\qquad
+ A^{(i)}\log A^{(i)}-A^{(i)}\log\hat A^{(i)}-A^{(i)}+\hat A^{(i)}.
\end{aligned}
\]
Now substitute \(A^{(i)}=\big\langle A_{\text{Gen}}(t),\mathbf{x}^{(i)}\big\rangle\) and
\(\hat A^{(i)}=\big\langle A_{\text{Gen}}(t),\mathbf{x}^{(i)}_\theta(\mathbf{z}_t,t)\big\rangle\).
Then the terms
\(-A^{(i)}\log\langle A_{\text{Gen}}(t),\mathbf{x}^{(i)}\rangle + A^{(i)}\log A^{(i)}\)
cancel, and the terms
\(A^{(i)}\log\langle A_{\text{Gen}}(t),\mathbf{x}^{(i)}_\theta(\mathbf{z}_t,t)\rangle - A^{(i)}\log\hat A^{(i)}\)
also cancel, yielding
\[
-A^{(i)}\log\big\langle \mathbf{x}^{(i)}_\theta(\mathbf{z}_t,t),\mathbf{x}^{(i)}\big\rangle
-A^{(i)}+\hat A^{(i)}
=
-\big\langle A_{\text{Gen}}(t),\mathbf{x}^{(i)}\big\rangle
\log\big\langle \mathbf{x}^{(i)}_\theta(\mathbf{z}_t,t),\mathbf{x}^{(i)}\big\rangle
-\big\langle A_{\text{Gen}}(t),\mathbf{x}^{(i)}\big\rangle
+\big\langle A_{\text{Gen}}(t),\mathbf{x}^{(i)}_\theta(\mathbf{z}_t,t)\big\rangle.
\]
Finally, using \(\frac{\partial_t\alpha_{\text{Gen}}(t)}{\mathbf{1}-\alpha_{\text{Gen}}(t)}=-A_{\text{Gen}}(t)\), the \framework{} integrand becomes
\begin{gather*}
-\big\langle A_{\text{Gen}}(t),\mathbf{x}^{(i)}\big\rangle
\log\big\langle \mathbf{x}^{(i)}_\theta(\mathbf{z}_t,t),\mathbf{x}^{(i)}\big\rangle
-\big\langle A_{\text{Gen}}(t),\mathbf{x}^{(i)}\big\rangle
+\big\langle A_{\text{Gen}}(t),\mathbf{x}^{(i)}_\theta(\mathbf{z}_t,t)\big\rangle\\
=\left\langle\frac{\partial_t\alpha_{\text{Gen}}(t)}{\mathbf{1}-\alpha_{\text{Gen}}(t)},\,
(\log\langle \mathbf{x}^{(i)}_\theta(\mathbf{z}_t,t),\mathbf{x}^{(i)}\rangle)\mathbf{x}^{(i)}+\mathbf{x}^{(i)}-\mathbf{x}^{(i)}_\theta(\mathbf{z}_t,t)\right\rangle
\end{gather*}
which matches the GenMD4 integrand in \eqref{eq:NELBO_genmd4}.

\section{Training Stabilization Techniques}\label{sec:appendix_stabilization}
\subsection{REINFORCE Leave-One-Out}\label{sec:appendix_rloo}
In Section~\ref{sec:training_algorithm}, we have stated that $\mathbb{E}_{q_{\alpha_\phi}}[\nabla_\phi \log q_{\alpha_\phi} \cdot (\mathcal{L}_{\mathrm{main}}+\mathcal{L}_{\mathrm{velocity}})]$ in $\nabla_\phi\mathcal{L}_{\mathrm{\method{}}}=\mathbb{E}_{q_{\alpha_\phi}}[\nabla_\phi \log q_{\alpha_\phi} \cdot (\mathcal{L}_{\mathrm{main}}+\mathcal{L}_{\mathrm{velocity}})]+\mathbb{E}_{q_{\alpha_\phi}}[\nabla_\phi (\mathcal{L}_{\mathrm{main}}+\mathcal{L}_{\mathrm{velocity}})]$ is a high-variance estimator.
Therefore, we use a low-variance estimator of such optimization problems proposed by \citet{kool2019rloo}. Before deriving a low-variance estimator, we treat $\mathbb{E}_{q_{\alpha_\phi}}[\nabla_\phi \log q_{\alpha_\phi} \cdot (\mathcal{L}_{\mathrm{main}}+\mathcal{L}_{\mathrm{velocity}})]$ as follows:
\begin{align*}
    \mathbb{E}_{q_{\alpha_\phi}}[\nabla_\phi \log q_{\alpha_\phi} \cdot (\mathcal{L}_{\mathrm{main}}+\mathcal{L}_{\mathrm{velocity}})]=\nabla_\phi\mathbb{E}_{q_{\alpha_\phi}}[\text{Sgd} (\mathcal{L}_{\mathrm{main}}+\mathcal{L}_{\mathrm{velocity}})]=\nabla_\phi\mathbb{E}_{\mathbf{z}_t\sim q_{\alpha_\phi}}[\text{Sgd} (\mathcal{L}_{\mathbf{z}_t})],
\end{align*}
where we denote stop-gradient as Sgd and $\mathcal{L}_{\text{main}}+\mathcal{L}_{\text{velocity}}$ for $\mathbf{z}_t$ as $\mathcal{L}_{\mathbf{z}_t}$ as in main paper. Then, the low-variance estimator~\citep{kool2019rloo} is given as follows:

\begin{align}
\nabla_\phi \mathbb{E}_{\mathbf{z}_t\sim q_{\alpha_\phi}}
\bigl[
\operatorname{Sgd}\bigl(
\mathcal{L}_{\mathbf{z}_t}
\bigr)
\bigr]
&\approx
\frac{1}{k}
\sum_{i=1}^k
\nabla_\phi \log q_{\alpha_\phi}(\mathbf{z}_t^i|\mathbf{x})
\left(
\mathcal{L}_{\mathbf{z}_t^i}
-
\frac{1}{k-1}
\sum_{j \ne i}
\mathcal{L}_{\mathbf{z}_t^j}
\right)
\\
&=
\frac{1}{k-1}
\sum_{i=1}^k
\nabla_\phi \log q_{\alpha_\phi}(\mathbf{z}_t^i|\mathbf{x})
\left(
\mathcal{L}_{\mathbf{z}_t^i}
-
\frac{1}{k}
\sum_{j = 1}^k
\mathcal{L}_{\mathbf{z}_t^j}
\right).
\end{align}

Here we set $k=2$ so that,

\begin{align}
\nabla_\phi \mathbb{E}_{\mathbf{z}_t\sim q_{\alpha_\phi}}
\bigl[
\operatorname{Sgd}\bigl(
\mathcal{L}_{\mathbf{z}_t}
\bigr)
\bigr]
\approx
\frac12
\Bigl(
\nabla_\phi \log q_{\alpha_\phi}(\mathbf{z}_t^1|\mathbf{x})
-
\nabla_\phi \log q_{\alpha_\phi}(\mathbf{z}_t^2|\mathbf{x})
\Bigr)
\Bigl(
\mathcal{L}_{\mathbf{z}^1_t}-\mathcal{L}_{\mathbf{z}^2_t}
\Bigr).
\label{eq:rloo_phi}
\end{align}

\noindent\textbf{Final gradient estimation}
Let
\begin{align*}
    \mathcal{\widehat L}_{\text{\method{}}}(\mathbf{z}_t^1,\mathbf{z}_t^2)\coloneqq
    \tfrac{1}{2}(\mathcal{L}_{\mathbf{z}_t^1}+\mathcal{L}_{\mathbf{z}_t^2})+\mathcal{L}_{\text{rloo}}(\mathbf{z}_t^1,\mathbf{z}_t^2),\qquad\text{where}\quad\mathcal{L}_{\text{rloo}}(\mathbf{z}_t^1,\mathbf{z}_t^2)\coloneqq\frac{1}{2}
\log \tfrac{q_{\alpha_\phi}(\mathbf{z}_t^1|\mathbf{x})}{q_{\alpha_\phi}(\mathbf{z}_t^2|\mathbf{x})}
\Bigl(
\operatorname{Sgd}(\mathcal{L}_{\mathbf{z}_t^1})
-
\operatorname{Sgd}(\mathcal{L}_{\mathbf{z}_t^2})
\Bigr),
\end{align*}
such that $\mathbb{E}_{\mathbf{z}_t^1,\mathbf{z}_t^2\sim q_{\alpha_\phi}}[\nabla_\phi \mathcal{L}_{\mathrm{rloo}}(\mathbf{z}_t^1,\mathbf{z}_t^2)]$ is the two sample gradient estimator of reinforce term.
Furthermore, since $\mathcal{L}_{\text{rloo}}$ is invariant to the gradient of $\psi$, $\theta$, final gradient is given as folows:
\begin{align*}
\nabla_\phi\mathcal{L}_{\text{\method{}}}=\mathbb{E}_t[\mathbb{E}_{\mathbf{z}_t^1,\mathbf{z}_t^2\sim q_{\alpha_\phi}}[\nabla_\phi\mathcal{\widehat L}_{\text{\method{}}}(\mathbf{z}_t^1,\mathbf{z}_t^2)]],\\
\nabla_\psi\mathcal{L}_{\text{\method{}}}=\mathbb{E}_t[\mathbb{E}_{\mathbf{z}_t^1,\mathbf{z}_t^2\sim q_{\alpha_\phi}}[\nabla_\psi\mathcal{\widehat L}_{\text{\method{}}}(\mathbf{z}_t^1,\mathbf{z}_t^2)]],\\
\nabla_\theta\mathcal{L}_{\text{\method{}}}=\mathbb{E}_t[\mathbb{E}_{\mathbf{z}_t^1,\mathbf{z}_t^2\sim q_{\alpha_\phi}}[\nabla_\theta\mathcal{\widehat L}_{\text{\method{}}}(\mathbf{z}_t^1,\mathbf{z}_t^2)]],
\end{align*}
where the two-sample estimator $\mathbb{E}_t[\mathbb{E}_{\mathbf{z}_t^1,\mathbf{z}_t^2\sim q_{\alpha_\phi}}[\nabla_\phi\mathcal{\widehat L}_{\text{\method{}}}(\mathbf{z}_t^1,\mathbf{z}_t^2)]]$ gives lower variance than the naive estimator $\nabla_\phi\mathcal{L}_{\mathrm{\method{}}}=\mathbb{E}_{\mathbf{z}_t\sim q_{\alpha_\phi}}[(\nabla_\phi \log q_{\alpha_\phi}) \cdot \mathcal{L}_{\mathbf{z}_t}]+\mathbb{E}_{q_{\alpha_\phi}}[\nabla_\phi \mathcal{L}_{\mathbf{z}_t}]$.

\section{Details about \method{} parametrization}
\subsection{Rationale behind the \method{} parametrization}\label{sec:appendix_rationale}
In this section, we explain why we parametrize \method{} as in
\eqref{eq:reparam_alpha_phi}--\eqref{eq:reparam_A_psi}.
Unlike discrete diffusion, which has only recently begun to be explored, the continuous domain (e.g., image diffusion) has a longer history of studying \emph{learnable} noise schedulers. However, the image and text domains differ substantially, making it nontrivial to directly transfer such techniques. We therefore first briefly review how learnable schedulers are typically introduced in continuous diffusion.

\noindent\textbf{Learnable schedulers in continuous diffusion: variational diffusion models.}
A representative example is the variational diffusion model (VDM), which introduces a single learnable scheduler shared across the entire image~\citep{kingma2021variational}.
Since the scheduler must be monotone decreasing in time, VDM learns it via an additional monotonic neural network that is separate from the diffusion model.
In contrast, such monotonic-network-based parametrizations are difficult to apply in our case, modeling a different generation order in a text space. 

First, since the diffusion model itself cannot be constrained to be monotone in $t$, a monotonic neural network (NN) must be decoupled from the diffusion model. In VDM, the monotonic NN learns a single scheduler shared across all pixels; because this mapping is low-dimensional and input-agnostic, it is easy to train and can be implemented with a small network. In contrast, in text generation, inducing nontrivial generation orders requires a scheduler that depends on the given context and outputs \emph{token-wise} scheduling values. This would necessitate a large, standalone monotonic NN separate from the diffusion model, which is computationally and statistically inefficient.

Second, the quantity that is learned in our formulation is not $\alpha(\cdot,t)$ itself, but rather
\[
A(\cdot,t)\;=\;\partial_t \alpha(\cdot,t)\oslash\bigl(1-\alpha(\cdot,t)\bigr).
\]
In particular, the \framework{} NELBO is expressed as a weighted sum of token-wise reconstruction losses with weights given by $A^{(i)}(\cdot,t)$. Under such a nontrivial functional relationship between $\alpha$ and $A$, naively parameterizing $\alpha$ via a monotonic NN as in VDM provides no direct mechanism to ensure that tokens with lower reconstruction loss receive smaller effective training weights. Consequently, VDM-style monotonic-network parametrizations are ill-suited for discrete diffusion with token-wise scheduling, motivating other parametrizations for texts.

\noindent\textbf{Learnable schedulers in continuous diffusion: multivariate learned adaptive noise.}
\citet{sahoo2024mulan} further advance this line of work by proposing \emph{multivariate learned adaptive noise} (MULAN).
In a text-conditioned image diffusion model, MULAN takes the conditioning text as input and produces \emph{pixel-wise} schedulers, allowing different pixels to follow different noise schedules.
However, in light of the limitations of monotonic NNs discussed above, MULAN adopts a different design: rather than learning a full scheduler via a standalone monotonic network, it parameterizes the scheduler as a quintic polynomial controlled by three parameters.
For image dimension $d=L\times H$, MULAN define neural networks $\mathbf{a}_\vartheta(\mathbf{x}):\mathcal{V}^{L}\to\mathbb{R}^{d}$, $\mathbf{b}_\vartheta(\mathbf{x}):\mathcal{V}^{L}\to\mathbb{R}^{d}$, and $\mathbf{c}_\vartheta(\mathbf{x}):\mathcal{V}^{L}\to\mathbb{R}^{d}$ with parameters $\vartheta$.
Let $f_\vartheta:\mathcal{V}^{L}\times[0,1]\to\mathbb{R}^{d}$ be defined as
\begin{equation}
\label{eq:mulan_fphi}
f_\vartheta(\mathbf{x},t)
=
\frac{\mathbf{a}_\vartheta(\mathbf{x})^{2}}{5}\,t^{5}
\;+\;
\frac{\mathbf{a}_\vartheta(\mathbf{x})\,\mathbf{b}_\vartheta(\mathbf{x})}{2}\,t^{4}
\;+\;
\frac{\mathbf{b}_\vartheta(\mathbf{x})^{2}+2\,\mathbf{a}_\vartheta(\mathbf{x})\,\mathbf{c}_\vartheta(\mathbf{x})}{3}\,t^{3}
\;+\;
\mathbf{b}_\vartheta(\mathbf{x})\,\mathbf{c}_\vartheta(\mathbf{x})\,t^{2}
\;+\;
\mathbf{c}_\vartheta(\mathbf{x})^{2}\,t,
\end{equation}
where the multiplication and division operations are elementwise.
The noise schedule $\alpha_\vartheta(\mathbf{x},t)$ is given by
\begin{equation}
\label{eq:mulan_gamma}
\alpha_\vartheta(\mathbf{x},t)
=
\text{Sigmoid}\bigg( -\Big(\gamma_{\min}
+
\bigl(\gamma_{\max}-\gamma_{\min}\bigr)\,
\frac{f_\vartheta(\mathbf{x},t)}{f_\vartheta(\mathbf{x},t=1)}\Big)\bigg),
\end{equation}
where it always guarantees that $\alpha$ is monotone decreasing.

However, it remains difficult to apply this approach directly to text. As in VDM, the quantity that is directly learned is the velocity $A$, rather than $\alpha$ itself. Concretely, for tokens that are poorly reconstructed, training should reduce the corresponding weights $A^{(i)}$. Yet, since $\alpha_\vartheta$ is already a complex function, and $A$ becomes even more intricate as it is induced through $\alpha_\vartheta$, aligning $A^{(i)}$ with reconstruction quality becomes highly challenging. We empirically attempted to train with this parametrization; yet, when we measured the Pearson correlation between $A_\phi^{(i)}(\mathbf{x},t)$, $A_\psi^{(i)}(\mathbf{z}_t,t)$, and $\langle \mathbf{x}_\theta^{(i)}(\mathbf{z}_t,t), \mathbf{x}^{(i)}\rangle$ throughout training, we observed that none of these correlations increased meaningfully, contrast to that of our \method{} shown in in Figure~\ref{fig:pearson_corr}.

\noindent\textbf{Parametrizion of \method{}.}
From the continuous-diffusion case above, we draw one key lesson: the velocity $A$ should admit a \emph{simple form} that enables direct optimization.
While continuous diffusion often allows highly flexible parameterizations of $\alpha$, in text space we instead require a suitably regularized and simple functional form.
Motivated by the simplest scheduler used in MDLM, $\alpha_{\mathrm{mdlm}}(t)=1-t$, we have proposed the \method{} parametrization. To recap,
\begin{gather}
\alpha_\phi^{(i)}(\mathbf{x},t)
:= 1 - t^{\,c_1 + c_2\cdot\bigl[\mathrm{NormSig}(g_\phi(f(\mathbf{x})))\bigr]_i},
\qquad
A_\phi^{(i)}(\mathbf{x},t)
=
\frac{c_1 + c_2\cdot\bigl[\mathrm{NormSig}(g_\phi(f(\mathbf{x})))\bigr]_i}{t},
\\
\hat \alpha_\psi^{(i)}(\mathbf{z}_t,t)
:= 1 - t^{\,c_1 + c_2\cdot\bigl[\mathrm{NormSig}(g_\psi(f(\mathbf{z}_t)))\bigr]_i},
\qquad
\hat A_\psi^{(i)}(\mathbf{z}_t,t)
=
\frac{c_1 + c_2\cdot\bigl[\mathrm{NormSig}(g_\psi(f(\mathbf{z}_t)))\bigr]_i}{t}.
\end{gather}
where we can modulate the overall denoising velocity by $c_1$ and the variance of generation order priority by $c_2$. We also experimented with softmax; however, it became overly biased toward a single position, preventing training from stabilizing.

Another thing we should consider is that learning a large, standalone scheduler network for text space would be infeasible. Fortunately, \citet{hong2025improvingdiscretediffusionunmasking} have shown that we can post-train unmasking ordering with one transformer layer attached to the diffusion backbone model. They empirically show that the post-trained unmasking policy model boosts the performance of MDM even without learning MDM itself; \ie, MDM is frozen while training. Inspired by this work, we set $f(\cdot)=\mathrm{Sgd}\!\big(\theta_{\mathrm{TF}}(\cdot)\big)$ as the feature extractor in the above equations where $\theta_{\mathrm{TF}}$ is the backbone of diffusion model.

\subsection{Ablation study on $c_1$ and $c_2$}\label{sec:appendix_c1_c2}

\begin{table}[t]
\centering
\caption{Hyperparameter ablation on $(c_1,c_2)$ for \method{} on LM1B with sentence packing:
we train separate \method{} models for each $(c_1,c_2)$ for 1M steps and report the resulting test PPL ($\downarrow$).
When $c_2=0$, the scheduler is non-learnable and reduces to the polynomial form $\alpha(t)=1-t^{c_1}$.
In particular, $(c_1,c_2)=(1,0)$ recovers MDLM.}
\label{tab:ablation_c1c2}
\setlength{\tabcolsep}{7pt}
\renewcommand{\arraystretch}{1.15}
\begin{tabular}{c|cc|cc|cc}
\toprule
$(c_1,c_2)$ & $(0.7,0.0)$ & $(0.7,0.65)$ & $(1.0,0.0)$ & $(1.0,0.95)$ & $(1.3,0.0)$ & $(1.3,1.25)$ \\
\midrule
Test PPL $\downarrow$ & 31.7 & \textbf{27.2} & 31.8 & 28.7 & 31.4 & 28.9 \\
\bottomrule
\end{tabular}
\end{table}

We here provide and discuss an ablation study on our \method{} that was omitted in the main paper, where we train the model under various choices of $c_1$ and $c_2$. To recap, $\alpha_\phi$ and $\alpha_\psi$ are given by:
\begin{align*}
    \alpha_\phi^{(i)}(\mathbf{x},t):=1-t^{c_1+c_2\cdot[\text{NormSig}(g_\phi(f(\mathbf{x})))]_i},\qquad
    \hat \alpha_\psi^{(i)}(\mathbf{z}_t,t):=1-t^{c_1+c_2\cdot[\text{NormSig}(g_\psi(f(\mathbf{z})))]_i}.
\end{align*}
In particular, when $(c_1,c_2)=(1,0)$, our scheduler reduces exactly to that of MDLM. More generally, if $c_2=0$ and $c_1>0$ is arbitrary, then the identical polynomial scheduler is applied at every position, \ie, $\alpha^{(i)}(t)=1-t^{c_1}$ for all $i \in [L]$.

Before presenting the experiments, we provide a more detailed interpretation of $c_1$ and $c_2$.
A larger $c_1$ implies that $\alpha$ decays more slowly (since $t<1$), meaning that a substantial amount of masking occurs only when $t$ is close to $1$.
Conversely, a smaller $c_1$ implies a faster decay of $\alpha$, so masking is concentrated near $t\approx 0$. When $c_2=0$, although the decay rate varies with $c_1$, the NELBO becomes effectively independent of the scheduler after an appropriate change of variables with respect to $\alpha$; hence all such cases yield the same objective, which is also mentioned by \citet{sahoo2024simple}.
We now turn to the case $c_2\neq 0$.
The parameter $c_2$ controls the position-dependent strength determined by the scheduler models $g_\phi$ and $g_\psi$.
In other words, $c_2$ governs the variance of generation-order priorities across positions: as $c_2$ increases, some indices can be masked much earlier (i.e., with a much faster decay) while others can be masked much later (i.e., with a much slower decay).

Through the ablation over $(c_1,c_2)$, we aim to answer two questions.
First, we ask whether the performance gains of our \method{} are driven by the choice of $c_1$---that is, by using a polynomial time scheduler unlike MDLM---or whether the gains truly come from learning position-dependent schedules via $g_\phi$ and $g_\psi$.
Second, we investigate how to set $(c_1,c_2)$ so that our \method{} achieves its best performance.

The results are reported in Table~\ref{tab:ablation_c1c2}.
First, for every tested value of $c_1$, setting $c_2\neq 0$ yields substantially lower PPL than $c_2=0$, confirming that the context-aware scheduler in our \method{} is indeed beneficial for learning.
Furthermore, when $c_2=0$, the PPL remains nearly constant at $\sim 31.8$ across different choices of $c_1$.
As discussed above, this is expected because $c_2=0$ makes the scheduler non-learnable, and a change of variables with respect to $\alpha$ renders the NELBO invariant to the specific choice of $c_1$; thus, varying $c_1$ mainly changes \emph{which} masked states are visited, but not the overall amount of training signal allocated to the diffusion model.

Second, among $c_2\neq 0$ settings, performance improves as $c_1$ decreases.
Unlike the $c_2=0$ case, $c_2\neq 0$ yields a context- and position-dependent scheduler, allowing the objective to reweight learning across tokens according to learned generation priorities.
Since heavily masked states provide limited contextual information, many masked positions in such states can yield weak or noisy gradients.
Our method alleviates this by concentrating the training signal on the subset of tokens that remain \emph{learnable} under the current context via learned, position-wise priorities.
With smaller $c_1$, training visits more highly corrupted states, and the learned scheduler can allocate more of the learning budget within those states to the most informative (learnable) tokens, offering a plausible explanation for the consistent gains at smaller $c_1$ when $c_2\neq 0$.

\section{Experimental Details}
\subsection{Experimental Settings}\label{sec:appendix_experimental_settings}
We detokenize the One Billion Words dataset following \citet{lou2024discrete,sahoo2024simple}, whose official code can be found. We tokenize LM1B using the \textsc{bert-base-uncased} tokenizer, consistent with \citet{he2022diffusionbert}. 
We then concatenate and pack the sequences to a fixed length of 128 \citep{raffel2020exploring}. 
For OpenWebText (OWT), we use the GPT-2 tokenizer and similarly concatenate and pack sequences to length 1{,}024; during packing, we insert an \texttt{eos} token between consecutive documents. 
As OWT does not provide an official validation split, we reserve the last 100k documents for validation.
We parameterize the diffusion backbone of our \method{} with the modified diffusion transformer
architecture \citep{peebles2023scalable} from \citet{lou2024discrete,sahoo2024simple}. We use 12 layers, a hidden dimension of
768, 12 attention heads, and for \method{}, one more same size of transformer layer for each $\phi,\psi$.
For diffusion backbone, we use the AdamW optimizer with a batch size of 512, constant learning rate warmup from 0 to a learning rate of 3e-4
for 2,500 steps following prior works~\citep{sahoo2024simple,lou2024discrete,sahoo2025diffusionduality}. For schedulers $\phi$ and $\psi$, we use the AdamW optimizer with constant learning rate warmup from 0 to a learning rate of 1e-5 for 2,500 steps. We use a dropout rate of 0.1. As specified in the main paper, we use $c_1=0.7, c_2=0.65$ for the parametrization of $\phi$ and $\psi$. For both LM1B with/without sentence packing, we utilized 2 H200 GPUs for training. For OWT, we utilized 8 H100 GPUs for training.

% \subsection{Other Ablation Studies}

\subsection{Qualitative Examples}\label{sec:appendix_qualitative_examples}
We present qualitative examples in this section. Figure~\ref{fig:trajectory_mdlm_lomdm} compares the generation trajectories of MDLM and LoMDM trained for 1M steps on LM1B. One interesting property is that LoMDM tends to generate structural elements first and fill in semantic content later. More specifically, unlike MDLM, LoMDM first generates words that play structural roles in a sentence, such as prepositions and articles. It then fills in words that carry substantive information, such as nouns and verbs. We do not observe a left-to-right bias in the learned generation trajectories of LoMDM. We speculate that, for unconditional generation, a coarse-to-fine generation order may be more effective than a left-to-right order. In contrast, what generation order is preferable for conditional generation tasks, such as QA, remains an interesting direction for future work.

We also provide additional generation examples from LoMDM trained on OWT in the following section.

\begin{figure}[t]
    \centering
    \includegraphics[width=\linewidth]{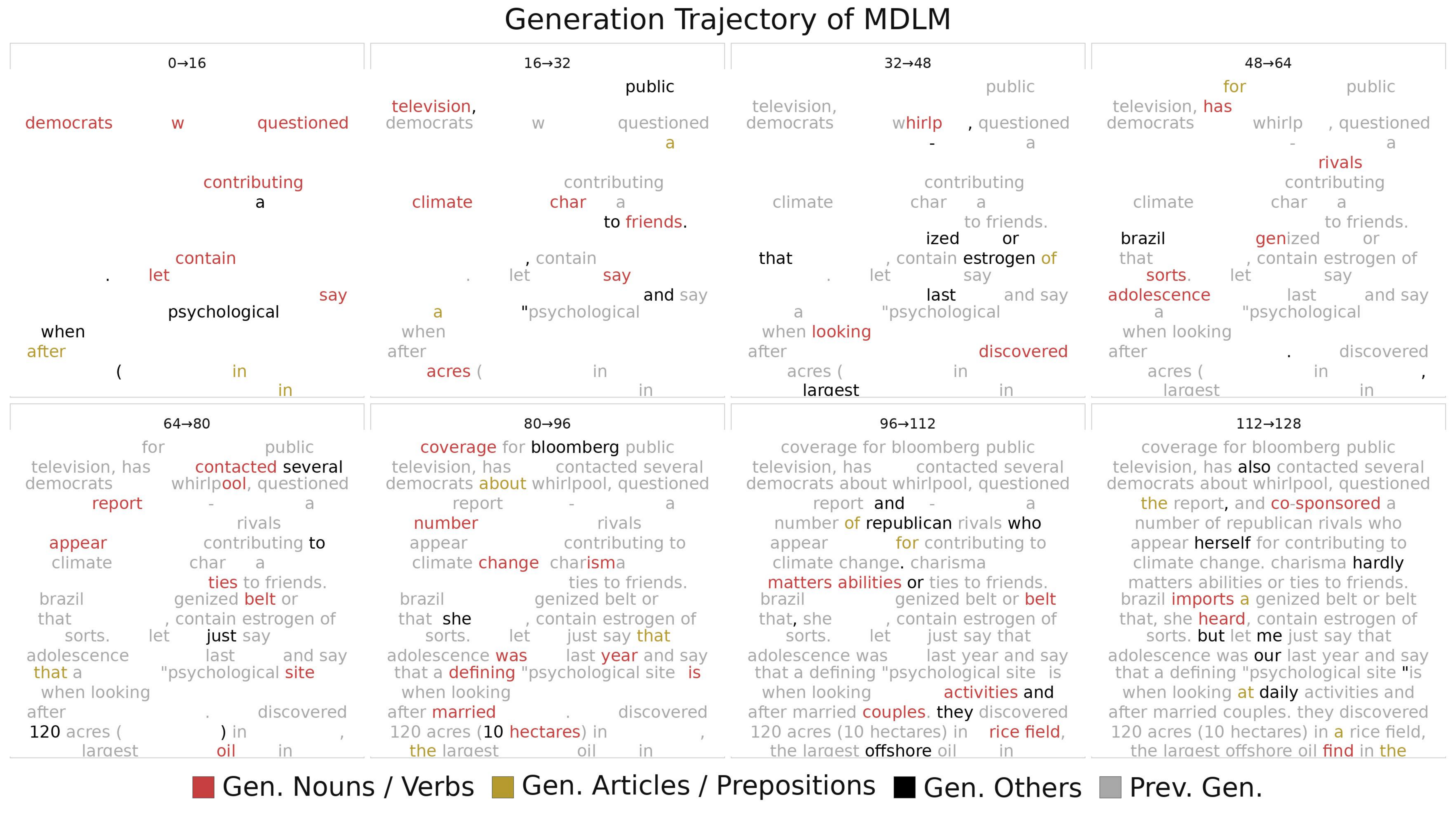}
    
    \vspace{0.3em}
    
    \includegraphics[width=\linewidth]{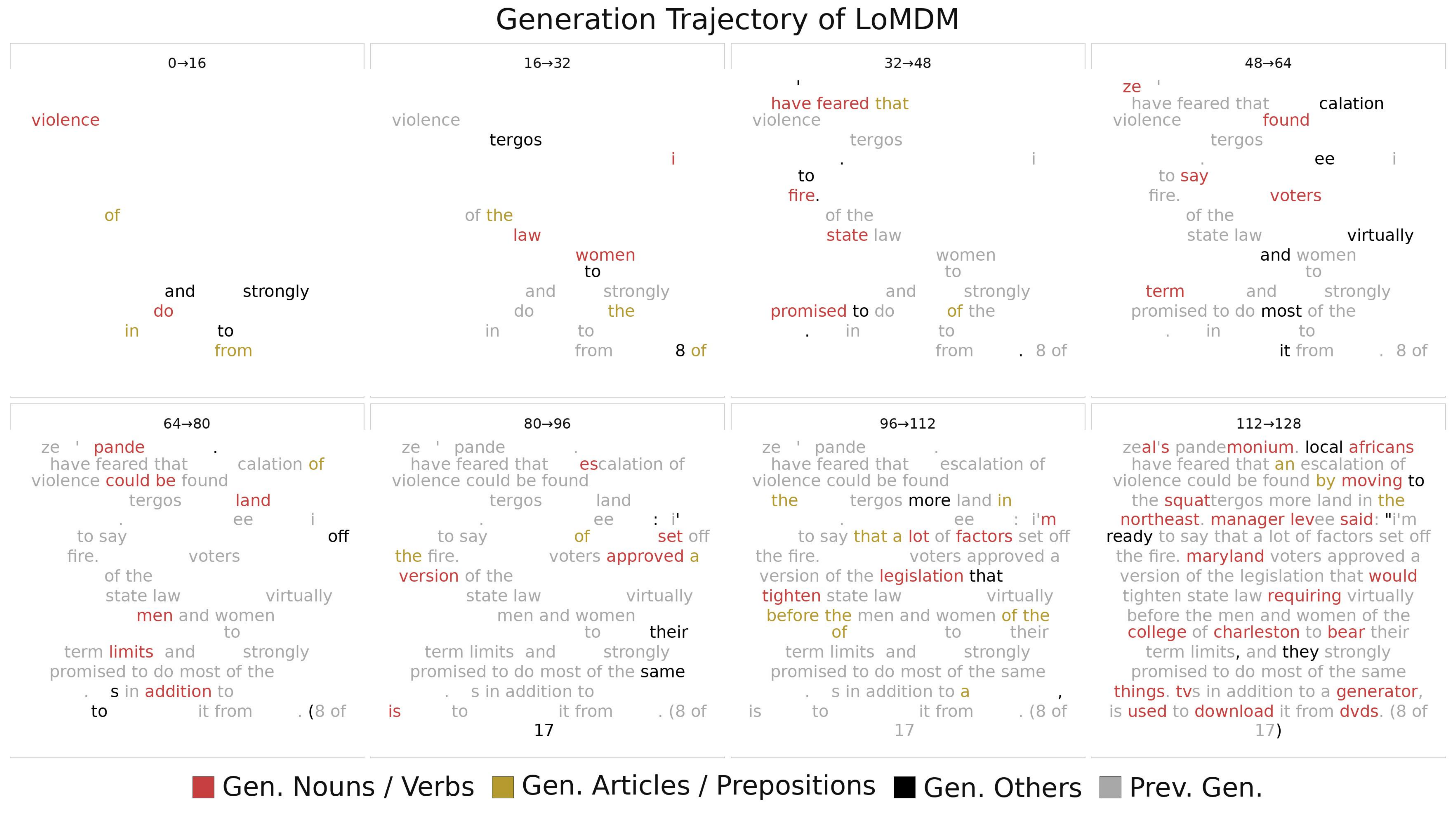}
    \caption{\textbf{Generation trajectories of MDLM (top) and LoMDM (bottom).}
    Both models were trained on LM1B for 1M steps, and each figure shows a length-128 sentence generated over 128 decoding steps. Each panel visualizes which tokens are generated at different stages of decoding. Newly generated nouns and verbs are shown in {\bfseries\color{red}red}, newly generated articles and prepositions are shown in {\bfseries\color{olive}yellow}, other newly generated tokens are shown in \textbf{black}, and previously generated tokens are shown in {\bfseries\color{gray}gray}. 
    MDLM tends to fill semantic content, such as nouns and verbs, relatively early, whereas LoMDM tends to first establish structural words, such as prepositions and articles, and then fill in semantic content later.}
    \label{fig:trajectory_mdlm_lomdm}
\end{figure}

\begin{figure}[t]
\centering
\caption*{Text sample generated by \method{} trained on OpenWebText with 1024 sampling steps.}
\begin{Verbatim}[
  fontsize=\scriptsize,
  breaklines,
  breakanywhere,
  breaksymbolleft={},   
  breaksymbolright={}   
]
<|endoftext|>, Indian states have dampened the productivity of some core industries. Most have been acquired by Western companies and a few new industries, supported by outsourcing opportunities. Nevertheless structural changes and Indian forces in recent years brought economists thinking that some company reforms that were made possible with better work conditions --- better wages and unpaid IBM engineers --- made industry jobs much easier to work.

Rathan's opinion in Global European, judged by economist Ernst Karelis hardly ignores workers, the technologies that have bought them information, and the basic human dignity of the world's largest companies. Nor does he argue that the economic and social growth of world industrial and economic powers are mutually beneficial in the near future. Mostly, however, it does not deny that India can further improve its productivity, as is indeed the case today.

India is a leading economic power and one of the world's leading ``empower'' countries (see OECD). Moreover, it is a large and developed country with its own market, in the shape of the European Economic Area, an area of aggregating services and goods in the labour market. Only 4.4 percent of the tariff-free transport network will be shipped from India. But with the fast-liberalisation of India, which is sure to attract workers from small company sizes, it will be possible to make wage cuts targeting GST.

India's share in exports is quite large. Actually, imports generate the largest share, because imports are brought by low-wage workers like Indian workers. That is thanks to India's production of many large steel plants. Most of its neighbours, and Germany, would have enjoyed the same kind of work at their industrial plants. Of course, erasing the global income gap requires massive capital controls.

Sure, it could tailor its economic policies to its trade policy, including trade with Japan. But closing the larger gap means one much bigger problem. Firms can't rise above borders and capital controls generally exist outside the country. Trying to foresee how India might achieve structural changes may discount rising income levels of workers in countries like Japan. Since unions are a larger part of the workforce, additional labour measures would require India to take far less in one move. Nor is it worth mentioning that only Sweden, Brazil, Germany, and Argentina are three foreign countries which don't have any trade with India but manage Delhi's existing exports.

Nor, is it possible that a worker, independent of bringing his or her social status in once and all, will benefit from capital gains from India's policies.

In the countries where the ratio of workers in population has increased by some sort in recent years, the standard labor standard of workers has fallen in India, as for even industrial countries as advanced as the United States. For decades, the improvement in average-adjusted life expectancy in developed nations was greater than at its 30 years prior, partly as a result of growth in productivity. The recent recession has prevented a full recovery, partly because an increasing sustained supply of workers has only increased, and some even fewer workers died from disability. One of the most notable examples coming in the Second World War, in which, the Soviet Union created a new type of economy that raised the full-time wages of some 200 million, lowered the rise in life expectancy for workers who retired over 15-month tenure, and vastly reduced its labour force by hundreds of workers for economic reasons.

In India, export the majority of heavy goods and its labour force has expanded incrementally. Since 1950, industrial productivity has shot rise to 7.9 percent, just a little higher for an Indian citizen. The average-adjusted productivity (which had been rising since 1942) continued for several decades. It is the middle-age expectation that India will be able to match the rising productivity to 11 percent in five decades, and so should have a strong advantage. To meet the needs of its civilian workers, world industrial growth has called on India to increase its number of highly skilled workers.

Consider that the steel sector, like any other part of Japan, is a modern industry that, like Japanese, has remarkable rates of growth. Politicians from all 160 member states would respond well to a takeover of industry and population against a province with half the exports of the industrialized world.

In recent years, however, a wave of entrepreneurs have begun to expand drastically, spread to the manufacturing of leather (in China), buring bags, and other new technologies done in developed countries. While a new (and more compact) global order must, it could best be avoided. In points of view, globalization is part of a process that stemmed from the fall of empires; that no modern country or even some of the modern states can handle the consequences of the modern industry.'s inability to protect the masses if necessary or rail on other kinds of horrors on a new day.

The country blurring our attention is India. It is the politically advanced country that needs to improve its income, its productivity and its vested interests who intend to plunder it.

Ameliorating Indian<|endoftext|>
\end{Verbatim}
\end{figure}

\begin{figure}[t]
\centering
\caption*{Text sample generated by \method{} trained on OpenWebText with 1024 sampling steps.}
\begin{Verbatim}[
  fontsize=\scriptsize,
  breaklines,
  breakanywhere,
  breaksymbolleft={},   
  breaksymbolright={}   
]
In an interview with the Houston Chronicle, Hummel Brock said it was an ideal club for Ryan's season. And coming into his decision to stay in Texas he indicated that that was the correct decision.

``And I said, `I'm looking forward to enjoying myself here,''' he said, according to the Chronicle. ``And I said, `I'm crazy' haven't played here after less than four weeks. I'm more than happy to sit down, grow this team and teach myself and keep learning so that we can depend on like that.

``Now that I'm healthy and I can return to the home game which when I was a player, it'll be my starter home again.''

The Texans fought against competition, with players Alvin Nakamura, DB, and Eli Kennedy entering the open and ended up being the Kos' offseason target.

Bacette has been selected after a few stints. Ryan took the roster in Week 4 which gave Ryan a chance to establish some momentum and some could have expected the beast would move to Houston next offseason.<|endoftext|>With the crew got fired off on the air set uttering on Tuesday, Fox announced that it was making progress on its upcoming ``The X-Men'' film sequel.

The upcoming film stars J.J. Smith and Mark Shumalo, and the producers the studio is owned by Fox Vacation Pictures was imposing a real-life partnership with co-writing producers Simon Kinberg and Kosethel.

The two executives agreed that there were previously even fewer contributors, facing procedural issues the company of production may have overshadowed over Fox's top priorities --- such as five-page paperwork and video cameras.

Rather, Fox said they were there to hit ``the Network'' and FX production costs and began building a set of ``bases and buildings,'' which wound up curbing their runs and the weekend totals charge for years to come.

PHOTOS: The Official Story For The First X-Men Movie

Wayne Thomas and Chanel Wigehouse were the executive producers and the script for the film revolves around being done by a first-time movie producer for Kinberg and by Patti Jones for the showrunners.

``The producers found a success and gave us what we needed,'' said original executive producer Breanna Carrappen. ``At the same time, we were pressed to get things done, no one felt it was possible with the extraordinary looks and the production quality.''

Menial debut to two X-Men titles Smith worked on the animated ``Doctor Strange,'' with Chris Evans in charge with the role of Peter Parker and Peter X

Then produced the fully animated ``The X-Men'' with James Cameron attached as director. ``I shaped and wrote the script,'' Kosethel has said. ``I was Dell/the manager of the production design.''

Jones' own stint was a rocky one at first. She wrote of NBC Channel ``The X-Men'' film she was first directed by Nicholas Meyer and successfully slipped on her way to showrunner for ``Fox'' until late early 2012. Her replacement did not until 2013.

Kosethel had the experience and charm of collaborating on several X-Men titles with screen producers, like ``Watchmen'', S.E.K.D.L.C.E.L.D (``The Terrific X-Men'' and ``Best of Viewers''), which liked the director's choices to match their diverse demands.

X-Men

``Not every person was when they wanted to make it except the top directors of the time period,'' explained Thomas. ``It would never go that way, every time you do something completely different, it would go through these different subtleties.''

Kosethel also starkly conflicted the studio as to John being on a staff project and as to what the matter of the material was.

``I developed `Vning' these characters and there was no other movie that would relate to them, might have an impact here,'' said Thomas. ``I had to assemble the right pieces that they all agreed with and that there were all things that were right for it.''

McFarlane might be back, full time, and he will instill the journey with the returning cast by taking his leave.

``It has always been important for me to produce who have career highs,/or statuses, who have important roles to play. I have to understand that script and studio are two of the arenas to drive the business<|endoftext|>
\end{Verbatim}
\end{figure}
\end{document}